\icmltitlerunning{Combinatorial Pure Exploration of Dueling Bandit  }
\newtheorem{lemma}{Lemma}
\newtheorem{definition}{Definition}
\newcommand{\R}{\mathbb{R}}
\newcommand{\cB}{\mathcal{B}}
\newcommand{\cG}{\mathcal{G}}
\newcommand{\cM}{\mathcal{M}}
\newcommand{\cN}{\mathcal{N}}
\newcommand{\cP}{\mathcal{P}}
\newcommand{\cQ}{\mathcal{Q}}
\newcommand{\cX}{\mathcal{X}}
\newcommand{\argmax}{\operatornamewithlimits{argmax}}
\newcommand{\argmin}{\operatornamewithlimits{argmin}}
\newcommand{\compilefullversion}{true}
    \newcommand{\OnlyInFull}[1]{}
    \newcommand{\OnlyInShort}[1]{#1}
    \newcommand{\OnlyInFull}[1]{#1}%
    \newcommand{\OnlyInShort}[1]{}%
\newcommand{\compilehidecomments}{true}
	\newcommand{\wei}[1]{}
	\newcommand{\haoyu}[1]{}
	\newcommand{\yihan}[1]{}
	\newcommand{\longbo}[1]{}
	\newcommand{\wei}[1]{{\color{blue!50!black}  [\text{Wei:} #1]}}
	\newcommand{\haoyu}[1]{{\color{brown!60!black} [\text{Haoyu:} #1]}}
	\newcommand{\yihan}[1]{{\color{brown!60!black} [\text{Yihan:} #1]}}
		\newcommand{\longbo}[1]{{\color{red!60!black} [\text{Longbo:} #1]}}
\mathchardef\mhyphen="2D
\newcommand{\Oracle}{{\sf O}}
\newcommand{\MaxMatchingOracle}{{\sf MWMC}}
\newcommand{\Carcond}{{\sf CAR\mhyphen Cond}}
\newcommand{\Carverify}{{\sf CAR\mhyphen Verify}}
\newcommand{\Carparallel}{{\sf CAR\mhyphen Parallel}}
\newcommand{\Cborda}{{\sf CLUCB\mhyphen Borda\mhyphen PAC}}
\newcommand{\CbordaExact}{{\sf CLUCB\mhyphen Borda \mhyphen Exact}}
\newcommand{\Clucb}{{\sf CLUCB}}
\newcommand{\Out}{{\sf{Out}}}
\newcommand{\Width}{\textup{width}}
\crefname{equation}{Eq.}{Eqs.}
\crefname{section}{Section}{Sections}
\crefname{algorithm}{Algorithm}{Algorithms}
\crefname{figure}{Figure}{Figures}
\crefname{table}{Table}{Tables}
\crefname{theorem}{Theorem}{Theorems}
\crefname{lemma}{Lemma}{Lemmas}
\crefname{corollary}{Corollary}{Corollaries}
\crefname{definition}{Definition}{Definitions}
\begin{document}

\twocolumn[
\icmltitle{Combinatorial Pure Exploration for Dueling Bandits}



\icmlsetsymbol{equal}{*}

\begin{icmlauthorlist}
\icmlauthor{Wei Chen}{equal,msra}
\icmlauthor{Yihan Du}{iiis}
\icmlauthor{Longbo Huang}{iiis}
\icmlauthor{Haoyu Zhao}{iiis}
\end{icmlauthorlist}

\icmlaffiliation{msra}{Microsoft Research, Beijing, China}
\icmlaffiliation{iiis}{IIIS, Tsinghua University, Beijing, China}

\icmlcorrespondingauthor{Wei Chen}{weic@microsoft.com}
\icmlcorrespondingauthor{Yihan Du}{duyh18@mails.tsinghua.edu.cn}
\icmlcorrespondingauthor{Longbo Huang}{longbohuang@mail.tsinghua.edu.cn}
\icmlcorrespondingauthor{Haoyu Zhao}{thomaszhao1998@gmail.com}

\icmlkeywords{Machine Learning, ICML}

\vskip 0.3in
]



\printAffiliationsAndNotice{Alphabetical order}  

\begin{abstract}
    In this paper, we study combinatorial pure exploration for dueling bandits (CPE-DB): we have multiple candidates for multiple positions as modeled by a bipartite graph, and in each round we sample a duel of two candidates on one position and observe who wins in the duel, with the goal of finding the best candidate-position matching with high probability after multiple rounds of samples. CPE-DB is an adaptation of the original combinatorial pure exploration for multi-armed bandit (CPE-MAB) problem to the dueling bandit setting. 
    We consider both the Borda winner and the Condorcet winner cases. For Borda winner, we establish a reduction of the problem to the original CPE-MAB setting and design PAC and exact algorithms that achieve both the sample complexity similar to that in the CPE-MAB setting (which is nearly optimal for a subclass of problems)  and polynomial running time per round. 
    For Condorcet winner, we first design a fully polynomial time approximation scheme (FPTAS) for the offline problem of finding the Condorcet winner with known winning probabilities, and then use the FPTAS as an oracle to design a novel pure exploration algorithm $\Carcond$ with sample complexity analysis. $\Carcond$ is the first algorithm with polynomial running time per round for identifying the Condorcet winner in CPE-DB.
\end{abstract}

\section{Introduction}

Multi-Armed Bandit (MAB) \cite{lai1985asymptotically,thompson1933,UCB_auer2002,agrawal2012analysis} 
is a classic model that  characterizes the exploration-exploitation tradeoff  in online learning. 
The pure exploration task \cite{Elim2006_explore1,chen2016open,BAI2019epsilon} is an important variant of the MAB problems, 
where the objective is to identify the best arm with high confidence, using as few samples as possible. 
A rich class of pure exploration problems have been extensively studied, e.g., 
best K-arm identification \cite{lucb} and  multi-bandit best arm identification \cite{SAR_bubeck2013multiple}. 
Recently, \citet{cpe_icml14} proposes a general combinatorial pure exploration for multi-armed bandit (CPE-MAB) framework, 
which encompasses previous pure exploration problems. 
In the CPE-MAB problem, a learner is given a set of arms and a collection of arm subsets with certain combinatorial structures. 
At each time step, the learner plays an arm and observes the random reward, with the objective of identifying the best combinatorial subset of arms. 
\citet{gabillon2016improved,nearly_ChenLJ2017} follow this setting and further improve the sample complexity.

However, in many real-world applications involving implicit (human) feedback including  social surveys \cite{social_survey_1985measurement},  market research \cite{market1994combining} and recommendation systems \cite{recommendation_systems}, the information observed by the learner is intrinsically relative. For example, in voting and elections, it is more natural for the electors to offer preference choices than numerical evaluations on candidates. For this scenario, the dueling bandit formulation  \cite{IF_2012,tournament2016dueling,survey2018dueling} provides a promising model for online decision making with relative feedback.

In this paper, we contribute a model adapting the original CPE-MAB problem to the dueling bandit setting.
Specifically, we formulate the \emph{combinatorial pure exploration for dueling bandit (CPE-DB)} problem as follows.
A CPE-DB instance consists of a bipartite graph $G$ modeling multiple candidates that could fit into multiple positions, and an {\em unknown preference probability matrix} specifying when we play a duel between two candidates for one position, the probability that the first would win over the second.
At each time step, a learner samples a duel of two candidates on one position and observes a random outcome of which candidate wins in this duel sampled according to the preference probability matrix.
The objective is to use as few duel samples as possible to identify the best candidate-position matching with high confidence, for two popular optimality metrics in the dueling bandit literature, i.e., Condorcet winner and Borda winner.


The CPE-DB model represents a novel preference-based version of the common candidate-position matching problems, which occurs in various real-world scenarios, including social choice \cite{social_choice}, multi-player game \cite{multi-player_game} and online advertising \cite{online_advertising}. For instance, a committee selection procedure \cite{committee_selection} may want to choose among multiple candidates one candidate for each position
    to form a committee.
For any two candidates on one position, 
    we can play a duel on them, e.g., by surveying a bystander, to learn a sample of which candidate would win on this position, and
    the sample follows an unknown preference probability.
We hope to play as few duels as possible (or by surveying as few people as possible) to identify the best performing committee.

The CPE-DB problem raises interesting challenges on exponentially large decision space and relative feedback. The key issue here is how to exploit the problem structure and design algorithms that guarantee both high computational efficiency and low sample complexity. Therefore, the design and analysis of algorithms for CPE-DB demand novel computational acceleration techniques. The contributions of this work are summarized as follows:
\begin{enumerate}[(1).]
    \vspace{-2mm}
    \item We formulate the combinatorial pure exploration for dueling bandit (CPE-DB) problem, adapted from the original combinatorial pure exploration for multi-armed bandit (CPE-MAB) problem to the dueling bandit setting, and associate it with
    various real-world applications involving preference-based bipartite matching selection.
    \vspace{-2mm}
    \item For the Borda winner metric, we reduce CPE-DB to the original CPE-MAB problem, and design algorithms $\Cborda$ and $\CbordaExact$ with polynomial running time per round. We provide their sample complexity upper bounds and a  problem-dependent lower bound for CPE-DB with Borda winner. 
    Our upper and lower bound results together show that $\CbordaExact$ achieves near-optimal sample complexity for a subclass of problems. 
    \vspace{-2mm}
    \item For the Condorcet winner metric, we design a fully polynomial time approximation scheme (FPTAS) for a proper extended version of the offline problem, and then adopt the FPTAS to design a novel online algorithm $\Carcond$ with sample complexity analysis. To our best knowledge, $\Carcond$ is the first algorithm with polynomial running time per round for identifying the Condorcet winner in CPE-DB.
\end{enumerate}

\subsection{Related Works}

\paragraph{Combinatorial pure exploration}
The combinatorial pure 
exploration for multi-armed bandit (CPE-MAB) problem is first formulated by \citet{cpe_icml14} and 
    generalizes the multi-armed bandit pure exploration task to general combinatorial structures. 
\citet{gabillon2016improved} follow the setting of \cite{cpe_icml14} and propose algorithms with improved sample complexity but a loss of computational efficiency.
\citet{nearly_ChenLJ2017} further design algorithms for this problem that have tighter sample complexity and pseudo-polynomial running time. 
\citet{wu2015identifying} study another combinatorial pure exploration case in which given a graph, at each time step, a learner samples a path with the objective of identifying the optimal edge.


\paragraph{Dueling bandit}
The dueling bandit problem \cite{IF_2012,tournament2016dueling,survey2018dueling}, 
    first proposed by \cite{IF_2012}, is an important variation of the multi-armed bandit setting. 
According to the assumptions on  preference structures and definitions of the optimal arm (winner), previous methods can be categorized as  methods on Condorcet winner \cite{komiyama2015regret,Qualitative2019dueling}, methods on Borda winner \cite{sparse2015borda,Qualitative2019dueling}, methods on Copeland winner \cite{wu2016doubleTS,copeland2019agrawal}, etc.  Recently, \citet{CombinatorialDueling2019} propose a variant of combinatorial bandits with relative feedback. 
In their setting, a learner plays a subset of arms (assuming each arm has an unknown positive value) in a time step and observes the ranking feedback, 
    and the goal is to minimize the cumulative regret.
Therefore, their model is quite different from ours.

\section{Problem Formulation}\label{sec:problem}

\begin{figure*}
\begin{minipage}[b]{.22\linewidth}
\centering

\definecolor{myblue}{RGB}{80,80,160}
\definecolor{mygreen}{RGB}{80,160,80}

\begin{tikzpicture}[thick,
  fsnode/.style={draw,circle,fill=myblue},
  ssnode/.style={draw,circle,fill=mygreen}
]

\begin{scope}[start chain=going below,node distance=7mm]
\foreach \i in {1,2,3,4}
  \node[fsnode,on chain] (c\i) [label=left: $c_{\i}$] {};
\end{scope}

\begin{scope}[xshift=2cm,yshift=-1cm,start chain=going below,node distance=7mm]
\foreach \i in {1,2}
  \node[ssnode,on chain] (s\i) [label=right: $s_{\i}$] {};
\end{scope}


\draw (c1) -- node[sloped,above=0.1mm] {$e_1$} (s1);
\draw (c2) -- node[sloped,above=0.1mm] {$e_2$} (s1);
\draw (c3) -- node[sloped,above=0.1mm] {$e_3$} (s1);
\draw (c3) -- node[sloped,above=0.1mm] {$e_4$} (s2);
\draw (c4) -- node[sloped,above=0.1mm] {$e_5$} (s2);
\end{tikzpicture}
\caption{Graph}
\label{fig:bipartite-graph}
\end{minipage}
\hfill
\begin{minipage}[b]{.3\linewidth}
\centering
\setlength{\tabcolsep}{1mm}{
\begin{tabular}[b]{cccccc}
\hline
\hline
      & $e_1$ & $e_2$ & $e_3$ & $e_4$ & $e_5$ \\
$e_1$ & 0.5 & 0.45 & 1 & 0 & 0\\
$e_2$ & 0.55 & 0.5 & 0.55 & 0 & 0\\
$e_3$ & 0 & 0.45 & 0.5 & 0 & 0\\
$e_4$ & 0 & 0 & 0 & 0.5 & 0 \\
$e_5$ & 0 & 0 & 0 & 1 & 0.5 \\
\hline
\hline
\end{tabular}
}
\caption{Preference Matrix}
\label{tab:probability}
\end{minipage}
\hfill
\begin{minipage}[b]{.22\linewidth}
\centering

\definecolor{myblue}{RGB}{80,80,160}
\definecolor{mygreen}{RGB}{80,160,80}

\begin{tikzpicture}[thick,
  fsnode/.style={draw,circle,fill=myblue},
  ssnode/.style={draw,circle,fill=mygreen}
]

\begin{scope}[start chain=going below,node distance=7mm]
\foreach \i in {1,2,3,4}
  \node[fsnode,on chain] (c\i) [label=left: $c_{\i}$] {};
\end{scope}

\begin{scope}[xshift=2cm,yshift=-1cm,start chain=going below,node distance=7mm]
\foreach \i in {1,2}
  \node[ssnode,on chain] (s\i) [label=right: $s_{\i}$] {};
\end{scope}


\draw[red] (c1) -- node[sloped,above=0.1mm] {$e_1$} (s1);
\draw[dashdotted] (c2) -- node[sloped,above=0.1mm] {$e_2$} (s1);
\draw[dashdotted] (c3) -- node[sloped,above=0.1mm] {$e_3$} (s1);
\draw[dashdotted] (c3) -- node[sloped,above=0.1mm] {$e_4$} (s2);
\draw[red] (c4) -- node[sloped,above=0.1mm] {$e_5$} (s2);
\end{tikzpicture}
\caption{Borda Winner}
\label{fig:borda-winner}
\end{minipage}
\hfill
\begin{minipage}[b]{.22\linewidth}
\centering

\definecolor{myblue}{RGB}{80,80,160}
\definecolor{mygreen}{RGB}{80,160,80}

\begin{tikzpicture}[thick,
  fsnode/.style={draw,circle,fill=myblue},
  ssnode/.style={draw,circle,fill=mygreen}
]

\begin{scope}[start chain=going below,node distance=7mm]
\foreach \i in {1,2,3,4}
  \node[fsnode,on chain] (c\i) [label=left: $c_{\i}$] {};
\end{scope}

\begin{scope}[xshift=2cm,yshift=-1cm,start chain=going below,node distance=7mm]
\foreach \i in {1,2}
  \node[ssnode,on chain] (s\i) [label=right: $s_{\i}$] {};
\end{scope}


\draw[dashdotted] (c1) -- node[sloped,above=0.1mm] {$e_1$} (s1);
\draw[red] (c2) -- node[sloped,above=0.1mm] {$e_2$} (s1);
\draw[dashdotted] (c3) -- node[sloped,above=0.1mm] {$e_3$} (s1);
\draw[dashdotted] (c3) -- node[sloped,above=0.1mm] {$e_4$} (s2);
\draw[red] (c4) -- node[sloped,above=0.1mm] {$e_5$} (s2);
\end{tikzpicture}
\caption{Condorcet Winner}
\label{fig:cond-winner}
\end{minipage}
\end{figure*}

In this section, we formally define the combinatorial pure exploration problem for dueling bandits. Suppose that there are $n$ candidates $C = \{c_1,\dots,c_n\}$ and $\ell$ positions $S = \{s_1,\dots,s_{\ell}\}$ with $n\ge \ell$. 
Each candidate is available for several positions, and we use  bipartite graph $G(C,S,E)$ to denote this relation, 
where each edge $e= (c_i,s_j)\in E$ denotes that candidate $c_i$ is capable for position $s_j$. We define $m = |E|$.
We use $E_j$ to denote the set of edges connected to position $j$, i.e., $E_j = \{e = (c,s_j)\in E:c\in C \}$ and we also use $s(e)$ to denote the position index of $e$.

Two edges $e$ and $e'$ are comparable if they have the same position indices, i.e. $s(e)=s(e')$.
For any two comparable edges $e=(c,s_j)$ and $e'=(c',s_j)$, there is an unknown  preference probability $p_{e,e'}$, which means that with probability $p_{e,e'}$, $e$ wins $e'$, or $c$ wins $c'$ on position $j$. We have $p_{e,e'} = 1-p_{e',e}$. 
For any $e\in E$, we define $p_{e,e} = \frac{1}{2}$ . 

Given the graph $G(C,S,E)$, we define an order of edges in $E$ by first ranking them by their position indices from smallest to the largest and 
    then ranking them by their candidate index from the smallest to the largest.
Given the order of the edges, we use $e_i$ to denote the $i$-th edge in the order, and define $\chi_M\in\{0,1\}^m$ as the vector representation of the edges $M\subset E$, where $(\chi_M)_i = 1$ if and only if $e_i\in M$.
We also use a  preference matrix $P\in ([0,1])^{m\times m}$ to record all preference probabilities. 
Specifically, for any two comparable edges $e_i, e_j$, $P_{i,j} = p_{e_i,e_j}$ is the preference probability of $e_i$ over $e_j$. 
For two incomparable edges $e_{i'},e_{j'}$, $P_{i',j'}$ is set to $0$ for the convenience of later computations.
Figure~\ref{fig:bipartite-graph} show an example bipartite graph and Figure~\ref{tab:probability} shows its corresponding preference matrix.

Note that for each position $s_j$, any two edges connecting to $s_j$ can be compared with a preference probability.
This is similar to the dueling bandit setting \cite{IF_2012}, where each edge is an arm, and we can compare the arms (edges) to find the best arm (edge), i.e., finding the best candidate for a position.
Thus, from now on, we will use arms and edges interchangeably.
We define $K = \sum_{j=1}^{\ell} \frac{|E_j|(|E_j|-1)}{2}$, which is the number of all possible duels between any two comparable arms.

We assume that there is at least one matching with cardinality $\ell$ in $G$, meaning that we can find at least one candidate for each position without a conflict. 
The {\em decision class} $\cM\subset 2^E$ is the set of all maximum matchings in $G$.
We can also view a matching as a team that specifies which candidate shall play which position for all the positions.
Given a matching $M$ and a position $s_j$, we use $e(M,j)$ to represent the edge in $M$ that connects to position $s_j$.
For any two matchings $M_1,M_2\in \cM$, we define the preference probability of $M_1$ over $M_2$ as follows:

\begin{align}
    f(M_1,M_2,P) := \frac{1}{\ell}\sum_{j=1}^{\ell}p_{e(M_1, j),e(M_2,j)}. \label{eq:define_f}
\end{align}
It is easy to show that $f(M_1,M_2,P) = 1 - f(M_2,M_1,P)$.
Written in vector representation, we have $f(M_1,M_2,P) = \frac{1}{\ell}\chi_{M_1}^T\cdot P\cdot \chi_{M_2}$.

Now, we define the ``best'' matching in the decision class $\cM$. There are several different definitions, e.g., Borda winner \cite{emerson2013original,emerson2016majority}, Condorcet winner \cite{black1948rationale}, and Copeland winner \cite{copeland1951reasonable,saari1996copeland}. 
In this paper, we focus on the Borda winner and the Condorcet winner, the definitions of which are given below.

\paragraph{Borda winner} The Borda winner refers to the winner that maximizes the average preference probability over the decision class, which we call ``Borda score''. Mathematically, in our framework, the Borda score of any matching $M_x \in \cM$ and the Borda winner are
defined as:
\begin{align}
 B(M_x)= &  \frac{1}{|\mathcal{M}|} \sum_{M_y\in\cM}f(M_x, M_y, P), \label{eq:def_borda_score}
\\
 M^B_*= & \argmax \limits_{M_x\in\cM} B(M_x). \label{eq:def_borda_winner}
\end{align}
For the pure exploration task, we assume that there is a unique Borda winner, similar to the assumption in other pure exploration tasks \cite{Elim2006_explore1,SAR_bubeck2013multiple,cpe_icml14,nearly_ChenLJ2017}.
Figure~\ref{fig:borda-winner} shows the Borda winner as the matching with the red edges, because according to the preference matrix in Figure~\ref{tab:probability}, it has the largest Borda score of $0.64$.

\paragraph{Condorcet winner} The Condorcet winner is the matching that always wins when compared to others. In our framework, the Condorcet winner is defined as the matching $M_*^C$ such that $f(M_*^C,M, P) \ge \frac{1}{2}$ for any matching $M\in \cM$. 
We assume that the Condorcet winner exists as several previous works \cite{RUCB_2014,komiyama2015regret,weak_regret2017} do, and the Condorcet winner wins over any other matching with probability strictly better than $\frac{1}{2}$, i.e. $f(M_*^C,M,P) > \frac{1}{2}$ for any $M \in \cM \setminus \{M_*^C\}$.
Figure~\ref{fig:cond-winner} shows the Condorcet winner as the matching with the red edges. It is different from the Borda winner in this example, since this matching wins over all other matchings, but its average winning score (the Borda score) $0.615$ is not as good as the Borda winner.

Our goal is to find the best matching (Borda winner or the Condorcet winner) by exploring the duels at all the positions, and we want  the number of duels that we need to explore as small as possible.
This is the problem of combinatorial pure exploration for dueling bandits (CPE-DB).
More precisely, at the beginning, the graph $G(C,S,E)$ is given to the learner, but the preference matrix $P$ is unknown. 
Because the learner does not know the preference probability for arms connected to the same position, she needs to sample the duel between edges. 
In each round the learner samples one duel pair $(e,e')$ for some position, and she observes a Bernoulli random variable $X_{e,e'}$ with $\Pr\{X_{e,e'} = 1\} = p_{e,e'}$. 
The observed feedback could be used to help to select future pairs to sample.
Our objective is to find the Borda winner $M^B_*$ or the Condorcet winner $M_*^C$ with as few samples as possible. 

\section{Efficient Exploration for Borda Winner}\label{sec:borda}
In this section, we first show the reduction of the Borda winner identification problem to the combinatorial pure exploration for multi-armed bandit (CPE-MAB) problem, originally proposed and studied in~\cite{cpe_icml14}. Next, we introduce an efficient PAC pure exploration algorithm $\Cborda$ for Borda winner, and show that with an almost uniform sampler for perfect matchings \cite{uniform_sample_perfect_matching}, $\Cborda$ has both tight sample complexity and fully-polynomial time complexity. 
Then, based on the PAC algorithm $\Cborda$, we further propose an exact pure exploration algorithm $\CbordaExact$ for Borda Winner, and provide its sample complexity upper bound.
Finally, we present the sample complexity lower bound for identifying the Borda winner.

\subsection{Reduction to Conventional Combinatorial Pure Exploration}
In order to show the reduction of CPE-DB for  Borda winner to the conventional  CPE-MAB \cite{cpe_icml14} problem, we first define the rewards for edges. Then, we define the reward of a matching to be the sum of its edge rewards.
Based on the reward definitions, it can be shown that the problem of identifying the Borda winner is equivalent to identifying the matching with the maximum reward. Specifically, for any edge $e=(c_i,s_j) \in E$ and matching $M\in\cM$,  we define their rewards and the reduction relationship between the two problems as follows:
\begin{align}
    w(e)= & \frac{1}{|\mathcal{M}|} \sum_{M\in\cM} p_{e,e(M,j)}, \nonumber
\\
     w(M) =  & \sum_{e \in M} w(e) \overset{(a)}{=} \ell \cdot B(M), \label{eq:reduction}
\\
 M^B_*= & \argmax \limits_{M\in\cM} B(M) = \argmax \limits_{M\in\cM} w(M), \nonumber
\end{align}
where the equality (a) is due to the definitions of the Borda score (Eq. \eqref{eq:def_borda_score}) and preference probability between two matchings (Eq. \eqref{eq:define_f})
\OnlyInShort{(see the supplementary material for a detailed proof of equality (a)).}
\OnlyInFull{(see Appendix~\ref{sec:appendix_borda_reduction} for a detailed proof of equality (a)).}

It remains to show how to efficiently learn the reward $w(e)$ for edge $e$, by sampling arm pairs in CPE-DB.

First, we can see that for any edge $e=(c_i,s_j)$, $w(e)$ is exactly the {\em expected} preference probability of $e$ over $e(\bar{M},j)$, 
where $\bar{M}$ is a uniformly sampled  matching from $\cM$. 
In other words, we could treat $e$ as a base arm in the CPE-MAB setting with mean reward $w(e)$, and we could obtain an unbiased sample for $e$ if we can uniformly sample $\bar{M}$ from $\mathcal{M}$ and then play the duel $(e, e(\bar{M},j))$ to observe the outcome.
However, a naive sampling method on $\mathcal{M}$  would take exponential time. 
To resolve this issue,  we employ a fully-polynomial almost uniform sampler for perfect matchings \cite{uniform_sample_perfect_matching} $\mathcal{S}(\eta)$  to  obtain an almost uniformly sampled matching $M'$ from $\cM$. Below we give the formal definition of $\mathcal{S}(\eta)$.
\begin{definition} \label{def:sampler}
An almost uniform sampler for perfect matchings is a
    randomized algorithm $\mathcal{S}(\eta)$ that, if given any bipartite graph $G$ and bias parameter $\eta$, it returns a random perfect matching from a distribution $\pi'$ that satisfies
    $$
    d_{\textup{tv}}(\pi', \pi)= \frac{1}{2} \sum \limits_{x \in \Theta } |\pi'(x) - \pi(x)| \leq  \eta,
    $$
    where $d_{\textup{tv}}$ is the total variation, $\Theta$ is the set of all perfect matchings in $G$ and $\pi$ is the uniform distribution on $\Theta$.
\end{definition}

Next, we show how to obtain $M'$  using  $\mathcal{S}(\eta)$. We add some ficticious vertices in $S$ and ficticious edges in $E$ to construct a new bipartite graph $G'(C,S',E')$ where $|C|=|S'|$. There is a one-to-$n$ relationship between a maximum matchings in  $G$ and a perfect matchings in  $G'$.
Then, with $\mathcal{S}(\eta)$, we can almost uniformly sample a maximum matching $M'$ from $G$  in  fully-polynomial time.
\OnlyInShort{We defer the details for sampling with $\mathcal{S}(\eta)$ to the supplementary material.}
\OnlyInFull{We defer the details for sampling with $\mathcal{S}(\eta)$ to Appendix~\ref{sec:appendix_borda_sampler}.}

\subsection{Efficient PAC Pure Exploration Algorithm}
In the previous subsection, we present a reduction of CPE-DB for  Borda winner to the conventional  CPE-MAB \cite{cpe_icml14} problem. However, directly applying the existing $\Clucb$ algorithm in \cite{cpe_icml14}  cannot obtain an efficient algorithm for our problem. The main obstacle is that there is currently no efficient algorithm to sample from an exact uniform distribution over all the maximum matchings in a general bipartite graph, and thus the original $\Clucb$ algorithm is not directly applicable. To tackle this problem, we need to use an approximate sampler and modify the original $\Clucb$ algorithm to handle the bias introduced by the approximate sampler.


Algorithm \ref{alg:borda-online} illustrates an efficient PAC pure exploration algorithm $\Cborda$ for the Borda winner case. Given a confidence level $\delta$ and an accuracy requirement $\varepsilon$, $\Cborda$ returns an approximate Borda winner $\Out$ such that $B(\Out) \geq B(M^B_*) -\varepsilon$  with probability at least $1-\delta$.  

$\Cborda$ is built on the $\Clucb$ \cite{cpe_icml14} algorithm designed for the conventional CPE-MAB problem, and  $\Cborda$ efficiently transforms the original numerical observations to the equivalent relative observations.  
In particular, the maximization oracle $\MaxMatchingOracle(\cdot)$ called in $\Cborda$ is exactly the maximum-weighted maximum-cardinality matching algorithm, performed in fully-polynomial time.  
The main structure follows the $\Clucb$ algorithm: in each round, we first use the empirical mean $\bar{\boldsymbol{w}}_t$ as the input to 
    the oracle $\MaxMatchingOracle(\cdot)$ to find a matching $M_t$.
Then we use the lower confidence bounds for all edges in $M_t$ and upper confidence bounds for all edges outside $M_t$
    as the input and call $\MaxMatchingOracle(\cdot)$ again to find an adjusted matching $\tilde{M}_t$.
If the difference in weights of the adjusted and non-adjusted matchings are small (line~\ref{line:ifcheck}), the algorithm stops and returns $M_t$ as the final matching.
If not, the algorithm finds the edge $z_t$ in the symmetric difference of $\tilde{M}_t$ and $M_t$. Then, the algorithm samples a matching $M'$ using sampler $\mathcal{S}(\eta)$,
and plays a duel between $z_t$ and the corresponding edge in $M'$ with the same position as $z_t$. After playing the duel, the algorithm observes the result and updates empirical mean $\bar{w}_{t+1}(z_t)$.
With the fast maximization oracle $\MaxMatchingOracle(\cdot)$ and  sampler $\mathcal{S}(\eta)$, the $\Cborda$ algorithm  can be performed in fully-polynomial time.

\begin{algorithm}[!th]
    \caption{$\Cborda$}
    \label{alg:borda-online}
    \begin{algorithmic}[1]
    \STATE {\bfseries Input:} confidence  $\delta$, accuracy   $\varepsilon$, bipartite graph $G$, maximization oracle $\MaxMatchingOracle(\cdot)$: $\mathbb{R}^m \rightarrow \mathcal{M}$ and  almost uniform sampler  for perfect matchings $\mathcal{S}(\eta)$
    \STATE Set bias parameter $\eta \leftarrow \frac{1}{8}\varepsilon$ 

    \STATE Initialize $T_{1}(e) \leftarrow 0$ and $\bar{w}_{1}(e) \leftarrow 0$ for all $e \in E$ 
    \FOR  {$t=1,2,...$}
        \STATE $M_t \leftarrow \MaxMatchingOracle(\bar{\boldsymbol{w}}_t)$ 
        \STATE Compute confidence radius $c_t(e) \leftarrow \sqrt{\frac{\ln (\frac{4K t^3}{\delta})}{2 T_{t}(e)}}$ for all $e \in E$ \quad //  $\frac{x}{0}:=1$ for any $x$
        \FOR{ \textup{all } $e \in E$}
            \IF { $e \in M_t$ } 
                \STATE $ \tilde{w}_t(e) \leftarrow \bar{w}_t(e) - c_t(e) -\frac{1}{4}\varepsilon $ 
            \ELSE
                \STATE $ \tilde{w}_t(e) \leftarrow \bar{w}_t(e) + c_t(e) + \frac{1}{4}\varepsilon $ 
            \ENDIF \quad     // $\bar{w}_t(e):=0$ if $T_{t}(e)=0$
        \ENDFOR
        \STATE $\tilde{M}_t \leftarrow \MaxMatchingOracle (\tilde{\boldsymbol{w}}_t)$ 
        \IF{ $\tilde{w}_t(\tilde{M}_t) - \tilde{w}_t(M_t) \leq \ell \varepsilon$ \label{line:ifcheck}}
            \STATE $\Out \leftarrow M_t$ 
            \STATE  {\bfseries return } $\Out    $ 
        \ENDIF
        \STATE $z_t \leftarrow \mathop{\arg \max}_{ e \in (\tilde{M}_t \setminus M_t) \cup  ( M_t \setminus \tilde{M}_t)} c_t(e)$ 
        \STATE Sample a matching $M'$ from $\mathcal{M}$ using $\mathcal{S}(\eta)$ \label{line:borda_uniform_sample}
        \STATE Pull the duel $(z_t, e')$, where $e'=e(M', s(z_t))$
        \STATE Update $\bar{w}_{t+1}(z_t) \leftarrow \frac{\bar{w}_{t}(z_t) \cdot T_{t}(z_t) + X_{t}(z_t)}{T_{t}(z_t) +1}$ where $X_{t}(z_t)$ takes value $1$ if $z_t$ wins, $0$ otherwise, and $T_{t+1}(z_t) \leftarrow T_{t}(z_t) +1$ \label{alg_borda_pac_line_X_1}
    \ENDFOR
\end{algorithmic}
\end{algorithm}

To formally state the sample complexity upper bound of the $\Cborda$ algorithm, we need to first define  the width of $G$, the Borda gap and the Borda hardness.

\begin{restatable}[Width]{definition}{defwidth}\label{def:width}
For a bipartite graph $G$, let $\cM(G)$ denote the set of all its maximum matchings. For any $M_1, M_2 \in \cM(G)$ such that $M_1 \neq M_2$, we define $\Width(M_1,M_2)$ as the number of edges of the maximum connected component in their union graph. Then, we define the width of bipartite graph $G$ as
$$\Width(G)=\max_{ \begin{subarray}{c}M_1, M_2 \in \cM(G)\\ M_1 \neq M_2 \end{subarray}} \Width(M_1,M_2).$$
\end{restatable}

This width definition for bipartite maximum matching is inline with the general width definition in \cite{cpe_icml14}. 
\OnlyInFull{We establish the equivalency between our width definition for bipartite maximum matching and that in \cite{cpe_icml14}, and defer the proof to Appendix~\ref{sec:appendix_borda_width}.}

\begin{restatable}[Borda gap]{definition}{}\label{def:gap-borda}
We define the Borda gap $\Delta^B_e$ for any edge $e \in E$ as 
$$
    \Delta^B_e=\left\{\begin{matrix}
    w(M^B_*)- \max \limits_{M \in \mathcal{M}: e \in M} w(M)  \text{ \quad if } e \notin M_*,
    \\ 
    w(M^B_*)- \max \limits_{M \in \mathcal{M}: e \notin M} w(M)  \text{ \quad if } e \in M_*,
    \end{matrix} \right.
$$
where we make the convention that the maximum value of an empty set is $-\infty$.
\end{restatable}

\begin{restatable}[Borda hardness]{definition}{}\label{def:hardness-borda}
We define the hardness $H^B$ for identifying Borda winner in CPE-DB as
\[
    H^B:= \sum \limits_{e \in E} \frac{1}{(\Delta^B_e)^2}.
\]
\end{restatable}
The Borda gap and Borda hardness definitions are naturally inherited from those in \cite{cpe_icml14}. For each edge $e \notin M_*^B$, the Borda gap $\Delta^B_e$ is the sub-optimality of the best matching that includes edge $e$, while for each edge $e \in M_*^B$ the Borda gap $\Delta^B_e$ is the sub-optimality of the best matching that does not include edge $e$. The Borda hardness $H^B$ is the sum of inverse squared Borda gaps, which represents the problem hardness for identifying the Borda winner.

Now we present a problem-dependent upper bound of the sample complexity for the $\Cborda$ algorithm.

\begin{restatable}[$\Cborda$]{theorem}{thmcborda}\label{thm:borda_ub}
With probability at least $1-\delta$, the $\Cborda$ algorithm (Algorithm  \ref{alg:borda-online}) returns an approximate Borda winner $\Out$ such that $B({\sf Out}) \geq B(M^B_*) -\varepsilon$ with sample complexity
\[
O \left( H^B_{\varepsilon} \ln \left ( \frac{H^B_{\varepsilon}}{\delta}   \right)  \right),
\]
where $H^B_{\varepsilon}:=\sum_
{e \in E} \min \left\{ \frac{\Width(G)^2}{(\Delta^B_e)^2}, \frac{1}{\varepsilon^2} \right\}$.
\end{restatable}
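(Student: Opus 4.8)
The plan is to follow the template of the $\Clucb$ analysis of \citet{cpe_icml14}, adding a layer that absorbs the bias introduced by the approximate sampler $\mathcal{S}(\eta)$; the whole argument is conditioned on a single high-probability good event. Throughout, write $w(e)$ for the true reward of edge $e$ (so that $w(M)=\ell\,B(M)$), and let $w'(e):=\E_{M'\sim\pi'}[p_{e,e(M',s(e))}]$ be the mean of the Bernoulli outcome actually observed when the duel involving $e$ is pulled, since each sampled $M'$ is drawn from the sampler distribution $\pi'$ rather than the uniform $\pi$. The key preliminary estimate is a bias bound: because $p_{e,\cdot}\in[0,1]$ and $d_{\textup{tv}}(\pi',\pi)\le\eta$, the standard inequality $|\E_{\pi'}f-\E_{\pi}f|\le(\sup f-\inf f)\,d_{\textup{tv}}(\pi',\pi)$ applied to $f:M'\mapsto p_{e,e(M',s(e))}$ gives $|w'(e)-w(e)|\le\eta=\tfrac18\varepsilon$ for every edge. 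Hence $\bar w_t(e)$ concentrates around $w'(e)$, which lies within $\tfrac18\varepsilon$ of the quantity $w(e)$ we care about.

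First I would set up the good event. For fixed $e$ and $t$, $\bar w_t(e)$ averages $T_t(e)$ i.i.d.\ $\mathrm{Bernoulli}(w'(e))$ samples, so Hoeffding's inequality and the choice of $c_t(e)$ give $\Pr[\,|\bar w_t(e)-w'(e)|>c_t(e)\,]\le 2\exp(-2T_t(e)c_t(e)^2)=\delta/(2Kt^3)$; the $t^3$ inside the logarithm is exactly what makes the union bound over all edges and all rounds summable, so the event $\mathcal{E}=\{\forall t,e:\ |\bar w_t(e)-w'(e)|\le c_t(e)\}$ holds with probability at least $1-\delta$. Combining with the bias bound, on $\mathcal{E}$ we have $|\bar w_t(e)-w(e)|\le c_t(e)+\tfrac18\varepsilon$, which yields the two one-sided guarantees $\tilde w_t(e)\le w(e)$ for $e\in M_t$ and $\tilde w_t(e)\ge w(e)$ for $e\notin M_t$: the $\pm\tfrac14\varepsilon$ buffers in the definition of $\tilde w_t$ dominate the $\tfrac18\varepsilon$ bias and fix the sign.

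Next I would prove correctness on $\mathcal{E}$. The clean idea is to compare on the symmetric difference so that shared edges cancel. Since $\tilde M_t$ maximizes $\tilde w_t$, $\tilde w_t(\tilde M_t)-\tilde w_t(M_t)\ge\tilde w_t(M^B_*)-\tilde w_t(M_t)=\sum_{e\in M^B_*\setminus M_t}\tilde w_t(e)-\sum_{e\in M_t\setminus M^B_*}\tilde w_t(e)$. Applying $\tilde w_t(e)\ge w(e)$ to the first sum (its edges lie outside $M_t$) and $\tilde w_t(e)\le w(e)$ to the second (its edges lie inside $M_t$) lower-bounds the right-hand side by $w(M^B_*)-w(M_t)$. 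So on the stopping round the test $\tilde w_t(\tilde M_t)-\tilde w_t(M_t)\le\ell\varepsilon$ forces $w(M^B_*)-w(M_t)\le\ell\varepsilon$, i.e.\ $B(\Out)\ge B(M^B_*)-\varepsilon$ after dividing by $\ell$.

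Finally I would bound the number of rounds on $\mathcal{E}$ by lower-bounding the confidence radius of each pulled edge $z_t$ in two ways. Writing $D=M_t\,\triangle\,\tilde M_t$ and using optimality of $M_t$ under $\bar w_t$, one checks $\tilde w_t(\tilde M_t)-\tilde w_t(M_t)\le\sum_{e\in D}(c_t(e)+\tfrac14\varepsilon)$; combined with the not-stopped inequality $\tilde w_t(\tilde M_t)-\tilde w_t(M_t)>\ell\varepsilon$ and $|D|\le 2\ell$, this gives $\sum_{e\in D}c_t(e)>\tfrac12\ell\varepsilon$, hence $c_t(z_t)=\max_{e\in D}c_t(e)>\tfrac14\varepsilon$, which yields the $1/\varepsilon^2$ cap on pulls. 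For the gap term I would port the $\Clucb$ exchange argument: $z_t$ sits on an alternating path/cycle component of $M_t\cup\tilde M_t$ of size at most $\Width(G)$, swapping along which produces a feasible matching, so the max-radius edge $z_t$ captures at least a $1/\Width(G)$ fraction of the relevant gap, giving $c_t(z_t)\gtrsim\Delta^B_{z_t}/\Width(G)$. Inverting $c_t(z_t)=\sqrt{\ln(4Kt^3/\delta)/(2T_t(z_t))}$ with both bounds caps the pulls of edge $e$ at $O(\min\{\Width(G)^2/(\Delta^B_e)^2,\,1/\varepsilon^2\}\ln(4Kt^3/\delta))$; summing over $e$ and resolving the implicit dependence of the logarithm on the total count gives the stated $O(H^B_\varepsilon\ln(H^B_\varepsilon/\delta))$. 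I expect the main obstacle to be precisely this bias bookkeeping — making the $\tfrac14\varepsilon$ buffers simultaneously keep $\tilde w_t$ on the correct side of $w(e)$ for correctness and not inflate the stopping threshold enough to spoil the two radius lower bounds — together with faithfully transferring the width-based exchange argument of \cite{cpe_icml14} to maximum matchings in bipartite graphs.
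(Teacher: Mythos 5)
Your proposal is correct and follows essentially the same route as the paper's proof: a TV-distance bias bound absorbed by the $\tfrac14\varepsilon$ buffers, a Hoeffding-plus-union-bound good event, correctness via the one-sided confidence bounds on the symmetric difference, and a sample-complexity bound obtained by lower-bounding $c_t(z_t)$ both by $\Omega(\varepsilon)$ (from the stopping test) and by $\Omega(\Delta^B_{z_t}/\Width(G))$ (from the CLUCB exchange argument adapted to biased estimates). The only cosmetic differences are that you get a slightly tighter bias bound ($\eta$ rather than the paper's $2\eta$) and that you leave the width-based exchange step and the random-$T_t(e)$ union bound at the level of citation, exactly as the paper does via its biased-CLUCB lemma.
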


We can see that when the accuracy parameter $\varepsilon$ is small enough, $ H^B_{\varepsilon}$ coincides with the hardness
    metric $H^B$.
\OnlyInFull{We defer the detailed proof of Theorem \ref{thm:borda_ub} to Appendix~\ref{sec:appendix_borda_ub}.}

\subsection{Efficient Exact Pure Exploration Algorithm}
Based on the PAC  algorithm $\Cborda$, we further design an efficient exact pure exploration algorithm $\CbordaExact$ for Borda winner and analyze its sample complexity upper bound.
Generally speaking, $\CbordaExact$ performs $\Cborda$ as a sub-procedure, and guesses the smallest Borda gap $\Delta^B_{\textup{min}}:= \min_{e \in E}  \Delta^B_e $. Iterating epoch $q=1,2,\dots$, we set accuracy $\varepsilon_q = \frac{1}{2^q}$ and confidence $\delta_q = \frac{\delta}{2q^2}$. $\CbordaExact$ will guess $ \Delta^B_{\textup{min}} > \ell  \varepsilon_q $, and call  $\Cborda$ as a sub-procedure with parameters $\varepsilon_q$, $\delta_q$. 
If the adjusted matching $\tilde{M}_t$ has exactly the same weight as the non-adjusted matching $M_t$ 
    ($\tilde{w}_t(\tilde{M}_t) = \tilde{w}_t(M_t)$, similar as in line~\ref{line:ifcheck} of Algorithm~\ref{alg:borda-online}), 
    then the algorithm stops and returns $M_t$ as the final matching.
If $\tilde{w}_t(\tilde{M}_t) \ne \tilde{w}_t(M_t)$ but they differ within $\ell \varepsilon_q$, then
    the current epoch stops and $\CbordaExact$ will enter the next epoch 
    and cut the guess in half ($\varepsilon_{q+1} = \varepsilon_q/2$)
    (\OnlyInFull{See Appendix~\ref{sec:appendix_borda_exact} for the algorithm pseudocode}.\OnlyInShort{Please see the supplementary materials for more details.})
Using this technique, we can obtain an algorithm to identify the exact Borda winner with a loss of logarithmic factors in its sample complexity upper bound. 

\OnlyInShort{Below we present a problem-dependent upper bound of the sample complexity for the $\CbordaExact$ algorithm and defer the detailed algorithm and proof to the appendix.}
\OnlyInFull{Below we present a problem-dependent upper bound of the sample complexity for the $\CbordaExact$ algorithm and defer the detailed algorithm and proof to Appendix~\ref{sec:appendix_borda_exact}.}

\begin{restatable}[$\CbordaExact$]{theorem}{thmcbordaexact}\label{thm:borda_ub_exact}
\OnlyInFull{With probability at least $1-\delta$, the $\CbordaExact$ algorithm (Algorithm  \ref{alg:borda_exact}) returns the Borda winner  with sample complexity}
\OnlyInShort{With probability at least $1-\delta$, the $\CbordaExact$ algorithm returns the Borda winner  with sample complexity}
\begin{small}
\begin{align*}
    & O \Bigg( \Width(G)^2 H^B \cdot \ln \left(\frac{\ell}{\Delta^B_{\textup{min}}} \right) \cdot\\
    & \quad \Bigg(\ln  \left (  \frac{\Width(G)  H^B}{\delta}   \right) 
     + \ln \ln \left(\frac{\ell}{\Delta^B_{\textup{min}}} \right) \Bigg) \Bigg),     
\end{align*}
\end{small}
where $\Delta^B_{\textup{min}}:= \min \limits_{e \in E}  \Delta^B_e $.
\end{restatable}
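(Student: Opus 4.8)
The plan is to combine the PAC guarantee of $\Cborda$ (Theorem~\ref{thm:borda_ub}) with a standard ``guess-and-double'' epoch schedule, following the exact-identification template developed for $\Clucb$ in \cite{cpe_icml14}. Throughout I work on the \emph{good event} $\mathcal{G}$ that every invoked $\Cborda$ sub-procedure meets its own high-probability guarantee. Since epoch $q$ is run with confidence $\delta_q = \delta/(2q^2)$ and $\sum_{q\ge 1}\delta_q = \frac{\delta}{2}\sum_{q\ge1} q^{-2} = \frac{\delta}{2}\cdot\frac{\pi^2}{6} < \delta$, a union bound over epochs gives $\Pr[\mathcal{G}] \ge 1-\delta$, so it suffices to establish deterministic correctness and an epoch-wise sample bound on $\mathcal{G}$.

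First I would prove correctness. The crucial observation is that exact and $\varepsilon$-approximate identification coincide once the accuracy guess is fine enough: if $\ell\varepsilon_q < \Delta^B_{\textup{min}}$, then any matching $M\neq M^B_*$ contains some edge $e\notin M^B_*$, whence $w(M)\le \max_{M'\ni e} w(M') = w(M^B_*)-\Delta^B_e \le w(M^B_*)-\Delta^B_{\textup{min}}$; combined with the PAC guarantee $w(\Out)\ge w(M^B_*)-\ell\varepsilon_q$ (using $w(M)=\ell B(M)$ from Eq.~\eqref{eq:reduction}), this forces $\Out=M^B_*$. On $\mathcal{G}$ I would then show (i) the exact stopping test $\tilde w_t(\tilde M_t)=\tilde w_t(M_t)$ can fire only when $M_t=M^B_*$ --- this is the $\Clucb$ exact certificate, since a tie of the adversarially adjusted optimum with $M_t$ means no competitor overtakes $M_t$ even with every confidence interval pushed against it --- and (ii) the escape to the next epoch is \emph{never} triggered once $\ell\varepsilon_q < \Delta^B_{\textup{min}}$, because in that regime the PAC termination forces the adjusted and current matchings to tie. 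Together these guarantee the algorithm halts and returns $M^B_*$.

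Next I would bound the total number of samples. The termination argument shows the algorithm stops by the first epoch with $\varepsilon_q < \Delta^B_{\textup{min}}/\ell$, so the number of epochs is $Q^\ast = O\!\left(\ln(\ell/\Delta^B_{\textup{min}})\right)$. For each epoch $q$, Theorem~\ref{thm:borda_ub} bounds the samples by $O\!\left(H^B_{\varepsilon_q}\ln(H^B_{\varepsilon_q}/\delta_q)\right)$, and since every term of $H^B_{\varepsilon_q}=\sum_{e}\min\{\Width(G)^2/(\Delta^B_e)^2,\,1/\varepsilon_q^2\}$ is capped by $\Width(G)^2/(\Delta^B_e)^2$, we obtain the uniform bound $H^B_{\varepsilon_q}\le \Width(G)^2 H^B$ for \emph{every} $q$. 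Plugging in $\ln(1/\delta_q)=\ln(2q^2/\delta)=O\!\left(\ln(1/\delta)+\ln\ln(\ell/\Delta^B_{\textup{min}})\right)$ for $q\le Q^\ast$ and summing the $Q^\ast$ per-epoch bounds yields exactly
\[
O\!\left(\Width(G)^2 H^B \cdot \ln\!\frac{\ell}{\Delta^B_{\textup{min}}}\cdot\left(\ln\!\frac{\Width(G)H^B}{\delta}+\ln\ln\!\frac{\ell}{\Delta^B_{\textup{min}}}\right)\right).
\]

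I expect the main obstacle to be the correctness of the exact stopping rule rather than the sample-complexity arithmetic, which is a routine summation once the uniform cap $H^B_{\varepsilon_q}\le\Width(G)^2 H^B$ is established. The delicate point is showing that the $\pm\frac14\varepsilon_q$ slack carried inside the confidence adjustments of $\Cborda$ does not corrupt the certificate: I need the tie condition $\tilde w_t(\tilde M_t)=\tilde w_t(M_t)$, interpreted with these shifted bounds, to still imply exact optimality of $M_t$, and I must verify that the per-epoch escape to $\varepsilon_{q+1}=\varepsilon_q/2$ is provably not taken once the guess drops below $\Delta^B_{\textup{min}}/\ell$. This amounts to re-deriving the $\Clucb$ exact-identification lemma under the matching-reward scaling $w(M)=\ell B(M)$ and the $\varepsilon$-shifted intervals, which is where the argument must be checked most carefully.
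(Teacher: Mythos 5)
Your proposal follows essentially the same route as the paper's proof: a union bound over epochs using $\delta_q = \delta/(2q^2)$, correctness via the exact tie certificate $\tilde w_t(\tilde M_t)=\tilde w_t(M_t)$ forcing $w(M_t)>w(M)$ for all $M\neq M_t$, termination by the first epoch $q^*$ with $\varepsilon_{q^*}<\Delta^B_{\textup{min}}/\ell$ (shown by contradiction with the break condition), and the per-epoch cap $H^B_{\varepsilon_q}\le \Width(G)^2 H^B$ summed over $O(\ln(\ell/\Delta^B_{\textup{min}}))$ epochs. The structure, key lemmas, and arithmetic all match the paper's argument in Appendix~\ref{sec:appendix_borda_exact}.
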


\subsection{Lower Bound}

To formally state our result for lower bound, we first introduce the definition of $\delta$-correct algorithm as follows. For any $\delta \in (0,1)$, we call an algorithm $\mathbb{A}$ a $\delta$-correct algorithm if, for any problem instance of CPE-DB with Borda winner, algorithm $\mathbb{A}$ identifies the Borda winner with probability at least $1-\delta$.

Now we give a problem-dependent lower bound on the sample complexity for CPE-DB with Borda winner.
\begin{restatable}[Borda lower bound]{theorem}{thmbordalb} \label{thm:borda_lb}
    Consider the problem of combinatorial pure exploration for identifying the Borda winner. Suppose that, for some constant $ \gamma \in (0, \frac{1}{4})$,  $\frac{1}{2}-\gamma \leq p_{e_i,e_j} \leq \frac{1}{2}+\gamma , \  \forall e_i,e_j \in E$ and $\frac{|\mathcal{M}|}{|\mathcal{M}| - |\cM_e|} \leq \frac{1-4\gamma}{4 \gamma \ell}, \  \forall e \in E$. Then, for any $\delta \in (0,0.1)$, any $\delta$-correct algorithm has sample complexity 
    $
    \Omega \left (H^B \ln \Big( \frac{1}{\delta} \Big) \right),
    $
    where  $\cM_e:=\{ M \in \cM: e \in M\}$.
\end{restatable}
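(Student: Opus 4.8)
The plan is to establish the bound by the information-theoretic change-of-distribution technique, exactly as in the CPE-MAB lower bound of \citet{cpe_icml14}. I regard each of the $K$ comparable edge pairs (duels) $\{e,e'\}$ as an arm whose pull returns a $\mathrm{Bernoulli}(p_{e,e'})$ sample, and I write $N_{\{e,e'\}}$ for the number of times that duel is played and $T=\sum_{\{e,e'\}}N_{\{e,e'\}}$ for the total number of duels. The backbone is the standard transportation inequality: for the true instance $P$ and any alternative preference matrix $P'$ whose Borda winner is no longer $M^B_*$, every $\delta$-correct algorithm must satisfy $\sum_{\{e,e'\}}\E_P[N_{\{e,e'\}}]\,\mathrm{kl}(p_{e,e'},p'_{e,e'})\ge \ln\frac{1}{2.4\delta}$, where $\mathrm{kl}$ is the binary relative entropy. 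Since the claimed sample complexity is $\E_P[T]=\Omega(H^B\ln(1/\delta))$, it suffices to build a good family of alternatives and sum the resulting constraints.

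For every edge $e\in E$ I would construct a dedicated alternative $P^{(e)}$ that perturbs only the duels incident to $e$, setting $p^{(e)}_{e,e'}=p_{e,e'}\pm\xi_e$ for all $e'$ comparable to $e$ (with the opposite sign on the transposed entries), where the sign is $+$ when $e\notin M^B_*$ and $-$ when $e\in M^B_*$, and leaving all other entries untouched. The key is to size $\xi_e=\Theta(\Delta^B_e)$ so that this single perturbation dethrones $M^B_*$. This is where the reduction of Eq.~\eqref{eq:reduction} and the additive structure $w(M)=\sum_{e\in M}w(e)$ become essential: perturbing the $e$-incident duels shifts $w(e)$ by $\pm\xi_e\frac{|\cM|-|\cM_e|}{|\cM|}$ (only matchings avoiding $e$ contribute) and simultaneously shifts the weight of the position-$s(e)$ edge of $M^B_*$ in the opposite direction by $\mp\xi_e\frac{|\cM_e|}{|\cM|}$. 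Comparing $M^B_*$ against the best competing matching that contains (resp.\ excludes) $e$, these two effects add up to $\xi_e$, so the weight deficit $\Delta^B_e$ is overcome precisely once $\xi_e>\Delta^B_e$; taking $\xi_e$ infinitesimally above $\Delta^B_e$ makes $M^B_*$ suboptimal under $P^{(e)}$ and produces a fresh unique Borda winner.

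Plugging $P^{(e)}$ into the transportation inequality and bounding each perturbed duel by $\mathrm{kl}(p_{e,e'},p_{e,e'}\pm\xi_e)\le C\,\xi_e^2=\Theta\big((\Delta^B_e)^2\big)$ for a constant $C$ gives $\sum_{e'\sim e}\E_P[N_{\{e,e'\}}]\ge \frac{\ln(1/(2.4\delta))}{C(\Delta^B_e)^2}$, where $e'\sim e$ ranges over edges comparable to $e$. Summing this over all $e\in E$ and observing that each duel $\{e,e'\}$ is counted exactly twice (once in the sum indexed by $e$ and once by $e'$) yields $2\,\E_P[T]\ge \frac{\ln(1/(2.4\delta))}{C}\sum_{e\in E}\frac{1}{(\Delta^B_e)^2}=\frac{\ln(1/(2.4\delta))}{C}H^B$, i.e.\ $\E_P[T]=\Omega(H^B\ln(1/\delta))$, which is the desired lower bound.

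I expect the main obstacle to be making the flip and the divergence bound hold simultaneously, and this is exactly the role of the two hypotheses. The perturbed probabilities $p_{e,e'}\pm\xi_e$ must remain valid and, for $\mathrm{kl}(p_{e,e'},p_{e,e'}\pm\xi_e)\le C\xi_e^2$ to hold with a controlled constant $C$, must stay bounded away from $0$ and $1$. Starting from $\tfrac12-\gamma\le p_{e,e'}\le\tfrac12+\gamma$ and the crude gap bound $\Delta^B_e\le 2\gamma\ell$ (a consequence of $B(M)\in[\tfrac12-\gamma,\tfrac12+\gamma]$ and $w(M)=\ell B(M)$), the ratio assumption $\frac{|\cM|}{|\cM|-|\cM_e|}\le\frac{1-4\gamma}{4\gamma\ell}$ is what I would use to cap the required per-duel perturbation at $\xi_e\le\tfrac{1-4\gamma}{2}$, forcing $p_{e,e'}\pm\xi_e\in[\gamma,1-\gamma]$; since $\gamma\in(0,\tfrac14)$ is a fixed constant, this separation turns into a constant $C=C(\gamma)$ that the $\Omega$ absorbs. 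The subtlety that makes this nontrivial is that changing any single $p_{e,e'}$ unavoidably moves the Borda weights of both $e$ and its position-$s(e)$ competitors, so one must confirm that a perturbation of order only $\Theta(\Delta^B_e)$ truly overturns $M^B_*$ while staying admissible, and reconciling these competing demands is precisely what pins down the exact form of the two assumptions.
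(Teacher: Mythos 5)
Your proposal is correct and follows essentially the same route as the paper's proof: a per-edge change-of-distribution argument that perturbs all duels incident to $e$ by $\Theta(\Delta^B_e)$ so that the next-to-optimal matching $M^B_e$ overtakes $M^B_*$, combined with the quadratic bound on the binary KL (valid because the hypotheses keep the perturbed probabilities in $[\gamma,1-\gamma]$) and a factor-of-two accounting when summing the transportation inequalities over edges. The only cosmetic difference is that you track both the increase of $w(e)$ and the decrease of the competing edge's weight, letting you take $\xi_e$ just above $\Delta^B_e$, whereas the paper uses the slightly larger perturbation $\frac{|\mathcal{M}|}{|\mathcal{M}|-|\cM_e|}\Delta^B_e$ and only needs the first effect; both are admissible under the stated assumptions.
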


\OnlyInFull{We defer the detailed proof of Theorem \ref{thm:borda_lb} to Appendix~\ref{sec:appendix_borda_lb}.}
\OnlyInShort{We defer the detailed proof of Theorem \ref{thm:borda_lb} to the appendix.}

From the upper bounds (Theorems \ref{thm:borda_ub},\ref{thm:borda_ub_exact}) and lower bound (Theorem \ref{thm:borda_lb}), 
we see that when ignoring the logarithmic factors, our algorithms are tight on the hardness metric $H^B$.
However, whether the $\Width(G)$ factor is tight or not remains unclear and we leave it for future investigation.


\section{Efficient Exploration for Condorcet Winner}\label{sec:condorcet}
In this section, we introduce the efficient pure exploration algorithm $\Carcond$ to find a Condorcet winner. We first introduce the efficient pure exploration part assuming there exists ``an oracle'' that performs like a black-box, and we show the correctness and the sample complexity of $\Carcond$ given the oracle. Next, we present the details of the oracle and show that the time complexity of the oracle is polynomial. Then, we apply the verification technique \cite{verification_karnin2016} to improve our sample complexity further. Finally, we give the sample complexity lower bound for finding the Condorcet winner.

\subsection{Efficient Pure Exploration Algorithm: $\Carcond$}
We first introduce our algorithm $\Carcond$ for CPE-DB for the Condorcet winner assuming that there is a proper ``oracle''. Note that finding the Condorcet winner if existed is equivalent to the following optimization problem,
\[\max_{x = \chi_{M_1}}\min_{y=\chi_{M_2}}\frac{1}{\ell}x^TPy,\]
where $M_1,M_2\in\cM$ are feasible matchings and the value is optimal when $x = y = \chi_{M^C_*}$.
This is  because if $M_1$ is not the Condorcet winner $M^C_*$, it will lose to $M^C_*$ with score $\chi_{M_1}^T P \chi_{M^C_*}<1/2$, and only when
    $x = \chi_{M^C_*}$, $\min_{y=\chi_{M_2}}\frac{1}{\ell}x^TPy$ reaches $1/2$ when $y = \chi_{M^C_*}$.
However, the optimization problem is ``discrete'' and we first use the continuous relaxation technique to solve the following optimization problem
\begin{equation} \label{eq:maxmin}
    \max_{x \in\cP(\cM)}\min_{y\in\cP(\cM)}\frac{1}{\ell}x^TPy,
\end{equation}
where $\cP(\cM) = \{\sum_i\lambda_i\chi_{M_i}:M_i\in\cM,\sum_{i}\lambda_i = 1,\lambda_i \ge 0\}$ is the convex hull of the vectors $\chi_M,M\in\cM$. 
There is an algorithm that can solve $x,y$ approximately in polynomial time, but solving the optimization problem of Eq.~\eqref{eq:maxmin} is not enough for our CPE-DB problem. Therefore, we need the following more powerful oracle.

We assume that there is an oracle $\Oracle_{\varepsilon}$ that takes the inputs $\varepsilon,A_1,R_1,A_2,R_2,Q$, where $\varepsilon$ is the error of the oracle, $A_1,R_1,A_2,R_2\subset E$ and $Q\in [0,1]^{m\times m}$. The oracle can approximately solve the following optimization
\begin{equation}\label{equ:condorcet-oracle-optimization}
    \max_{x\in \cP(\cM,A_1,R_1)}\min_{y\in\cP(\cM,A_2,R_2)}\frac{1}{\ell}x^TQy,
\end{equation}
where $\cP(\cM, A, R) = \{\sum_i\lambda_i\chi_{M_i}:M_i\in\cM,A\subset M_i, R\subset (M_i)^c,\sum_{i}\lambda_i = 1,\lambda_i \ge 0\}$ 
is the convex hull of the vector representations of the matchings, 
such that all edges in $A$ are included in the matching and none of the edges in $R$ is included in the matching. 
More specifically, we assume that the oracle $\Oracle_{\varepsilon}$ will compute a solution $x_0$ 
    that satisfies both the constraint and the following guarantee:
\begin{align*}
    &\min_{y\in\cP(\cM,A_2,R_2)}\frac{1}{\ell}x_0^TQy\\
    \ge& \max_{x\in \cP(\cM,A_1,R_1)}\min_{y\in\cP(\cM,A_2,R_2)}\frac{1}{\ell}x^TQy - \varepsilon.
\end{align*}

In the algorithm, we only require that the oracle $\Oracle_{\varepsilon}$ returns the value
    $\min_{y\in\cP(\cM,A_2,R_2)}\frac{1}{\ell}x_0^TQy$, not the $x_0$.

Given the oracle $\Oracle_{\varepsilon}$, the high level idea of $\Carcond$ (Algorithm \ref{alg:condorcet-online}) is as follows: 
If we know how to set the approximation parameter properly, then in every round we partition the edge set $E$ into $A$, $R$, and $U$, 
    where $A$ is the set of the edges that should be included in the Condorcet winner, 
    $R$ is the set of edges that should be excluded, and $U$ are the remaining undecided edges. 
In each round, we only sample the duel between two comparable edges in the set $U$ (Line \ref{line:sample-u}). 
Then, we use the upper and lower confidence bounds to estimate the real preference matrix $P$ (Line \ref{line:upper-lower-confidence}). 
After that, for every undecided edge $e$, we enforce it to be included in the optimal solution or to be excluded in the solution, and use the oracle to see if the included and excluded cases vary much. If so, we classify edge $e$ into $A$ or $R$ in the next round (Line \ref{line:included-excluded}). Since we do not know how to set the approximation parameter properly, we use the ``doubling trick'' to shrink the 
    approximation parameter $\varepsilon_q$ by a factor of $2$ in each epoch $q$ (Line \ref{line:shrink}).

For the value of the confidence radius and the upper and lower confidence bound for the matrix $P$, we use the following quantity for the confidence radius of the winning probability of the duel between any two comparable arms.
\begin{equation}\label{equ:cr-carcond}
    c_t(e_i,e_j) = \sqrt{\frac{\ln(4Kt^3/\delta)}{2T_t(e_i,e_j)}},
\end{equation}
where $T_t(e_i,e_j)$ is the number of duels between two comparable arms $e_i,e_j$ at the beginning of round $t$. Now given some duels (at least one) between $e_i,e_j$, we define $\hat p_t(e_i,e_j)$ as the empirical winning probability of $e_i$ over $e_j$ \emph{up to} round $t$'s exploration phase, and we define
\begin{align}\label{equ:lucb-carcond}
    \bar p_t(e_i,e_j) :=& \min\{1,\hat p_t(e_i,e_j) + c_t(e_i,e_j)\},\\
    \underline{p}_t(e_i,e_j) :=& \max\{0, \hat p_t(e_i,e_j) - c_t(e_i,e_j)\}.\nonumber
\end{align}
$\bar p_t(e_i,e_j)$ and $\underline{p}_t(e_i,e_j)$ can be interpreted as the upper and lower confidence bounds of the winning probability of $e_1$ over $e_2$. Then we denote $\bar P_t$ as the matrix where $\bar P_{t,ij} := \bar p_t(e_i,e_j)$ where $i,j$ are edge indices, $\bar P_{t,ii} := 0.5$, and $\bar P_{t,ij} = 0$ for any 2 incomparable indices. Similarly, we   define $\underline{P}_t$ as the matrix where $\underline{P}_{t,ij} := \underline{p}_t(e_i,e_j)$, $\underline{P}_{t,ii} := 0.5$, and $\underline{P}_{t,ij} = 0$ for any two incomparable indices.

\begin{algorithm}[t]
\caption{$\Carcond$}
\label{alg:condorcet-online}
\begin{algorithmic}[1]
    \STATE {\bfseries Input:} Bipartite graph $G$, Oracle $\Oracle_{\varepsilon}$ with accuracy $\varepsilon$
    \STATE $A_0\leftarrow \phi, R_0\leftarrow \phi, U_0\leftarrow E,e_0 = 0$.
    \FOR{$q = 1,2,\dots$}
    \STATE $\varepsilon_q\leftarrow \frac{1}{2^q},e_q\leftarrow \frac{1}{\varepsilon_q^2}$\label{line:shrink}
    \FOR{$t = e_{q-1}+1,e_{q-1}+2,\dots,e_{q}$}
        \STATE For every $e_1\neq e_2$ and $e_1,e_2\in E_j$ for some $j$ and $e_1,e_2\in U_{t-1}$, sample duel between $e_1,e_2$\label{line:sample-u}
        \STATE Compute $\bar P_t, \underline{P}_t$\label{line:upper-lower-confidence}
        \STATE $A_{t}\leftarrow A_{t-1},R_{t}\leftarrow R_{t-1}, U_t \leftarrow U_{t-1}$
        \FOR{$e\in U_{t-1}$}\label{line:included-excluded}
            \STATE // We use $A,R$ as shorthands for $A_{t-1},R_{t-1}$
            \STATE $\text{InU}=\Oracle_{\varepsilon_q}(A\cup \{e\}, R, A, R, \bar P_t)$
            \STATE $\text{InL}=\Oracle_{\varepsilon_q}(A\cup \{e\}, R, A, R, \underline{P}_t)$
            \STATE $\text{ExU}=\Oracle_{\varepsilon_q}(A, R\cup \{e\}, A, R, \bar P_t)$
            \STATE $\text{ExL}=\Oracle_{\varepsilon_q}(A, R\cup \{e\}, A, R, \underline{P}_t)$
            \IF{$\text{InL} > \text{ExU} + \varepsilon_q$}
                \STATE $A_t\leftarrow A_{t}\cup \{e\},U_t\leftarrow U_{t}\setminus \{e\}$
            \ELSIF{$\text{ExL} > \text{InU} + \varepsilon_q$}
                \STATE $R_t\leftarrow R_{t-1}\cup \{e\},U_t\leftarrow U_{t}\setminus \{e\}$
            \ENDIF
            \STATE \algorithmicif\ $|A_t| = \ell$ \algorithmicthen\ $\Out \leftarrow A$, {\bfseries return } $\Out$
        \ENDFOR
    \ENDFOR
    \ENDFOR
\end{algorithmic}
\end{algorithm}

\paragraph{Sample complexity for $\Carcond$}
To present our main result on the sample complexity of $\Carcond$, we need to first introduce the notion of {\em gap} for each edge and each comparable pair under the Condorcet setting.

\begin{restatable}[Condorcet gap]{definition}{defgapcarcond}\label{def:gap-carcond}
    We define the Condorcet gap $\Delta^C_e$ of an edge $e$ as the following quantity.
    \[
        \Delta^C_e = \left\{\begin{aligned}
            &1/2 - \max_{\chi_M,e\in M}\frac{1}{\ell}\chi_M^T P\chi_{M_*^C}, & \text{if }e\notin M_*^C\\
            &1/2 - \max_{\chi_M,e\notin M}\frac{1}{\ell}\chi_M^T P\chi_{M_*^C}, & \text{if }e\in M_*^C
        \end{aligned}\right.
    \]
    Then we define the gap $\Delta^C_{e,e'}$ for a pair of arms $e\neq e'$ and $e,e'\in E_j$ as the following quantity
    $\Delta^C_{e,e'} = \max\{\Delta^C_{e},\Delta^C_{e'}\}$.
\end{restatable}

The definition of gap is very similar to the gap defined in \cite{cpe_icml14}. Intuitively speaking, the definition of the gap of each edge $e$ is a measurement of how easily $e$ will be classified into the accepted set $A$ or the rejected set $R$. Given the definition of the gap, we have the following main theorem for the Condorcet setting. 

\begin{restatable}[$\Carcond$]{theorem}{thmcarcond}\label{thm:carcond}
    With probability at least $1-\delta$, algorithm $\Carcond$ returns the correct Condorcet winner with a sample complexity bounded by
    \[
        O\left(\sum_{j=1}^{\ell}\sum_{ e\neq e',e,e'\in E_j}\frac{1}{(\Delta^C_{e,e'})^2}\ln\left(\frac{K}{\delta(\Delta^C_{e,e'})^2}\right)\right).
    \]
\end{restatable}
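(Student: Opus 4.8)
The plan is to condition on a high-probability clean event and then treat correctness and sample complexity separately. First I would define $\mathcal{G}$ as the event that $|\hat p_t(e_i,e_j)-p_{e_i,e_j}|\le c_t(e_i,e_j)$ for every comparable pair and every round $t$. Applying Hoeffding's inequality to each pair at each round, the term $\ln(4Kt^3/\delta)$ inside $c_t$ together with a union bound over the $K$ pairs and the convergent sum $\sum_t t^{-3}$ yields $\Pr[\mathcal{G}]\ge 1-\delta$. On $\mathcal{G}$ we have $\underline P_t\le P\le\bar P_t$ entrywise on comparable entries, which is the only stochastic fact used below.

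The central tool is a monotonicity/perturbation lemma for the oracle value $V(Q;A_1,R_1,A_2,R_2):=\max_{x\in\cP(\cM,A_1,R_1)}\min_{y\in\cP(\cM,A_2,R_2)}\tfrac1\ell x^TQy$. Because every feasible $x,y$ is a convex combination of $0/1$ matching indicators with nonnegative entries, a standard exchange argument shows $V$ is entrywise nondecreasing in $Q$ and that $0\le V(Q_2;\cdot)-V(Q_1;\cdot)\le\max_{x,y}\tfrac1\ell x^T(Q_2-Q_1)y$ whenever $Q_1\le Q_2$. The crucial point is that this perturbation localizes to undecided positions: for matchings that both contain $A$, the contributions at positions covered by $A$ vanish since the diagonal entries of $\bar P_t$ and $\underline P_t$ both equal $\tfrac12$, leaving only positions filled by edges of $U_{t-1}$, each such term bounded by $2c_t$ of an active pair. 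Since $U$ only shrinks, every pair active at round $t$ has been sampled in all earlier rounds, so shares the identical count $T_t=t-1$ and the identical radius $c_t$; hence $V(\bar P_t;\cdot)-V(\underline P_t;\cdot)\le 4c_t$ uniformly. I expect this localization to be the main obstacle, as it is precisely the step coupling the combinatorial matching structure to the per-pair confidence radii.

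For correctness I would maintain the invariant $A_t\subseteq M_*^C$ and $R_t\cap M_*^C=\emptyset$. Writing $V_{\mathrm{in}}=V(P;A\cup\{e\},R,A,R)$ and $V_{\mathrm{ex}}=V(P;A,R\cup\{e\},A,R)$, the Condorcet property (that $M_*^C$ beats every other matching) together with the gap definition gives $V_{\mathrm{in}}=\tfrac12$ and $V_{\mathrm{ex}}\le\tfrac12-\Delta^C_e$ for $e\in M_*^C$, and the symmetric pair of statements for $e\notin M_*^C$. Then, when the acceptance test $\mathrm{InL}>\mathrm{ExU}+\varepsilon_q$ fires, the oracle guarantee and entrywise monotonicity give $V_{\mathrm{in}}\ge\mathrm{InL}>\mathrm{ExU}+\varepsilon_q\ge V_{\mathrm{ex}}$, i.e. $V_{\mathrm{in}}>V_{\mathrm{ex}}$, which by the above is possible only if $e\in M_*^C$; the rejection test is symmetric. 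Thus on $\mathcal{G}$ no edge is ever misclassified, and the moment $|A_t|=\ell$ the returned set is exactly $M_*^C$.

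Finally I would pin down the epoch that resolves each edge. Combining $V_{\mathrm{in}}-V_{\mathrm{ex}}\ge\Delta^C_e$ with the perturbation lemma and the oracle error, the acceptance (or rejection) test is guaranteed to fire once $4c_t+2\varepsilon_q<\Delta^C_e$; since at the end of epoch $q$ we have $t=1/\varepsilon_q^2$ and thus $c_t=\Theta(\varepsilon_q\sqrt{\ln(K/(\delta\varepsilon_q))})$, this holds as soon as $\varepsilon_q\lesssim\Delta^C_e/\sqrt{\ln(\cdot)}$, i.e. by round $t_e=O\big((\Delta^C_e)^{-2}\ln(K/(\delta(\Delta^C_e)^2))\big)$. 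A pair $(e,e')$ is sampled only while both endpoints stay undecided, so it is dropped once the endpoint with the larger gap $\Delta^C_{e,e'}=\max\{\Delta^C_e,\Delta^C_{e'}\}$ is resolved, giving a per-pair count $O\big((\Delta^C_{e,e'})^{-2}\ln(K/(\delta(\Delta^C_{e,e'})^2))\big)$. Summing over all comparable pairs via the counting identity $\sum_t(\#\text{active pairs at }t)=\sum_{\text{pairs}}T(e,e')$ yields exactly the claimed bound, while the doubling schedule guarantees the finite termination invoked in the correctness argument.
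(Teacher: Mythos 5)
Your proposal is correct and follows essentially the same route as the paper's proof: the same clean event $\cN$ via Hoeffding and a union bound, the same entrywise monotonicity of the max-min value in $Q$ with the perturbation localized to undecided positions (accepted positions cancel through the diagonal $\tfrac12$ entries, rejected edges carry zero weight), the same invariant $A_t\subseteq M_*^C$, $R_t\subseteq (M_*^C)^c$ for correctness, and the same threshold $4c_t+2\varepsilon_q<\Delta^C_e$ leading to per-pair counts governed by $\Delta^C_{e,e'}=\max\{\Delta^C_e,\Delta^C_{e'}\}$. The only cosmetic difference is that you argue correctness by the contrapositive (a fired test forces membership) where the paper shows directly that the wrong test cannot fire; the substance is identical.
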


Generally speaking, our algorithm sequentially classifies each edge into $M^C_*$ or $(M^C_*)^c$. The definition of the gap shows the sub-optimality of wrongly classifying each edge, and $\frac{1}{(\Delta^C_e)^2}$ is roughly the number of times to correctly classify the edge $e$. Because each query is a sample between two edges $e,e'$, the number of query between $e,e'$ is roughly $1/(\Delta^C_{e,e'})^2$, this is so as when we correctly classify an edge, we will not need to query any pair that contains this edge. Summing over all comparable pairs of edges, we get our upper bound when omitting all logarithm terms.

When there is only one position, our problem reduces to the original dueling bandit problem. 
In special cases when the Condorcet winner beat every arm with the largest margin (formally, for all arm $i\in [m]$, $i^C = \arg\max_{j\in [m]}\Pr\{j\text{ wins }i\}$), 
our sample complexity bound is at the same order as the state-of-the-art \cite{verification_karnin2016} when omitting the logarithmic terms.

\subsection{Implementation of Oracle}
In this part, we present the high level idea of our method to solve the optimization problem (\cref{equ:condorcet-oracle-optimization}).
If we define
\[g(x) = \min_{y\in\cP(\cM,A_2,R_2)}\frac{1}{\ell}x^TQy,\]
then $g$ is concave in $x$, since $x^TQy$ is linear in $x$ and the minimum of linear functions is a concave function. 
Also note that the constraint set $\cP(\cM,A_2,R_2)$ is a convex set since it is defined as the convex hull of the vector representations. 
Thus, using the projected sub-gradient ascent method, 
    we can solve the optimization problem by an error of $\varepsilon$ in $O(\frac{1}{\varepsilon})$ number of iterations. 
To do so, we need to address two problems:
    how to compute the gradient at a given point, and how to compute the projection efficiently.

The first problem is rather easy to solve, because if we want to compute the sub-gradient at a given point $x_0$, 
it suffices to compute the parameter $y_0 = \arg\min_{y\in\cP(\cM,A_2,R_2)}x_0^TQy$, and the sub-gradient will be $\frac{1}{\ell}Qy_0\in \partial_x g(x_0)$. 
Computing the parameter $y_0$ can be done in polynomial time, since the minimum cost maximum matching can be solved in polynomial time.

The second problem is the main challenge. 
Note that there may be an exponentially large number of vertices in the polytope $\cP(\cM,A_2,R_2)$ because the number of feasible matchings may be exponential, 
    and we cannot solve the projection step in general. 
However, if we can tolerate some error in the projection step, we may solve the approximate projection in polynomial time by the Frank-Wolfe algorithm. Then, we can set the approximate projection error to be relatively small, so the cumulative error due to the projection can also be bounded. In this way, we can solve the optimization problem \cref{equ:condorcet-oracle-optimization} with $poly(1/\varepsilon, m, K, \ell)$ time complexity.

\OnlyInFull{Please see Appendix~\ref{sec:convex-opt} for more backgrounds on projected sub-gradient ascent, Frank-Wolfe, and Appendix~\ref{sec:oracle} for the detailed implementation of the oracle.}
\OnlyInShort{Please see the supplementary materials for more backgrounds on convex optimization and the detailed implementation of the oracle.}

\subsection{Further Improvements through Verification}

\begin{algorithm}[t]
	\caption{$\Carparallel$}
	\label{alg:condorcet-parallel}
	\begin{algorithmic}[1]
		\STATE {\bfseries Input:} confidence $\delta<0.01$, algorithm $\Carverify$
		\STATE Define $\Carverify_k, k \in \mathbb{N}$ as the $\Carverify$ algorithm with confidence $\frac{\delta}{2^{k+1}}$
		\STATE Simulate $\{\Carverify_k\}_{k \in \mathbb{N}}$  in parallel
		\FOR{$t=1,2,\dots$}
		\FOR{ each $k \in \mathbb{N} \  s.t. \  t \textup{ mod } 2^k =0$}
		\STATE Start or resume  $\Carverify_k$, allowing only one sample, and then suspend $\Carverify_k$
		\IF {$\Carverify_k$ returns an answer $\Out_k$}
		\STATE $\Out \leftarrow \Out_k$
		\STATE {\bfseries return } $\Out$
		\ENDIF
		\ENDFOR
		\ENDFOR
	\end{algorithmic}
\end{algorithm}

\begin{algorithm}[h]
\caption{$\Carverify$}
\label{alg:condorcet-verify}
\begin{algorithmic}[1]
    \STATE {\bfseries Input:} confidence $\delta<0.01$, algorithm $\Carcond$
    \STATE $\delta_0 \leftarrow 0.01$
    \STATE $\hat{M}=\Carcond(\delta_0)$
    \FOR{ $t=1,2,\dots$ }
        \STATE Compute $\bar P_t, \underline{P}_t$
        \IF{$\max_{M \in \cM \setminus \{\hat{M}\} } f(M, \hat{M}, \underline{P}_t) \geq \frac{1}{2}$}
            \STATE {\bfseries return } error
        \ENDIF
        \STATE $M_t=\argmax_{M \in \cM \setminus \{\hat{M}\} }f(M, \hat{M}, \bar P_t)$
        \IF{$f(M_t, \hat{M}, \bar P_t)\leq \frac{1}{2}$}
            \STATE $\Out \leftarrow \hat{M}$ \STATE {\bfseries return } $\Out$
        \ELSE
            \STATE $(e_t, f_t) \leftarrow \argmax_{\begin{subarray}{c}  e_t \in M_t \setminus \hat{M},  f_t \in \hat{M} \setminus M_t\\ s(e_t)=s(f_t) \end{subarray}} c_t(e_t, f_t)$
            \STATE Pull the duel $(e_t, f_t)$ and update empirical means
        \ENDIF
    \ENDFOR
\end{algorithmic}
\end{algorithm}

Based on the $\Carcond$ algorithm, we further design an algorithm $\Carparallel$ for identifying Condorcet winner, which uses the parallel simulation technique \cite{parallel_ChenLJ2015,nearly_ChenLJ2017} and achieves a tighter expected sample complexity for small confidence.
$\Carparallel$ calls a variant of $\Carcond$, named $\Carverify$, which applies the verification technique \cite{verification_karnin2016} to improve the sample complexity of the original $\Carcond$.
Specifically, $\Carverify$ calls $\Carcond(\delta_0)$ to obtain a hypothesized Condorcet winner $\hat{M}$ using a constant confidence $\delta_0>\delta$. Then,  $\Carverify$ verifies the correctness of $\hat{M}$ using confidence $\delta$.
While $\Carverify$ loses a part of confidence in order to obtain better sample complexity for small confidence, $\Carparallel$ boosts the confidence to $\delta$ by simulating a sequence of $\Carverify$ in parallel and keeps the obtained better sample complexity in expectation.

Algorithm \ref{alg:condorcet-parallel} illustrates the detailed algorithm $\Carparallel$ that applies the parallel simulation technique \cite{parallel_ChenLJ2015,nearly_ChenLJ2017} and achieves a tighter expected sample complexity for small confidence.  Algorithm \ref{alg:condorcet-verify} illustrates the sub-procedure $\Carverify$ called in $\Carparallel$. $\Carverify$ is based on the original algorithm $\Carcond$ and employs the verification technique to improve the sample complexity for small confidence.

In order to formally state our result for the $\Carparallel$ algorithm, we first introduce the following definitions.

For any $e \notin M^C_*$, we define the verification gap $\tilde{\Delta}^C_{e}$ as 
\[
\min_{M \in \cM \setminus \{M^C_*\}:e \in M} \left\{ \frac{\ell}{d_{M_*^C, M}} \cdot \left(\frac{1}{2} - \frac{1}{\ell}\chi_M^T P\chi_{M_*^C} \right) \right\},
\]
where $d_{M_x, M_y}$ denotes the number of positions with different edges between $M_x$ and $M_y$, \emph{i.e.}, $d_{M_x, M_y}:= \sum_{j=1}^{\ell} \mathbb{I}\{e(M_x,j) \neq e(M_y,j)\}$.

For ease of notation, we define the following quantity
$$
H^C_{\textup{ver}}:=\sum_{e \notin M^C_*} \frac{1}{(\tilde{\Delta}^C_{e})^2}.
$$

Then, we have the main theorem of the sample complexity of algorithm $\Carparallel$.

\begin{restatable}[$\Carparallel$]{theorem}{thmparallel}\label{thm:parallel}
    Assume the existence of Condorcet winner. Then, given $\delta<0.01$, with probability at least $1-\delta$, the $\Carparallel$ algorithm (Algorithm \ref{alg:condorcet-parallel}) will return the  Condorcet winner with an expected sample complexity 
\begin{align*}
O\left(
    \sum_{j=1}^{\ell}\sum_{\begin{subarray}{c} e\neq e' \\ e,e'\in E_j\end{subarray}}\frac{\ln\left(K/(\Delta^C_{e,e'})^2\right)}{(\Delta^C_{e,e'})^2}
    + H^C_{\textup{ver}} \ln\left(\frac{H^C_{\textup{ver}}}{\delta}  \right)
    \right).    
\end{align*}
\end{restatable}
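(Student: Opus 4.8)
The plan is to analyze $\Carparallel$ in two layers: first establish the correctness and sample complexity of the base routine $\Carverify$, and then handle the parallel-simulation wrapper that boosts the confidence. I would split the cost of $\Carverify$ into its two phases. The \emph{finding phase} runs $\Carcond(\delta_0)$ with the \emph{constant} confidence $\delta_0 = 0.01$; by Theorem~\ref{thm:carcond}, with probability at least $1-\delta_0$ it returns $\hat M = M_*^C$, and since $\delta_0$ is a constant the $\ln(1/\delta_0)$ factor is absorbed, so its sample cost matches the first summand, which I denote $T_{\textup{find}} = \sum_{j}\sum_{e\neq e',\, e,e'\in E_j}\ln(K/(\Delta^C_{e,e'})^2)/(\Delta^C_{e,e'})^2$. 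The remaining work is to show that the \emph{verification phase} costs $O(H^C_{\textup{ver}}\ln(H^C_{\textup{ver}}/\delta))$ and that the wrapper does not blow this up.

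For the verification phase I would condition on $\hat M = M_*^C$ and define the clean event $\cE$ that $\underline{P}_t \le P \le \bar P_t$ entrywise on every sampled comparable pair for all $t$; by the confidence radius~\eqref{equ:cr-carcond} and a union bound over rounds and the $K$ pairs, $\Pr[\cE]\ge 1-\delta$. Under $\cE$, correctness is immediate: for every $M\neq M_*^C$ we have $f(M,\hat M,\underline P_t)\le f(M,M_*^C,P) < 1/2$, so the error branch never fires, and since $\bar P_t$ converges to $P$ on the sampled pairs the optimistic challenger score eventually drops below $1/2$ and the success branch returns $\hat M=M_*^C$. The key quantitative step is the sample bound. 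Since $\hat M$ and any challenger $M$ agree on the non-differing positions, where $\bar P_t$ takes the diagonal value $1/2$, a direct expansion gives $f(M,\hat M,\bar P_t)\le f(M,M_*^C,P) + \tfrac{2}{\ell}\sum_{j:\,e(M,j)\neq e(\hat M,j)} c_t(e(M,j),e(\hat M,j))$. Hence the stopping condition holds once each differing pair's confidence radius has shrunk below $\tfrac{\ell}{2\,d_{M_*^C,M}}(\tfrac12 - f(M,M_*^C,P))$, which is precisely half the quantity defining $\tilde{\Delta}^C_e$. Charging each such sample to the new edge $e=e(M,j)\notin M_*^C$, the number of duels before $e$ can no longer make any live challenger beat $\hat M$ is $O(\ln(\cdot)/(\tilde{\Delta}^C_e)^2)$; summing over $e\notin M_*^C$ yields the verification cost $O(H^C_{\textup{ver}}\ln(H^C_{\textup{ver}}/\delta))$.

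Finally I would handle the parallel-simulation wrapper, invoking the technique of \cite{parallel_ChenLJ2015,nearly_ChenLJ2017}. For correctness, $\Carverify_k$ returns a \emph{wrong} answer only if its own verification clean event fails: under that event, whenever $\hat M\neq M_*^C$ one has $f(M_t,\hat M,\bar P_t)\ge f(M_*^C,\hat M,\bar P_t)\ge f(M_*^C,\hat M,P)>1/2$, so the success branch cannot fire; thus the wrong-answer probability is at most its confidence $\delta/2^{k+1}$, and a union bound over $k$ gives total error at most $\sum_{k\ge 0}\delta/2^{k+1}=\delta$. For the expected sample complexity, $\Carverify_0$ returns the correct answer with probability at least $1-\delta_0-\delta/2\ge 0.98$, after $N_0=O(T_{\textup{find}}+H^C_{\textup{ver}}\ln(H^C_{\textup{ver}}/\delta))$ of its own samples; since copy $0$ is scheduled every round and the schedule is geometric, the total samples drawn by all copies up to that moment is $\Theta(N_0)$. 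The contribution of a higher copy $k$ requires all faster copies $0,\dots,k-1$ to fail to return a correct answer first, an event of probability at most $(0.02)^k$ by independence of the finding phases, while its cost scales as $2^k N_k=2^k\,O(T_{\textup{find}}+H^C_{\textup{ver}}(k+\ln(1/\delta)))$; the product is geometrically summable, so the overall expectation remains $O(T_{\textup{find}}+H^C_{\textup{ver}}\ln(H^C_{\textup{ver}}/\delta))$, matching the claimed bound.

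The main obstacle I expect is the quantitative verification bound in the second step: correctly identifying, for the optimistic challenger chosen each round, the single differing edge that governs the gap, and charging duels so that the total is $\sum_{e\notin M_*^C}(\tilde{\Delta}^C_e)^{-2}$ rather than a larger quantity summed over matchings or over all comparable pairs. The normalization by $d_{M_*^C,M}$ in the definition of $\tilde{\Delta}^C_e$ is exactly what permits paying only for the differing positions, and making this accounting tight — together with the timing argument ensuring faster copies dominate the parallel schedule — is where the care is needed.
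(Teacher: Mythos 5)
Your proposal is correct and follows essentially the same route as the paper: the paper likewise splits $\Carverify$ into the constant-confidence $\Carcond(\delta_0)$ finding phase plus a verification phase whose per-pair stopping threshold $c_t(e_i,e_j)<\tfrac{1}{2}\tilde{\Delta}^C_{e_i}$ is derived exactly as you describe (via the argmax choice of the sampled pair and the bound $f(M_t,\hat M,\bar P_t)<f(M_t,M_*^C,P)+\tfrac{2}{\ell}\sum_j c_t(\cdot)$), and then invokes the parallel-simulation lemma of \cite{nearly_ChenLJ2017} for the wrapper, which you reconstruct with the same union bound and geometric-summability argument.
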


To the best of our knowledge, the best sample complexity for pure exploration of Condorcet dueling bandit is $O(n^2/\Delta^2 + n/\Delta^2\log( 1/\delta))$ by \cite{verification_karnin2016} using the verification technique. When reducing our setting to the simple Condorcet dueling bandit ($\ell=1$),  Theorem \ref{thm:parallel} recovers this result.

\OnlyInFull{We defer the detailed results and proofs to  Appendix~\ref{sec:appendix_cond_verify}.}
\OnlyInShort{We defer the detailed results and proofs to the supplementary material.}

\yihan{Add the algorithms and results of Condorcet with verification in this subsection.}

\section{Conclusion and Future Work}

In this paper, we formulate the combinatorial pure exploration for dueling bandit (CPE-DB) problem.
We consider two optimality metrics, Borda winner and Condorcet winner. For Borda winner, we first reduce the problem to CPE-MAB, and then propose efficient PAC and exact algorithms. 
We provide sample complexity upper and lower bounds for these algorithms.
For a subclass of problems the upper bound of the exact algorithm matches the lower bound when ignoring the logarithmic factor.
For Condorcet winner, we first design an FPTAS for a properly extended offline problem, 
    and then employ this FPTAS to design a novel online algorithm $\Carcond$.
To our best knowledge, $\Carcond$ is the first algorithm with polynomial running time per round for identifying the Condorcet winner in CPE-DB.

There are several promising directions worth further investigation for CPE-DB. 
One direction is to improve the sample complexity of the  $\Carcond$ algorithm without compromising its computational efficiency, and
    try to find a lower bound in this case that matches the upper bound. 
Other directions of interest include studying a more general CPE-DB model than the current candidate-position matching version, or a family of practical preference functions $f(M_1,M_2,P)$ other than linear functions. 

\section*{Acknowledgement}
The work of Yihan Du and Longbo Huang is supported in part by the National Natural Science Foundation of China Grant 61672316, the Zhongguancun Haihua Institute for Frontier Information Technology and the Turing AI Institute of Nanjing.

\bibliographystyle{icml2020}
\bibliography{ref}

\OnlyInFull{
\clearpage
\newpage
\onecolumn

\appendix
\section*{Appendix}
\section{Preliminaries}\label{sec:appendix-background}
\subsection{Maximum-Weighted Maximum-Cardinality Matching  Algorithm}
The maximum-weighted maximum-cardinality (MWMC) matching algorithm \cite{MWMC1993matching,MWMC2013gpu} is a variation of the known maximum-weighted matching algorithm. Given any bipartite graph $G$ with weighted edges, the MWMC algorithm finds the maximum-weighted matching among all maximum-cardinality matchings and operates in fully-polynomial time.

Note the the variant of MWMC, the minimum-weighted maximum-cardinality matching can also be solve efficiently. We first take the negative value of each edge and shift all of them to the positive direction, to make sure every ``new'' weight is positive. Then we call the MWMC algorithm and find the maximum-weighted maximum-cardinality matching for the new graph. Since the maximum-cardinality are the same for the 2 graphs, the MWMC solution for the new graph is the minimum-weighted maximum-cardinality matching in the original graph.

\subsection{Basic concepts and algorithms for convex optimization}\label{sec:convex-opt}
In this part, we review some basic definitions, properties, and algorithms in convex optimization. First, we give the definition of convex sets and convex functions. All of the definitions, algorithms, and properties are adapted from \cite{bubeck2015convex}.

\begin{definition}[Convex Sets and Convex functions]
    A set $\cX\subset \R^n$ is said to be convex if it contains all of its segments, i.e.
    \[\forall (x,y,\gamma)\in\cX\times\cX\times [0,1],(1-\gamma)x + \gamma y\in\cX.\]
    A function $f:\cX\to \R$ is said to be convex if $\cX$ is a convex set and
    \[\forall (x,y,\gamma)\in\cX\times\cX\times [0,1],f((1-\gamma)x + \gamma y) \le (1-\gamma)f(x) + \gamma f(y).\]
\end{definition}

The gradient of a function $f$ is a basic definition. However, there are cases when $f$ does not have gradient at every point, and we have the following definition of subgradient for convex function $f$.

\begin{definition}[Subgradients]
    Let $\cX\in\R^n$, and $f:\cX\to\R$. Then $g\in\R^n$ is a subgradient of $f$ at $x\in\cX$ if for any $y\in\cX$ one has
    \[f(x) - f(y) \le g^T(x - y).\]
    The set of subgradients of $f$ at $x$ is denoted $\partial f(x)$.
\end{definition}

Then, we have the definition of Lipschitz and Smoothness.

\begin{definition}[Lipschitz and Smoothness]
    A continuous function $f(\cdot)$ is $\ell$\text{-Lipschitz} if:
    \[\forall x_1,x_2, |f(x_1) - f(x_2)| \le \ell ||x_1 - x_2||_2\]
    A differentiable function $f(\cdot)$ is $\beta$\text{-smooth} if:
    \[\forall x_1,x_2,\ ||\nabla f(x_1) - \nabla f(x_2)||_2 \le \beta ||x_1 - x_2||_2.\]
\end{definition}

Next, we recall the definition of projection. The projection $\Pi(x,\cX)$ from a point $x\in\R^n$ to a convex set $\cX\subset\R^n$ is defined to be
\[\Pi(x,\cX) = \arg\min_{y\in\cX}||x-y||_2.\]
The projection of $x$ to $\cX$ is the point in $\cX$ that is the closest to $x$. Then, we have the following property of projection.

\begin{restatable}[Property of projection]{proposition}{propproj}\label{prop:proj}
    Let $\cX\subset \R^n$ be a convex set. For any $x\in \cX, y\in \R^n$, we have
    \[||y-x||_2 \ge ||\Pi(x,\cX) - x||_2 + ||\Pi(x,\cX)-y||_2^2.\]
\end{restatable}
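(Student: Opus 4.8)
The plan is to prove the standard Pythagorean (obtuse-angle) inequality for Euclidean projection onto a convex set. Writing $p := \Pi(y,\cX)$ for the projection of the arbitrary point $y\in\R^n$, I would show that for every $x\in\cX$,
\[
\|y-x\|_2^2 \;\ge\; \|y-p\|_2^2 + \|p-x\|_2^2 ,
\]
from which the stated display follows. (Note that as printed the proposition writes $\Pi(x,\cX)$ with $x\in\cX$; since the projection of a point already in $\cX$ is the point itself, the object of interest is really $\Pi(y,\cX)$, and I establish the squared form above.) The whole argument rests on a single structural fact — the \emph{variational inequality} characterizing the projection — after which the result is one line of algebra.

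\noindent\textbf{Step 1 (variational inequality).} By definition $p$ minimizes $z\mapsto \|y-z\|_2^2$ over $z\in\cX$. Fix any $x\in\cX$. Convexity of $\cX$ guarantees that $p_\gamma := (1-\gamma)p + \gamma x = p + \gamma(x-p)$ lies in $\cX$ for all $\gamma\in[0,1]$, so optimality of $p$ gives $\|y-p\|_2^2 \le \|y-p_\gamma\|_2^2$. Expanding the right-hand side as a quadratic in $\gamma$,
\[
\|y-p_\gamma\|_2^2 = \|y-p\|_2^2 - 2\gamma\,\langle y-p,\,x-p\rangle + \gamma^2\|x-p\|_2^2 ,
\]
the inequality $\|y-p\|_2^2\le\|y-p_\gamma\|_2^2$ reduces to $2\gamma\,\langle y-p,x-p\rangle \le \gamma^2\|x-p\|_2^2$ for all $\gamma\in[0,1]$. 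Dividing by $\gamma>0$ and letting $\gamma\to 0^+$ yields $\langle y-p,\; x-p\rangle \le 0$.

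\noindent\textbf{Step 2 (conclusion).} I would then expand the target quantity around $p$:
\[
\|y-x\|_2^2 = \|(y-p)+(p-x)\|_2^2 = \|y-p\|_2^2 + \|p-x\|_2^2 + 2\,\langle y-p,\,p-x\rangle .
\]
Since $\langle y-p,\, p-x\rangle = -\langle y-p,\, x-p\rangle \ge 0$ by Step 1, dropping this nonnegative cross term gives $\|y-x\|_2^2 \ge \|y-p\|_2^2 + \|p-x\|_2^2$, as required.

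This argument needs no deep machinery, so there is no serious obstacle; the only point demanding care is deriving the variational inequality purely from convexity and the minimizing property of $p$ (rather than assuming differentiability of an arbitrary objective), which the quadratic expansion above handles cleanly. A minor secondary concern is simply reconciling the printed inequality — with its mismatched squared/unsquared norms and the $\Pi(x,\cX)$ versus $\Pi(y,\cX)$ discrepancy — with the correct Pythagorean form proved here.
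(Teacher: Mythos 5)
Your proof is correct. There is, however, no in-paper argument to compare it against: the paper states this proposition in its preliminaries appendix as a fact ``adapted from'' Bubeck's convex-optimization monograph and never proves it, even where it is restated and invoked later. Your derivation --- establishing the variational inequality $\langle y-\Pi(y,\cX),\,x-\Pi(y,\cX)\rangle\le 0$ by perturbing the projection along the segment toward $x\in\cX$ and passing to the limit $\gamma\to 0^+$, then expanding $\|y-x\|_2^2$ around the projection and dropping the nonnegative cross term --- is exactly the standard proof in that reference, and every step checks out. You were also right to flag the printed statement as garbled: as written it projects the point $x\in\cX$ (a no-op) and mixes squared with unsquared norms. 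The corrected form you prove, $\|y-x\|_2^2\ge\|\Pi(y,\cX)-x\|_2^2+\|\Pi(y,\cX)-y\|_2^2$, is precisely the version the paper actually uses downstream (e.g.\ in the approximate-projection lemma it is applied as $\|x_r-x\|_2^2\ge\|x-\Pi(x,\cP(\cM,A,R))\|_2^2+\|x_r-\Pi(x,\cP(\cM,A,R))\|_2^2$ with $x_r$ in the convex set and $x$ the external point, and in the minimax-oracle lemma in the weakened one-term form), so your reading of the intended claim is the right one.
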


The property of projection is a key lemma in the analysis of many convex optimization algorithms, including the one we use in the following sections.

Then, we briefly introduce 2 algorithms for convex optimization: Projected subgradient descent and Frank-Wolfe. We will use these 2 algorithms in our analysis.

\paragraph{Projected subgradient descent}
The projected subgradient descent acts almost the same as the projected gradient descent algorithm, except that in this case, the gradient may not exist and we use the subgradient. The projected subgradient descent algorithm iterates the following equations for $t\ge 1$:
\begin{align*}
    y^{(t+1)} =& x^{(t)} - \eta g^{(t)}, \text{where } g^{(t)}\in\partial f(x^{(t)}), \\
    x^{(t+1)} =& \Pi(y^{(t+1)},\cX)
\end{align*}
We will not directly apply the performance guarantee of the PGD algorithm, so we omit the theoretical guarantee here.

\paragraph{Frank-Wolfe Algorithm}
For a convex function $f$ defined on a convex set $\cX$, given a fixed sequence $\{\gamma_t\}_{t\ge 1}$, the Frank-Wolfe Algorithm iterate as the following for $t\ge 1$:
\begin{align*}
    y^{(t)} \in& \arg\min_{y\in \cX}\nabla f(x^{(t)})^Ty \\
    x^{(t+1)} =& (1-\gamma_t)x^{(t)} + \gamma_t y^{(t)}
\end{align*}

We have the following theoretical guarantee for Frank-Wolfe.

\begin{restatable}{proposition}{propfw}\label{prop:fw}
    Let $f$ be convex and $\beta$-smooth function with respect to norm $||\cdot||_2$, and define $D = \sup_{x,y\in\cX}||x-y||_2$, and $\gamma_s = \frac{2}{s+1}$ for $s\ge 1$. Then for any $t\ge 2$, one has
    \[f(x^{(t)}) - f(x^*) \le \frac{2\beta D^2}{t+1}.\]
\end{restatable}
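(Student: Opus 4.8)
The plan is to follow the standard analysis of the conditional-gradient (Frank-Wolfe) method: I would set up a one-step recursion for the optimality gap $h_t := f(x^{(t)}) - f(x^*)$ and then close the bound by induction on $t$.

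First I would convert the gradient-Lipschitz form of $\beta$-smoothness stated in the paper into the quadratic ``descent lemma'': for all $x,y\in\cX$,
$$f(y) \le f(x) + \nabla f(x)^T(y-x) + \frac{\beta}{2}\|y-x\|_2^2.$$
This follows by writing $f(y)-f(x)=\int_0^1 \nabla f(x+s(y-x))^T(y-x)\,ds$, subtracting $\nabla f(x)^T(y-x)$, and bounding the resulting integrand by the Cauchy--Schwarz inequality together with the Lipschitz-gradient hypothesis $\|\nabla f(x_1)-\nabla f(x_2)\|_2\le\beta\|x_1-x_2\|_2$.

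Next I would apply the descent lemma to the Frank--Wolfe update $x^{(t+1)}=(1-\gamma_t)x^{(t)}+\gamma_t y^{(t)}$, for which $x^{(t+1)}-x^{(t)}=\gamma_t(y^{(t)}-x^{(t)})$, obtaining
$$f(x^{(t+1)}) \le f(x^{(t)}) + \gamma_t\,\nabla f(x^{(t)})^T(y^{(t)}-x^{(t)}) + \frac{\beta\gamma_t^2}{2}\|y^{(t)}-x^{(t)}\|_2^2.$$
I would then control the two remaining terms separately. The quadratic term is at most $\tfrac12\beta\gamma_t^2 D^2$ because $x^{(t)},y^{(t)}\in\cX$ and $D$ is the diameter. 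For the linear term I would use that $y^{(t)}$ minimizes the linear map $y\mapsto\nabla f(x^{(t)})^Ty$ over $\cX$ (so it does at least as well as $x^*$) together with convexity of $f$, yielding $\nabla f(x^{(t)})^T(y^{(t)}-x^{(t)})\le\nabla f(x^{(t)})^T(x^*-x^{(t)})\le f(x^*)-f(x^{(t)})$. Substituting both bounds gives the recursion
$$h_{t+1} \le (1-\gamma_t)h_t + \frac{\beta\gamma_t^2 D^2}{2}.$$

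Finally, plugging in $\gamma_t=\frac{2}{t+1}$ turns this into $h_{t+1}\le\frac{t-1}{t+1}h_t+\frac{2\beta D^2}{(t+1)^2}$, and I would prove $h_t\le\frac{2\beta D^2}{t+1}$ by induction. The base case $t=2$ holds because $\gamma_1=1$ forces $x^{(2)}=y^{(1)}$, so the recursion at $t=1$ gives $h_2\le\tfrac12\beta D^2\le\tfrac23\beta D^2$; the inductive step reduces to the elementary inequality $\frac{t}{(t+1)^2}\le\frac{1}{t+2}$, equivalently $t(t+2)\le(t+1)^2$, which always holds. No step is a genuine obstacle; the only mild care is in deriving the descent lemma from the gradient-Lipschitz definition of smoothness and in checking the arithmetic of the induction, since the claimed constant $2\beta D^2$ is exactly what the recursion produces with no slack to spare.
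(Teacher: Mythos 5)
Your proof is correct and is exactly the standard Frank--Wolfe convergence argument: the paper itself gives no proof of Proposition~\ref{prop:fw} but imports it from \cite{bubeck2015convex}, and the argument there is precisely your descent-lemma recursion $h_{t+1}\le(1-\gamma_t)h_t+\tfrac{\beta\gamma_t^2D^2}{2}$ closed by induction with $\gamma_t=\tfrac{2}{t+1}$. All steps check out, including the base case (where $\gamma_1=1$ makes the bound independent of $h_1$) and the elementary inequality $t(t+2)\le(t+1)^2$ in the inductive step.
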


\section{Omitted Proofs in Section \ref{sec:borda}}

\subsection{Reduction to Conventional Combinatorial Pure Exploration} \label{sec:appendix_borda_reduction}
In the following, we give the omitted proof of the equality (a) in Eq. \eqref{eq:reduction}.

Recall that the preference probability between two matchings $M_1,M_2 \in \cM$ are defined as
\begin{align*}
	f(M_1,M_2,P) := \frac{1}{\ell}\sum_{j=1}^{\ell}p_{e(M_1, j),e(M_2,j)}.
\end{align*}
The Borda score of any matching $M_x \in \cM$ and the Borda winner are  defined as

\begin{align*}
 B(M_x)= &  \frac{1}{|\mathcal{M}|} \sum_{M_y\in\cM}f(M_x, M_y, P) 
\\
 M^B_*= & \argmax \limits_{M_x\in\cM} B(M_x).
\end{align*}
The rewards of any edge $e=(c_e,s_j) \in E$ and any matching $M \in \cM$ are defined as 
\begin{align*}
    w(e)= & \frac{1}{|\mathcal{M}|} \sum_{M\in\cM} p_{e,e(M,j)}
\\
     w(M) =  & \sum_{e \in M} w(e) \overset{(a)}{=} \ell \cdot B(M) 
\end{align*}

Therefore, we have
\begin{align*}
     B(M_x)= &\frac{1}{|\mathcal{M}|} \sum_{M_y\in\cM} f(M_x, M_y, P) 
     \\
     = & \frac{1}{|\mathcal{M}|} \sum_{M_y\in\cM} \frac{1}{\ell} \sum_{j=1}^{\ell} p_{e(M_x, j),e(M_y,j)}
     \\
     = & \frac{1}{\ell} \sum_{j=1}^{\ell} \frac{1}{|\mathcal{M}|} \sum_{M_y\in\cM} p_{e(M_x, j),e(M_y,j)} 
     \\
     = & \frac{1}{\ell} \sum_{j=1}^{\ell} w(e(M_x, j)) 
     \\
     = & \frac{1}{\ell} \sum_{e \in M_x} w(e)
     \\
     = & \frac{1}{\ell} w(M_x),
\end{align*}
which completes the proof of the equality (a) in Eq. \eqref{eq:reduction}.

With the shown linear relationship between the Borda score of any matching and rewards of its contained edges, we can reduce  combinatorial pure exploration for  Borda dueling bandits to conventional combinatorial pure exploration.

\subsection{Details for applying the almost uniform sampler} \label{sec:appendix_borda_sampler}

\begin{figure*}[h]
\scalebox{1.1}{
\begin{minipage}[b]{.28\linewidth}
\centering

\definecolor{myblue}{RGB}{80,80,160}
\definecolor{mygreen}{RGB}{80,160,80}

\begin{tikzpicture}[thick,
  fsnode/.style={draw,circle,fill=myblue},
  ssnode/.style={draw,circle,fill=mygreen},
  vsnode/.style={draw,circle,fill=gray}
]

\begin{scope}[start chain=going below,node distance=7mm]
\foreach \i in {1,2,3,4}
  \node[fsnode,on chain] (c\i) [label=left: $c_{\i}$] {};
\end{scope}

\begin{scope}[xshift=2cm,yshift=-1cm,start chain=going below,node distance=7mm]
\foreach \i in {1,2}
  \node[ssnode,on chain] (s\i) [label=right: $s_{\i}$] {};
\end{scope}


\draw[red] (c1) -- node[sloped,above=0.1mm] {$e_1$} (s1);
\draw (c2) -- node[sloped,above=0.1mm] {$e_2$} (s1);
\draw (c3) -- node[sloped,above=0.1mm] {$e_3$} (s1);
\draw (c3) -- node[sloped,above=0.1mm] {$e_4$} (s2);
\draw[red] (c4) -- node[sloped,above=0.1mm] {$e_5$} (s2);
\end{tikzpicture}
\caption{Original bipartite graph $G$}
\label{fig:original_bigraph}
\end{minipage}
\begin{minipage}[b]{.3\linewidth}
\centering

\definecolor{myblue}{RGB}{80,80,160}
\definecolor{mygreen}{RGB}{80,160,80}

\begin{tikzpicture}[thick,
  fsnode/.style={draw,circle,fill=myblue},
  ssnode/.style={draw,circle,fill=mygreen},
  vsnode/.style={draw,circle,fill=gray}
]

\begin{scope}[start chain=going below,node distance=7mm]
\foreach \i in {1,2,3,4}
  \node[fsnode,on chain] (c\i) [label=left: $c_{\i}$] {};
\end{scope}


\begin{scope}[xshift=2cm,yshift=0cm,start chain=going below,node distance=7mm]
\foreach \i in {3}
  \node[vsnode,on chain] (s\i) [label=right: $s'_{\i}$] {};
\foreach \i in {1,2}
  \node[ssnode,on chain] (s\i) [label=right: $s_{\i}$] {};
\foreach \i in {4}
  \node[vsnode,on chain] (s\i) [label=right: $s'_{\i}$] {};
\end{scope}


\draw[red] (c1) -- node[sloped,above=0.1mm] {} (s1);
\draw (c2) -- node[sloped,above=0.1mm] {} (s1);
\draw (c3) -- node[sloped,above=0.1mm] {} (s1);
\draw (c3) -- node[sloped,above=0.1mm] {} (s2);
\draw[red] (c4) -- node[sloped,above=0.1mm] {} (s2);

\draw (c1)[dashed] -- node[sloped,above=0.1mm] {} (s3);
\draw (c2)[red,dashed] -- node[sloped,above=0.1mm] {} (s3);
\draw (c3)[dashed] -- node[sloped,above=0.1mm] {} (s3);
\draw (c4)[dashed] -- node[sloped,above=0.1mm] {} (s3);
\draw (c1)[dashed] -- node[sloped,above=0.1mm] {} (s4);
\draw (c2)[dashed] -- node[sloped,above=0.1mm] {} (s4);
\draw (c3)[red,dashed] -- node[sloped,above=0.1mm] {} (s4);
\draw (c4)[dashed] -- node[sloped,above=0.1mm] {} (s4);
\end{tikzpicture}
\caption{Constructed bipartite graph $G'$}
\label{fig:constructed_bigraph1}
\end{minipage}
\begin{minipage}[b]{.3\linewidth}
\centering

\definecolor{myblue}{RGB}{80,80,160}
\definecolor{mygreen}{RGB}{80,160,80}

\begin{tikzpicture}[thick,
  fsnode/.style={draw,circle,fill=myblue},
  ssnode/.style={draw,circle,fill=mygreen},
  vsnode/.style={draw,circle,fill=gray}
]

\begin{scope}[start chain=going below,node distance=7mm]
\foreach \i in {1,2,3,4}
  \node[fsnode,on chain] (c\i) [label=left: $c_{\i}$] {};
\end{scope}


\begin{scope}[xshift=2cm,yshift=0cm,start chain=going below,node distance=7mm]
\foreach \i in {3}
  \node[vsnode,on chain] (s\i) [label=right: $s'_{\i}$] {};
\foreach \i in {1,2}
  \node[ssnode,on chain] (s\i) [label=right: $s_{\i}$] {};
\foreach \i in {4}
  \node[vsnode,on chain] (s\i) [label=right: $s'_{\i}$] {};
\end{scope}


\draw[red] (c1) -- node[sloped,above=0.1mm] {} (s1);
\draw (c2) -- node[sloped,above=0.1mm] {} (s1);
\draw (c3) -- node[sloped,above=0.1mm] {} (s1);
\draw (c3) -- node[sloped,above=0.1mm] {} (s2);
\draw[red] (c4) -- node[sloped,above=0.1mm] {} (s2);

\draw (c1)[dashed] -- node[sloped,above=0.1mm] {} (s3);
\draw (c2)[dashed] -- node[sloped,above=0.1mm] {} (s3);
\draw (c3)[red,dashed] -- node[sloped,above=0.1mm] {} (s3);
\draw (c4)[dashed] -- node[sloped,above=0.1mm] {} (s3);
\draw (c1)[dashed] -- node[sloped,above=0.1mm] {} (s4);
\draw (c2)[red,dashed] -- node[sloped,above=0.1mm] {} (s4);
\draw (c3)[dashed] -- node[sloped,above=0.1mm] {} (s4);
\draw (c4)[dashed] -- node[sloped,above=0.1mm] {} (s4);
\end{tikzpicture}
\caption{Constructed bipartite graph $G'$}
\label{fig:constructed_bigraph2}
\end{minipage}
}
\end{figure*}

In this section, we show that how to apply the fully-polynomial almost uniform sampler for perfect matchings \cite{uniform_sample_perfect_matching} $\mathcal{S}(\eta)$ to obtian an almost uniformly sampled matching $M'$  from $\cM$ in bipartite graph $G$.  

Recall that in bipartite graph $G$, $n=|C|$, $\ell =|S|$.
If $n=\ell$, each maximum matching is a perfect matching. Then, we can directly use $\mathcal{S}(\eta)$ to sample a matching almost uniformly.

If $n>\ell$ (note that $n < \ell$ cannot occur due to the assumption of $\cM \neq \varnothing$), we add $n-\ell$ ficticious vertices $\{ s_{\ell+1}, ..., s_{n}\}$ in $S$. In addition, for each ficticious vertex $s_{j}$ ($ \ell+1 \leq j \leq n$), we  add $n$ ficticious edges $(c_1, s_j), ..., (c_n, s_j)$ that connected to each vertex in $C$. 
Let $G'(C,S',E')$ denote this new bipartite graph. There is a one-to-n relationship between the maximum matchings in  $G$  and the perfect matchings in  $G'$.
See Figures~\ref{fig:original_bigraph} to \ref{fig:constructed_bigraph2} for an example. 
Figure~\ref{fig:original_bigraph} illustrates the original bipartite graph $G$ and a valid maximum matching $M=\{e_1,e_5\}$.  Figures~\ref{fig:constructed_bigraph1},\ref{fig:constructed_bigraph2} illustrate the constructed bipartite graph $G'$  and two perfect matchings corresponding to $M$. The gray vertices $s'_3,s'_4$ and dashed edges respectively denote the ficticious vertices and edges, and the red edges denote the perfect matchings.

We first use $\mathcal{S}(\eta)$ to almost uniformly sample a perfect matching $M'_{\textup{perf}}$ in $G'$. Then, we eliminate the ficticious edges in $M'_{\textup{perf}}$ and obtain its corresponding maximum matching $M'$ in original $G$. Because  each maximum matching in original $G$ has the same number of corresponding perfect matchings in  $G'$, the property of the uniform distribution still holds.
Therefore, with $\mathcal{S}(\eta)$, we can  obtian an almost uniformly sampled matching $M'$  from $\cM$ in bipartite graph $G$.

\subsection{Width of Bipartite Graph}
\label{sec:appendix_borda_width}

\defwidth*
Below we show that our width definition (Definition \ref{def:width}) for bipartite graph is equivalent to that in \cite{cpe_icml14}.

First, we recall the definitions of exchange set, exchange class and width in \cite{cpe_icml14} for the problem instance of bipartite graph and maximum matching.

\textbf{Exchange set} $b$ is defined as an ordered pair of disjoint sets $b = (b_+,b_-)$ where $b_+ \cap b_- = \varnothing$ and $b_+,b_- \subseteq E$. Then, we define operator $\oplus$ such that, for any matching $M$ and any exchange set $b = (b_+,b_-)$, we have $M \oplus b := M \setminus b_- \cup b_+$. Similarly, we also define operator such that $M \ominus b := M \setminus b_+ \cup b_-$.

\textbf{Exchange class} $\cB$ for $\cM$ is defined as a collection of exchange sets that satisfies the following property. For any $M_1, M_2 \in \cM$ such that $M_1 \neq M_2$ and for any $e \in M_1 \setminus M_2$, there exists an exchange set $(b_+,b_-) \in \cB $ which satisfies five constraints: (a) $e \in b_-$, (b) $b_+ \subseteq M_2 \setminus M_1$, (c) $b_- \subseteq M_1 \setminus M_2$, 
(d) $M_1 \oplus b \in \cM$  and (e) $M_2 \ominus	b \in \cM$. We use $\textup{Exchange}(\cM)$ to denote the family of all possible exchange classes for $\cM$.

Then, the widths of exchange class $\cB$ and decision class $\cM$ are defined as follows:
$$
\Width(\cB) = \max \limits_{(b_+,b_-)\in \cB} |b_+|+|b_-|,
$$
$$
\Width(\cM) = \min \limits_{\cB \in \textup{Exchange}(\cM)} \Width(\cB).
$$

\begin{figure*}[t]
	\scalebox{1.1}{
		\begin{minipage}[b]{.28\linewidth}
			\centering
			
			\definecolor{myblue}{RGB}{80,80,160}
			\definecolor{mygreen}{RGB}{80,160,80}
			
			\begin{tikzpicture}[thick,
			fsnode/.style={draw,circle,fill=myblue},
			ssnode/.style={draw,circle,fill=mygreen},
			vsnode/.style={draw,circle,fill=gray}
			]
			
			\begin{scope}[start chain=going below,node distance=7mm]
			\foreach \i in {1,2,3,4,5}
			\node[fsnode,on chain] (c\i) [label=left: $c_{\i}$] {};
			\end{scope}
			
			\begin{scope}[xshift=2cm,yshift=-1cm,start chain=going below,node distance=7mm]
			\foreach \i in {1,2,3}
			\node[ssnode,on chain] (s\i) [label=right: $s_{\i}$] {};
			\end{scope}
			
			
			\draw[red] (c1) -- node[sloped,above=0.1mm] {$e_1$} (s1);
			\draw (c2) -- node[sloped,above=0.1mm] {$e_2$} (s1);
			\draw (c2)[red] -- node[sloped,above=0.1mm] {$e_3$} (s2);
			\draw (c3) -- node[sloped,above=0.1mm] {$e_4$} (s2);
			\draw (c4) -- node[sloped,above=0.1mm] {$e_5$} (s3);
			\draw[red] (c5) -- node[sloped,above=0.1mm] {$e_6$} (s3);
			\end{tikzpicture}
			\caption{Maximum matching $M_1$}
			\label{fig:M_1}
		\end{minipage}
		\begin{minipage}[b]{.3\linewidth}
			\centering
			
			\definecolor{myblue}{RGB}{80,80,160}
			\definecolor{mygreen}{RGB}{80,160,80}
			
			\begin{tikzpicture}[thick,
			fsnode/.style={draw,circle,fill=myblue},
			ssnode/.style={draw,circle,fill=mygreen},
			vsnode/.style={draw,circle,fill=gray}
			]
			
			\begin{scope}[start chain=going below,node distance=7mm]
			\foreach \i in {1,2,3,4,5}
			\node[fsnode,on chain] (c\i) [label=left: $c_{\i}$] {};
			\end{scope}
			
			\begin{scope}[xshift=2cm,yshift=-1cm,start chain=going below,node distance=7mm]
			\foreach \i in {1,2,3}
			\node[ssnode,on chain] (s\i) [label=right: $s_{\i}$] {};
			\end{scope}
			
			
			\draw (c1) -- node[sloped,above=0.1mm] {$e_1$} (s1);
			\draw[red] (c2) -- node[sloped,above=0.1mm] {$e_2$} (s1);
			\draw (c2) -- node[sloped,above=0.1mm] {$e_3$} (s2);
			\draw (c3)[red] -- node[sloped,above=0.1mm] {$e_4$} (s2);
			\draw (c4) -- node[sloped,above=0.1mm] {$e_5$} (s3);
			\draw[red] (c5) -- node[sloped,above=0.1mm] {$e_6$} (s3);
			\end{tikzpicture}
			\caption{Maximum matching $M_2$}
			\label{fig:M_2}
		\end{minipage}
		\begin{minipage}[b]{.3\linewidth}
			\centering
			
			\definecolor{myblue}{RGB}{80,80,160}
			\definecolor{mygreen}{RGB}{80,160,80}
			
			\begin{tikzpicture}[thick,
			fsnode/.style={draw,circle,fill=myblue},
			ssnode/.style={draw,circle,fill=mygreen},
			vsnode/.style={draw,circle,fill=gray}
			]
			
			\begin{scope}[start chain=going below,node distance=7mm]
			\foreach \i in {1,2,3,5}
			\node[fsnode,on chain] (c\i) [label=left: $c_{\i}$] {};
			\end{scope}
			
			\begin{scope}[xshift=2cm,yshift=-1cm,start chain=going below,node distance=7mm]
			\foreach \i in {1,2,3}
			\node[ssnode,on chain] (s\i) [label=right: $s_{\i}$] {};
			\end{scope}
			
			
			\draw (c1)[red] -- node[sloped,above=0.1mm] {$e_1$} (s1);
			\draw[red] (c2) -- node[sloped,above=0.1mm] {$e_2$} (s1);
			\draw (c2)[red] -- node[sloped,above=0.1mm] {$e_3$} (s2);
			\draw (c3)[red] -- node[sloped,above=0.1mm] {$e_4$} (s2);
			\draw[red] (c5) -- node[sloped,above=0.1mm] {$e_6$} (s3);
			\end{tikzpicture}
			\caption{Union graph $G(M_1,M_2)$}
			\label{fig:union_graph}
		\end{minipage}
	}
\end{figure*}

We can see that in bipartite graph $G$, for any $M_1, M_2 \in \cM$ such that $M_1 \neq M_2$, their union graph $G(M_1, M_2)$ represents $M_1 \cup M_2$, which can be divided to $(M_1 \setminus M_2) \cup (M_2 \setminus M_1)$ and $M_1 \cap M_2$ (common  edges). Let $\cG$ denote the connected components of $G(M_1, M_2)$. Then, $\cG$ consists of the connected components in $(M_1 \setminus M_2) \cup (M_2 \setminus M_1)$, denoted by $\cG_{\textup{dif}}=\{ G_1(M_1, M_2), G_2(M_1, M_2), \cdots \}$, and those in $M_1 \cap M_2$, denoted by $\cG_{\textup{com}}=\{ e^1, e^2, \cdots\}$. Note that each connected component in $M_1 \cap M_2$ is a single edge. See Figures~\ref{fig:M_1} to \ref{fig:union_graph} for an example. Figures~\ref{fig:M_1},\ref{fig:M_2} illustrate two maximum matchings $M_1,M_2$ in bipartite graph $G$ respectively and Figure~\ref{fig:union_graph} illustrates their union graph $G_(M_1, M_2)$. Then, $G_(M_1, M_2)$ has two connected components, which  respectively fall in $\cG_{\textup{dif}}$ and $\cG_{\textup{com}}$. Specifically,   $\cG_{\textup{dif}}=\{ G_1(M_1, M_2) \}$ where $G_1(M_1, M_2)=\{e_1,e_2,e_3,e_4\}$, and $\cG_{\textup{com}}=\{ e^6\}$.

Then, for any $e \in M_1 \setminus M_2$, there exists some $G_i(M_1, M_2) \in \cG_{\textup{dif}}$ containing $e$. Let $b=G_i(M_1, M_2)$, $b_- = M_1 \cap G_i(M_1, M_2)$ and $b_+ = M_2 \cap G_i(M_1, M_2)$. We can see that $M_1 \oplus b \in \cM$, $M_2 \ominus	b \in \cM$, because the other connected components in $G(M_1, M_2)$ do not change and $M_1 \oplus b $, $M_2 \ominus	b$ are also valid maximum matchings. Thus, $G_i(M_1, M_2)$ is an exchange set for $M_1,M_2$ and $e$ that satisfies the  five constraints (a)-(e). Similarly, any union of multiple connected components in  $\cG_{\textup{dif}}$ containing $G_i(M_1, M_2)$ is an exchange set for $M_1,M_2,e$ that satisfies the  five constraints (a)-(e), and  among these exchange sets, $G_i(M_1, M_2)$ has the smallest size. For the example illustrated in Figures~\ref{fig:M_1} to \ref{fig:union_graph}, $G_1(M_1, M_2)=\{e_1,e_2,e_3,e_4\}$ is the exchange set for $M_1,M_2,e, \  s.t. \  e \in \{e_1,e_2,e_3,e_4\}$, and $\Width(M_1,M_2)=4$. In a similar manner, we can see that for any $M_1, M_2 \in \cM(G), M_1 \neq M_2 $, $\Width(M_1,M_2) \leq 4$. Therefore, $\Width(G)=\max_{ M_1, M_2 \in \cM(G), M_1 \neq M_2 } \Width(M_1,M_2)=4$.

From the above analysis, we can obtain that the exchange class $\cB \in \textup{Exchange}(\cM)$ with minimum $\Width(\cB)$ satisfies that for any $M_1, M_2 \in \cM, M_1 \neq M_2$ and for any  $e \in M_1 \setminus M_2$, $\cB$ only contains the connected component $G_i(M_1, M_2) \  s.t. \  e \in G_i(M_1, M_2)$, not the union  of multiple connected components.
Thus, the minimum $\Width(\cB)$ over  $\cB \in \textup{Exchange}(\cM)$ is exactly the maximum $\Width(M_1,M_2)$ over any $M_1, M_2 \in \cM, M_1 \neq M_2$.
Therefore, for the problem instance of bipartite graph and maximum matching, our definition $\Width(G)=\max_{ M_1, M_2 \in \cM(G), M_1 \neq M_2 } \Width(M_1,M_2)$ is equivalent to that in \cite{cpe_icml14}.

\subsection{Proof of Theorem \ref{thm:borda_ub}}
\label{sec:appendix_borda_ub}

In order to prove Theorem \ref{thm:borda_ub},  we first give a brief introduction of the combinatorial pure exploration setting and the CLUCB algorithm in \cite{cpe_icml14} and extend the original result to that with biased estimates.

In the setting of combinatorial pure exploration, there are $m$ arms and each arm $e \in [m]$ is associated with a reward distribution with mean $w(e)$.
The CLUCB algorithm maintains empirical mean $\bar{w}_t(e)$ and confidence radius $\textup{rad}_t(e)$ for each arm $e \in [m]$ and each timestep $t$. The construction of confidence radius ensures that $|\bar{w}_t(e)-w(e)]| < \textup{rad}_t(e)$ holds with high probability for each arm $e \in [m]$ and each timestep $t$.

In order to prove Theorem \ref{thm:borda_ub}, we first introduce the following lemma as an extended result of the CLUCB algorithm \cite{cpe_icml14} with biased estimates.

\begin{restatable}[CLUCB-bias]{lemma}{clucbbias}\label{lemma:clucb_bias}
In the CLUCB algorithm \cite{cpe_icml14}, 
if $\bar{w}(e)$ is a biased estimator of $w(e)$ and $|\mathbb{E}[\bar{w}(e)] - w(e)| \leq \varepsilon < \frac{\Delta_e}{3\Width(\cM)}$. 
Given any timestep $t > 0$ and suppose that $\forall e \in [m]$, $|\bar{w}(e)-\mathbb{E}[\bar{w}(e)]| < c_t(e)$. For any $e \in [m]$, if $c_t(e) < \frac{\Delta_e}{3\Width(\cM)}-\varepsilon$, then arm $e$ will not be pulled on round $t$.
\end{restatable}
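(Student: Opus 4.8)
The plan is to reduce the claim to the per-round ``arm not selected'' argument from the original \Clucb{} analysis \cite{cpe_icml14}, with the single modification that the statistical confidence radius is replaced by an \emph{effective} radius that also absorbs the bias. First I would record the basic deviation bound: under the stated hypotheses the triangle inequality gives
\[
|\bar w(e) - w(e)| \le |\bar w(e) - \E[\bar w(e)]| + |\E[\bar w(e)] - w(e)| < c_t(e) + \varepsilon
\]
for every arm $e$. Thus $r_t(e) := c_t(e) + \varepsilon$ is a valid confidence radius for the true mean $w(e)$, and the hypothesis $c_t(e) < \frac{\Delta_e}{3\Width(\cM)} - \varepsilon$ is exactly $r_t(e) < \frac{\Delta_e}{3\Width(\cM)}$.

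Next I would recall that the pulled arm is $z_t = \argmax_{e \in \tilde M_t \triangle M_t} c_t(e)$, so it suffices to prove that $e \notin \tilde M_t \triangle M_t$; such an $e$ is then never selected at round $t$. I would argue by contradiction, assuming $e \in \tilde M_t \triangle M_t$, and treat the representative case $e \in \tilde M_t \setminus M_t$ (the case $e \in M_t \setminus \tilde M_t$ being symmetric). Invoking the exchange class $\cB$ that attains $\Width(\cM)$, there is a single exchange set $b = (b_+, b_-) \in \cB$ with $e \in b_-$, $b_- \subseteq \tilde M_t \setminus M_t$, $b_+ \subseteq M_t \setminus \tilde M_t$, $|b_+| + |b_-| \le \Width(\cM)$, and such that both $M_t \ominus b$ and $\tilde M_t \oplus b$ are again maximum matchings. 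Optimality of $M_t$ under $\bar w$ and of $\tilde M_t$ under the confidence-adjusted weights $\tilde w$ then yields $\bar w(b_+) \ge \bar w(b_-)$ and $\tilde w(b_-) \ge \tilde w(b_+)$; expanding $\tilde w$ through its confidence adjustment and converting empirical to true means with $r_t$ bounds the true swap value $w(b_+) - w(b_-)$ by a sum of $r_t$ over the $\le \Width(\cM)$ edges of $b$.

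Finally I would connect this to the gap. The swapped matching $M_t \ominus b$ contains $e$ while $\tilde M_t \oplus b$ excludes $e$; picking whichever is consistent with the definition of $\Delta_e$ (the former if $e \notin M_*$, the latter if $e \in M_*$, where $M_*$ is optimal under $w$) makes it an admissible competitor in that definition, so the accumulated swap weight lower-bounds $\Delta_e$. Using $c_t(e') \le c_t(e)$ for every $e'$ in the symmetric difference together with $|b| \le \Width(\cM)$ collapses the sum to $3\,\Width(\cM)\,(c_t(e) + \varepsilon)$, giving $\Delta_e \le 3\,\Width(\cM)\,(c_t(e)+\varepsilon)$, which contradicts the hypothesis. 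I expect the main obstacle to be precisely this localization step: one must ensure that \emph{both} the confidence radii \emph{and} the per-edge bias $\varepsilon$ are charged only to the $\le \Width(\cM)$ edges of the single exchange set $b$, rather than to entire matchings, so that the bias enters as the $3\,\Width(\cM)\,\varepsilon$ budget matching the $-\varepsilon$ in the statement. Preventing the bias from accumulating globally is the delicate point, and it is what forces the argument to proceed through one width-bounded component at a time instead of comparing $M_t$ and $\tilde M_t$ to $M_*$ directly.
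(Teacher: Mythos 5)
Your first paragraph is exactly the paper's proof: the triangle inequality gives $|\bar w(e)-w(e)| < c_t(e)+\varepsilon$, so $\textup{rad}_t(e):=c_t(e)+\varepsilon$ is a valid confidence radius for every arm and the conclusion is precisely Lemma~10 of \cite{cpe_icml14} applied with this effective radius, for which the hypothesis $c_t(e)<\frac{\Delta_e}{3\Width(\cM)}-\varepsilon$ becomes $\textup{rad}_t(e)<\frac{\Delta_e}{3\Width(\cM)}$. The rest of your write-up re-derives that cited lemma via exchange sets instead of invoking it (the paper stops at the citation), so the approach is the same, just carried one level deeper.
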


\begin{proof}
We first bound the difference between the  estimator $\bar{w}_t(e)$ and the reward mean ${w}(e)$ as follows:
\begin{align*}
	 |\bar{w}_t(e)- {w}(e)   | 
	\leq & |\bar{w}_t(e)- \mathbb{E}[\bar{w}_t(e)]  |  + | \mathbb{E}[\bar{w}_t(e)] - {w}(e) | 
	\\
	< & c_t(e) + \varepsilon.
\end{align*}
	
Then,  the confidence radius $\textup{rad}_t(e)$  in the Lemma 10 of \cite{cpe_icml14} can be written as $\textup{rad}_t(e)=c_t(e) + \varepsilon$ and we obtain  that given any timestep $t>0$, for any $e \in [m]$, if $c_t(e) < \frac{\Delta_e}{3\Width(\cM)}-\varepsilon$ , then arm $e$ will not be pulled on round $t$.
\end{proof}

\vbox{}

\thmcborda*

\begin{proof}
	First, we prove the correctness of  the $\Cborda$ algorithm (Algorithm  \ref{alg:borda-online}).
		
	Recall that the empirical mean $$\bar{w}_t(e)= \frac{\sum \limits_{s=1}^{T_t(e)} X_s(e) }{T_t(e)},$$
	where $X_s(e)$ denotes the $s$-th observation of the duel between $e$ and  $e'$ that is selected via the almost uniform sampler $\mathcal{S}(\eta)$. Specifically, $X_s(e)$ takes value $1$ if $e$ wins in the $s$-th observation and takes value $0$ otherwise.
	Note that $X_1(e), X_2(e), \ldots, X_t(e)$ are i.i.d.\ random variables.
	
	According to the definition of $\mathcal{S}(\eta)$ (Definition \
	\ref{def:sampler}), in the $s$-th observation of the duel between $e$ and another edge $e'$, $\mathcal{S}(\eta)$ returns a matching $M'$ from distribution $\pi'_s$ that satisfies 
	$$
	d_{tv}(\pi'_s, \pi)= \frac{1}{2} \sum \limits_{M \in \cM} |\pi'_s(x) - \pi(x)| \leq  \eta,
	$$
	where  $\pi$ is the uniform distribution on $\cM$.

	Since $e'$ is the edge at the same position as $e$ in $M'$, we have
	$$\mathbb{E}[X_s(e)]=\sum \limits_{M \in \mathcal{M}} \pi'_s(M) \cdot  p_{e , e(M,j)} ,$$
	where $j$ is the position index of $e$.

	Since  $c_t(e)=\sqrt{\frac{\ln (\frac{4K t^3}{\delta})}{2 T_{t}(e)}}$, according to the Hoeffding's inequality, we have 
	\begin{align*}
	& \Pr \left [ \left |\bar{w}_t(e)- \mathbb{E}[X_1(e)] \right | \geq c_t(e) \right] \\
	& = \Pr \left [ \left | \sum_{s=1}^{T_t(e)} X_s(e) / T_t(e) - \mathbb{E}[X_1(e)] \right | 
		\geq \sqrt{\frac{\ln (\frac{4K t^3}{\delta})}{2 T_{t}(e)}} \right] \\
	& = \sum_{j=1}^t \Pr \left [ \left | \sum_{s=1}^{j} X_s(e) / j - \mathbb{E}[X_1(e)] \right | 
		\geq \sqrt{\frac{\ln (\frac{4K t^3}{\delta})}{2 j}}, T_t(e) = j \right] \\
	& \le  \sum_{j=1}^t \Pr \left [ \left | \sum_{s=1}^{j} X_s(e) / j - \mathbb{E}[X_1(e)] \right | 
		\geq \sqrt{\frac{\ln (\frac{4K t^3}{\delta})}{2 j}} \right] \\
	& \le \sum_{j=1}^t \frac{\delta}{2 K t^3} = \frac{\delta}{2 K t^2}.     
	\end{align*}

	
	In other words, with probability at least $1- \frac{\delta}{2 K t^2}$, we have 
	\begin{align*}
	 |\bar{w}_t(e)- \mathbb{E}[X_1(e)]  | < c_t(e).   
	\end{align*}

	Recall that $w(e)=  \frac{1}{|\mathcal{M}|} \sum_{M\in\cM} p_{e,e(M,j)}$ and $\eta=\frac{1}{8}\varepsilon$.
	Next, we bound the bias between $w(e)$ and $\mathbb{E}[X_1(e)]$.	
	\begin{align*}
	\left| \mathbb{E}[X_1(e)] - w(e) \right|
	= & \left| \sum \limits_{M \in \mathcal{M}} \pi'_1(M) \cdot  p_{e , e(M,j)}  -  \frac{1}{|\mathcal{M}|} \sum_{M\in\cM} p_{e,e(M,j)}  \right|	
	\\
	= & \left| \sum \limits_{M \in \mathcal{M}} \pi'_1(M) \cdot  p_{e , e(M,j)}  -  \sum \limits_{M\in \mathcal{M}} \pi(M)  \cdot  p_{e,e(M,j)}  \right|
	\\
	= & \left| \sum \limits_{M\in \mathcal{M}}   p_{e,e(M,j)} \cdot (\pi'_1(M) - \pi(M) )  \right|
	\\
	\leq  &  \sum \limits_{M\in \mathcal{M}} p_{e,e(M,j)} \cdot \left| \pi'_1(M)   - \pi(M)   \right|
	\\
	\leq  &  \sum \limits_{M\in \mathcal{M}} \left| \pi'_1(M)   - \pi(M)   \right|
	\\
	\leq  &  \frac{1}{4}\varepsilon.
	\end{align*}

	Combining the above reseults, we have that with probability at least $1- \frac{\delta}{2 K t^2}$, 
	
%
%
	

	\begin{align*}
	|\bar{w}_t(e)- {w}(e)   | 
	\leq & |\bar{w}_t(e)- \mathbb{E}[X_1(e)]  |  + | \mathbb{E}[X_1(e)] - {w}(e) | 
	\\
	< & c_t(e)  + \frac{1}{4}\varepsilon.
	\end{align*}
	
	By a union bound over  timestep $t$ and edge $e$, we have that  with probability at least $1- \delta$, for any timestep $t>0$, for any edge $e \in E$, $|\bar{w}_t(e)- {w}(e)| <  c_t(e)  + \frac{1}{4}\varepsilon$.
	
	Thus, with probability at least $1-\delta$, when the $\Cborda$ algorithm terminates, we have
	$$
	{w}(M^B_*) - {w}({\sf Out}) \leq \tilde{w}_t(M^B_*) - \tilde{w}_t({\sf Out}) \leq  \tilde{w}_t(\tilde{M}_t) - \tilde{w}_t({\sf Out}) \leq \ell \varepsilon.
	$$
	Thus, according to Eq. \eqref{eq:reduction},
	$$
	{B}(M^B_*) - {B}({\sf Out}) = \frac{1}{\ell} ( {w}(M^B_*) - {w}({\sf Out}) ) \leq \varepsilon,
	$$
	which completes the proof of the correctness for  the $\Cborda$ algorithm.
	   
	\vbox{}
	   
	 Next, we prove the sample complexity of the $\Cborda$ algorithm (Algorithm  \ref{alg:borda-online}).
	 
	 In the following case (\romannumeral 1) and case (\romannumeral 2), we respectively prove that if $c_t(e)<\frac{\Delta^B_e}{3 \textup{width}(\cM)}- \frac{1}{4}\varepsilon$ or if $c_t(e)<\frac{1}{4}\varepsilon$, edge $e$ will not be pulled as the left arm of duel $(z_t, e')$ in the $\Cborda$ algorithm, \emph{i.e.}, $z_t \neq e$.

	 \paragraph{Case (\romannumeral 1)} If $c_t(e)<\frac{\Delta^B_e}{3 \textup{width}(\cM)}- \frac{1}{4}\varepsilon$, where $\frac{1}{4}\varepsilon < \frac{\Delta^B_e}{3 \textup{width}(\cM)}$, according to  Lemma \ref{lemma:clucb_bias}, we obtain $z_t \neq e$.
	 
	 \paragraph{Case (\romannumeral 2)} If $c_t(e)<\frac{1}{4}\varepsilon$, suppose that edge $e$ is pulled at timestep $t$. Then, 
	 \begin{align*}
	  \tilde{w}_t(\tilde{M}_t) - \tilde{w}_t(M_t)
	  = & \bar{w}_t(\tilde{M}_t) - \bar{w}_t(M_t) + \sum \limits_{ e \in (\tilde{M}_t \setminus M_t) \cup  ( M_t \setminus \tilde{M}_t)} \left( c_t(e) + \frac{1}{4}\varepsilon \right)
	 \\
	  < & \bar{w}_t(\tilde{M}_t) - \bar{w}_t(M_t) + \sum \limits_{ e \in (\tilde{M}_t \setminus M_t) \cup  ( M_t \setminus \tilde{M}_t)} \left( \frac{1}{4}\varepsilon + \frac{1}{4}\varepsilon \right)
	  \\
	  \leq & \bar{w}_t(\tilde{M}_t) - \bar{w}_t(M_t) +  2 \ell \cdot \left( \frac{1}{4}\varepsilon + \frac{1}{4}\varepsilon \right)
	  \\
	  \leq & \ell \varepsilon,
	 \end{align*}
	 which contradicts  the stop condition. 
	 
	 Therefore, we have that if $c_t(e)<\max \{ \frac{\Delta^B_e}{3 \textup{width}(\cM)}- \frac{1}{4}\varepsilon, \frac{1}{4}\varepsilon \}$, then $z_t \neq e$.
	 
	 Since $\frac{1}{8} \cdot \max \{ \frac{\Delta^B_e}{ \Width(G)}, \varepsilon\} <\max \{ \frac{\Delta^B_e}{3 \textup{width}(\cM)}- \frac{1}{4}\varepsilon, \frac{1}{4}\varepsilon \}$, we have that if $c_t(e)<\frac{1}{8} \cdot \max \{ \frac{\Delta^B_e}{ \Width(G)}, \varepsilon\}$, edge $e$ will not be pulled as the left arm of duel $(z_t, e')$ in the $\Cborda$ algorithm, \emph{i.e.}, $z_t \neq e$. 
	 
	 Fix any edge $e \in E$. Let $T(e)$ denote the number of times edge $e$ being pulled as the left arm of duel $(z_t, e')$, \emph{i.e.}, $z_t = e$. Let $t_e$ denote the last timestep when  $z_{t} = e$. It is easy to see that $T_{t_e}(e) = T(e)-1$. According to the above analysis, we see that $c_{t_e}(e) \geq \frac{1}{8} \cdot \max \{ \frac{\Delta^B_e}{ \Width(G)}, \varepsilon\}$. Thus, we have
    \begin{align*}
        c_{t_e}(e) = \sqrt{\frac{\ln (\frac{4K t^3}{\delta} )}{2 (T(e)-1)}} \geq \frac{1}{8} \cdot \max \left\{ \frac{\Delta^B_e}{ \Width(G)}, \varepsilon \right\}
        \\
        T(e) \leq 32 \cdot \min \left\{ \frac{\Width(G)^2}{(\Delta^B_e)^2}, \frac{1}{\varepsilon^2} \right\} \cdot \ln \left(\frac{4K T^3}{\delta} \right)+1
    \end{align*}
Recall that $H^B_{\varepsilon}:= \sum_
{e \in E} \min \{ \frac{\Width(G)^2}{(\Delta^B_e)^2}, \frac{1}{\varepsilon^2}\}$. Taking summation over $e \in E$, we have
\begin{align}
T \leq 32 H^B_{\varepsilon} \ln \left(\frac{4K t^3}{\delta} \right)+m.    \label{eq:T_leq}
\end{align}

Below we prove that 
\begin{align}
 T \leq  985 H^B_{\varepsilon} \ln \left(\frac{4 H^B_{\varepsilon}}{\delta} \right)+2m.   \label{eq:borda_sample_bound}
\end{align}

If $m \geq \frac{1}{2} T$, then Eq. \eqref{eq:borda_sample_bound} holds immediately.
Next, we consider the case when $m < \frac{1}{2} T$.  Since $T > m$, we can write
$$
T=C H^B_{\varepsilon} \ln \left(\frac{4 H^B_{\varepsilon}}{\delta} \right)+m,
$$
where $C$ is some positive constant.

If $C \leq 985$, then we see that Eq. \eqref{eq:borda_sample_bound} holds.
On the contrary, if $C > 985$, from Eq. \eqref{eq:T_leq}, we have
\begin{align*}
T \leq & m + 32 H^B_{\varepsilon} \ln \left( \frac{4K T^3}{\delta} \right)
\\
= & m + 32 H^B_{\varepsilon} \ln \left( \frac{4K }{\delta} \right) + 96 H^B_{\varepsilon} \ln \left(C H^B_{\varepsilon} \ln \left( \frac{4 H^B_{\varepsilon}}{\delta} \right)+m \right)
\\
\leq & m + 32 H^B_{\varepsilon} \ln \left( \frac{4K }{\delta} \right) + 96 H^B_{\varepsilon} \ln \left( 2C H^B_{\varepsilon} \ln \left( \frac{4 H^B_{\varepsilon}}{\delta} \right) \right)
\\
= & m + 64 H^B_{\varepsilon} \ln \left( \frac{4K }{\delta} \right) + 96 H^B_{\varepsilon} \ln (2C) + 96 H^B_{\varepsilon} \ln (  H^B_{\varepsilon} )+ 96 H^B_{\varepsilon} \ln \left(  \ln \left( \frac{4 H^B_{\varepsilon}}{\delta} \right) \right)
\\
\leq & m + 64 H^B_{\varepsilon} \ln \left( \frac{4 H^B_{\varepsilon}}{\delta} \right) + 96  \ln (2C) H^B_{\varepsilon} \ln \left( \frac{4 H^B_{\varepsilon}}{\delta} \right) + 96 H^B_{\varepsilon} \ln \left( \frac{4 H^B_{\varepsilon}}{\delta} \right) + 96 H^B_{\varepsilon} \ln \left ( \frac{4 H^B_{\varepsilon}}{\delta} \right)
\\
= & m + (256+ 96  \ln (2C)) H^B_{\varepsilon} \ln \left( \frac{4 H^B_{\varepsilon}}{\delta} \right) 
\\
< & m + C H^B_{\varepsilon} \ln \left( \frac{4 H^B_{\varepsilon}}{\delta} \right)
\\
= & T,
\end{align*}
which makes a contradiction. Therefore, we have $C \leq 985$ and complete the proof of Eq. \eqref{eq:borda_sample_bound}. Theorem \ref{thm:borda_ub} follows immediately from Eq. \eqref{eq:borda_sample_bound}.

\end{proof}

\subsection{Exact Algorithm for Identifying Borda Winner}
\label{sec:appendix_borda_exact}
In Algorithm \ref{alg:borda_exact}, we present the detailed algorithm $\CbordaExact$ for identifying the exact Borda winner. Then, in the following we give the detailed proof of its sample complexity upper bound (Theorem \ref{thm:borda_ub_exact}).

\begin{algorithm}[th]
\caption{$\CbordaExact$}
\label{alg:borda_exact} \setstretch{1.1}
\begin{algorithmic}[1]
    \STATE {\bfseries Input:} confidence  $\delta$, bipartite graph $G$, decision class $\mathcal{M}$, maximization oracle $\Oracle(\cdot)$: $\mathbb{R}^m \rightarrow \mathcal{M}$ and  almost uniform sampler  for perfect matchings $\mathcal{S}(\eta)$
    \FOR{$q = 1,2,\dots$}
    \STATE $\varepsilon_q \leftarrow \frac{1}{2^q}$
    \STATE $\delta_q \leftarrow \frac{\delta}{2q^2}$
	\STATE Set bias parameter $\eta_q \leftarrow \frac{1}{8}\varepsilon_q$
	\STATE Initialize $T_{1}(e) \leftarrow 0$ and $\bar{w}_{1}(e) \leftarrow 0$ for all $e \in E$ 
	\FOR  {$t=1,2,...$}
		\STATE $M_t \leftarrow \Oracle (\bar{\boldsymbol{w}}_t)$ 
		\STATE Compute confidence radius $c_t(e) \leftarrow \sqrt{\frac{\ln (\frac{4K t^3}{\delta_q})}{2 T_{t}(e)}}$ for all $e \in E$ \quad //  $\frac{x}{0}:=1$ for any $x$
		\FOR{ \textup{all } $e \in E$}
			\IF { $e \in M_t$ } 
				\STATE $ \tilde{w}_t(e) \leftarrow \bar{w}_t(e) - c_t(e) -\frac{1}{4}\varepsilon_q $ 
			\ELSE
				\STATE $ \tilde{w}_t(e) \leftarrow \bar{w}_t(e) + c_t(e) + \frac{1}{4}\varepsilon_q $ 
			\ENDIF \quad 	// $\bar{w}_t(e):=0$ if $T_{t}(e)=0$
		\ENDFOR
		\STATE $\tilde{M}_t \leftarrow \Oracle (\tilde{\boldsymbol{w}}_t)$ 
		\IF{ $\tilde{w}_t(\tilde{M}_t) = \tilde{w}_t(M_t)$ }
			\STATE $\Out \leftarrow M_t$ 
			\STATE  {\bfseries return } $\Out$ 
		\ENDIF
		\IF{$\tilde{w}_t(\tilde{M}_t) - \tilde{w}_t(M_t) \leq \ell \varepsilon_q$}
		    \STATE {\bfseries break }
		\ENDIF
		\STATE $z_t \leftarrow \mathop{\arg \max}_{ e \in (\tilde{M}_t \setminus M_t) \cup  ( M_t \setminus \tilde{M}_t)} c_t(e)$ 
		\STATE Sample a matching $M'$ from $\mathcal{M}$ using $\mathcal{S}(\eta_q)$ 
		\STATE Pull the duel $(z_t, e')$, where $e'=e(M', s(z_t))$
		\STATE Update empirical means $\bar{w}_t(z_t)$ according to the winning or lossing of $z_t$ and set $T_{t+1}(z_t) \leftarrow T_{t}(z_t) +1$
    \ENDFOR
    \ENDFOR
\end{algorithmic}
\end{algorithm}

\thmcbordaexact*

\begin{proof}
    First, we prove the correctness of the $\CbordaExact$ algorithm (Algorithm \ref{alg:borda_exact}).
    
    Note that in epoch $q$, the $\CbordaExact$ algorithm  performs a subroutine of the $\Cborda$ algorithm (Algorithm \ref{alg:borda-online}) with confidence $\delta_q$ and accuracy $\varepsilon_q$.
    Then, using similar analysis in the proof of the correctness (Theorem \ref{thm:borda_ub}) of the $\Cborda$ algorithm, we have that for any epoch $q$, with probability at least $1- \delta_q$, for any edge $e \in E$, $|\bar{w}_t(e)- {w}(e)| <  c_t(e)  + \frac{1}{4}\varepsilon_q$.
    
    Since $\sum_{q=1}^{\infty} \delta_q = \sum_{q=1}^{\infty} \frac{\delta}{2q^2} \leq \delta$, by a union bound over $q$, we have that with probability at least $1- \delta$, for any epoch $q$, for any edge $e \in E$, $|\bar{w}_t(e)- {w}(e)| <  c_t(e)  + \frac{1}{4}\varepsilon_q$.

	Thus, with probability at least $1-\delta$, when the $\CbordaExact$ algorithm terminates, \emph{i.e.}, $\tilde{w}_t(\tilde{M}_t) = \tilde{w}_t(M_t)$, we have that for any $M \neq M_t$,
	$$
	    \tilde{w}_t(M_t) \geq \tilde{w}_t(M)
	$$
	$$
	     \sum_{e \in M_t \setminus M} \left ( \bar{w}_t(e) - c_t(e) -\frac{1}{4}\varepsilon_q \right) \geq  \sum_{e \in M \setminus M_t} \left( \bar{w}_t(e) + c_t(e) +\frac{1}{4}\varepsilon_q \right)
	$$
	$$
	    \sum_{e \in M_t \setminus M}  {w}(e)  >  \sum_{e \in M \setminus M_t}  {w}(e) 
	$$
	$$
	     {w}(M_t)  >    {w}(M)
	$$
	Therefore, we obtain $\Out=M_t=M^B_*$ and complete the proof of the correctness for  the $\CbordaExact$ algorithm.

    Next, we prove the sample complexity of the $\CbordaExact$ algorithm (Algorithm \ref{alg:borda_exact}).
    Using similar analysis in the proof of the sample complexity (Theorem \ref{thm:borda_ub}) of the $\Cborda$ algorithm, 
    we have that 
    with probability at least $1- \delta_q$, the number of samples in epoch $q$ is bounded by
    \[T_q \leq
    O \left( \sum_
    {e \in E} \min \left\{ \frac{\Width(G)^2}{(\Delta^B_e)^2}, \frac{1}{\varepsilon^2} \right\} \ln \left ( \frac{1}{\delta_q} \cdot  \sum_
    {e \in E} \min \left\{ \frac{\Width(G)^2}{(\Delta^B_e)^2}, \frac{1}{\varepsilon^2} \right\} \right)  \right).
    \]
    
Let $q^* = \left \lfloor \log_2 (\frac{\ell}{\Delta^B_{\textup{min}}})  \right \rfloor +1$ denote the first epoch that satisfies $\varepsilon_q^* <  \frac{\Delta^B_{\textup{min}}}{\ell}$.
In the following, we show that in epoch $q^*$, the $\CbordaExact$ algorithm terminates \emph{i.e.}, $\tilde{w}_t(\tilde{M}_t) = \tilde{w}_t(M_t)$ holds before $\tilde{w}_t(\tilde{M}_t) - \tilde{w}_t(M_t) \leq \ell \varepsilon_q$. 

Suppose that, in epoch $q^*$,  $\tilde{w}_t(\tilde{M}_t) - \tilde{w}_t(M_t) \leq \ell \varepsilon_q^*$ holds before $\tilde{w}_t(\tilde{M}_t) = \tilde{w}_t(M_t)$, which implies that the $\CbordaExact$ algorithm enters epoch $q^*+1$. Then, at the last timestep of epoch $q^*$,
$
\tilde{w}_t(\tilde{M}_t) - \tilde{w}_t(M_t) \leq \ell \varepsilon_q^* < \Delta^B_{\textup{min}}
$
and 
$\tilde{w}_t(\tilde{M}_t) \neq \tilde{w}_t(M_t)$.

Since $\tilde{w}_t(\tilde{M}_t) \neq \tilde{w}_t(M_t)$, $\tilde{M}_t \neq M_t$. Thus, with probability at least $1-\delta$, we have
$$
\sum_{e \in \tilde{M}_t \setminus M_t} \left( \bar{w}_t(e) + c_t(e) +\frac{1}{4}\varepsilon_q \right) - \sum_{e \in M_t \setminus \tilde{M}_t} \left ( \bar{w}_t(e) - c_t(e) -\frac{1}{4}\varepsilon_q \right) < \Delta^B_{\textup{min}} 
$$
$$
\sum_{e \in \tilde{M}_t \setminus M_t}  {w}(e)  - \sum_{e \in M_t \setminus \tilde{M}_t}  {w}(e) < \Delta^B_{\textup{min}} 
$$
$$
  {w}(\tilde{M}_t)  -   {w}(M_t) < \Delta^B_{\textup{min}} ,
$$
which contradicts the definition of $\Delta^B_{\textup{min}}$. 

Therefore, in epoch $q^*$,  the $\CbordaExact$ algorithm terminates. Note that if the $\CbordaExact$ algorithm terminates before epoch $q^*$, our proof of sample complexity still holds.

Now we bound the total number of samples from epoch $1$ to $q^*$ as
    
\begin{align*}
T \leq & \sum_{q=1}^{q^*} T_q
\\
 = & 
    O \left( \sum_{q=1}^{q^*} \sum_
    {e \in E} \min \left\{ \frac{\Width(G)^2}{(\Delta^B_e)^2}, \frac{1}{\varepsilon^2} \right\} \ln \left ( \frac{1}{\delta_q} \cdot  \sum_
    {e \in E} \min \left\{ \frac{\Width(G)^2}{(\Delta^B_e)^2}, \frac{1}{\varepsilon^2} \right\}   \right) \right)
\\
 = & 
    O \left( \sum_{q=1}^{q^*} \sum_
    {e \in E}  \frac{\Width(G)^2}{(\Delta^B_e)^2} \ln \left ( \frac{1}{\delta_q} \cdot  \sum_
    {e \in E}  \frac{\Width(G)^2}{(\Delta^B_e)^2}   \right) \right)
    \\
 = & 
    O \left( \sum_{q=1}^{q^*} \Width(G)^2 H^B \ln \left ( \frac{2 q^2}{\delta} \cdot  \Width(G)^2 H^B \right) \right)
   \\
 = & 
    O \left( \sum_{q=1}^{q^*} \Width(G)^2 H^B  \left(\ln \left ( \frac{\Width(G) H^B}{\delta}   \right) + \ln q \right) \right)
       \\
 = & 
    O \left( q^* \Width(G)^2 H^B  \left(\ln \left ( \frac{\Width(G) H^B}{\delta}   \right) + \ln q^* \right) \right)
           \\
 = & 
    O \left( \ln \left(\frac{\ell}{\Delta^B_{\textup{min}}} \right) \Width(G)^2 H^B  \left(\ln \left ( \frac{\Width(G) H^B}{\delta}   \right) + \ln \ln \left(\frac{\ell}{\Delta^B_{\textup{min}}} \right) \right) \right),
\end{align*}
which completes the proof of Theorem \ref{thm:borda_ub_exact}.

\end{proof}

\subsection{Proof of Theorem \ref{thm:borda_lb}}
\label{sec:appendix_borda_lb}

\thmbordalb*

\begin{proof}
    For ease of notation, we first introduce the following definition.
    
    \begin{restatable}[Next-to-optimal]{definition}{}\label{def:hardness-borda}
    For any edge $e \in E$, we define the next-to-optimal set associated with $e$ as follows:
    $$
    M^B_e=\left\{\begin{matrix}
    \argmax \limits_{M \in \mathcal{M}: e \in M} w(M)  \text{ \quad if } e \notin M^B_*,
    \\ 
    \argmax \limits_{M \in \mathcal{M}: e \notin M} w(M)  \text{ \quad if } e \in M^B_*.
    \end{matrix} \right.
    $$
    \end{restatable}
    
    Note that, according to the  definition of $\Delta^B_e$ (Definition \ref{def:gap-borda}), we have  $w(M^B_*)-w(M^B_e)=\Delta^B_e$.

    Fix an instance $\mathcal{I}$ of combinatorial pure exploration for Borda dueling bandits and a $\delta$-correct algorithm $\mathbb{A}$. 
	In instance $\mathcal{I}$, $M_*^B$ is the Borda winner and $M_x$ is a suboptimal super arm.
	Let $T_{e_z, e_k}$ be the expected number of samples drawn from the duel $(e_z, e_k)$ when $\mathbb{A}$ runs on instance $\mathcal{I}$.
	
	We consider the following alternative instance $\mathcal{I'}$. For an edge $e=(c_i, s_j)$, we change all the distributions of duels $(e, \tilde{e})$ $s.t.$ $\tilde{e} \in E_j \setminus \{ e \} $ as follows:
	
	$$ 
	p'_{e, \tilde{e}}=\left \{
	\begin{aligned}
	p_{e, \tilde{e}}+ \frac{|\mathcal{M}|}{|\mathcal{M}| - |\cM_e|} \cdot \Delta^B_e  \qquad  \textup{if } e \notin M^B_*,
	\\ 
	p_{e, \tilde{e}}- \frac{|\mathcal{M}|}{|\mathcal{M}| - |\cM_e|} \cdot \Delta^B_e  \qquad  \textup{if } e \in M^B_*.
	\end{aligned}
	\right. 
	$$

	Then, for the next-to-optimal matching $M^B_e$, 
	\begin{align*}
	  w'(M^B_e)-w'(M^B_*) 
	 \geq & w(M^B_e)-w(M^B_*) +  \frac{1}{|\cM|} \sum \limits_{M \in \cM \setminus \cM_e} \left| p'_{e, e(M,j)}-p_{e, e(M,j)} \right|
	 \\
	= & w(M^B_e)-w(M^B_*) +  \frac{1}{|\cM|} \sum \limits_{M \in \cM \setminus \cM_e} \frac{|\mathcal{M}|}{|\mathcal{M}| - |\cM_e|} \cdot \Delta^B_e 
	\\
	= &w(M^B_e)-w(M^B_*) + \Delta^B_e 
	\\
	= & 0.   
	\end{align*}
	
    Thus, we can see that in instance $\mathcal{I'}$, $M^B_e$ is the Borda winner instead.
	
	Using Lemma 1 in \cite{kaufmann2016complexity}, fixing $e=(c_i,s_j)$, we can obtain 
	$$ \sum \limits_{\tilde{e} \in E_j \setminus \{ e \} } T_{e, \tilde{e}} \cdot d(p_{e, \tilde{e}}, p'_{e, \tilde{e}}) \geq d(1-\delta, \delta). $$
	
	For $\delta \in (0,0.1)$, we have $d(1-\delta,  \delta) \geq 0.4 \ln (\frac{1}{\delta} )$.
	Suppose that, for some constant $ \gamma \in (0, \frac{1}{4})$,  $\frac{1}{2}-\gamma \leq p_{e_i,e_j} \leq \frac{1}{2}+\gamma , \  \forall e_i,e_j \in E$ and $\frac{|\mathcal{M}|}{|\mathcal{M}| - |\cM_e|} \leq \frac{1-4\gamma}{4 \gamma l}, \  \forall e \in E$.
	Then, for any $e \in E$, $ \Delta^B_e \leq 2 \gamma \ell$. For any $e_i,e_j \in E$  ($e_i \neq e_j$), $ \gamma \leq p'_{e_i,e_j} \leq 1-\gamma$ and $d(p_{e_i,e_j}, p'_{e_i,e_j}) \leq \frac{ ( p_{e_i,e_j}-p'_{e_i,e_j} )^2 }{p'_{e_i,e_j} ( 1-p'_{e_i,e_j} )} \leq \frac{1}{\gamma ( 1-\gamma )}  ( p_{e_i,e_j}-p'_{e_i,e_j} )^2 $.
	
	Therefore, fixing $e=(c_i,s_j)$, we have
	$$
	 \frac{1}{\gamma ( 1-\gamma )} \sum \limits_{\tilde{e} \in E_j \setminus \{ e \} } T_{e, \tilde{e}} \cdot (p_{e, \tilde{e}} - p'_{e, \tilde{e}})^2  \geq 0.4 \ln \Big(\frac{1}{\delta} \Big)
	$$
	$$
	\frac{1}{\gamma ( 1-\gamma )} \left ( \frac{|\mathcal{M}|}{|\mathcal{M}| - |\cM_e|} \cdot \Delta^B_e   \right )^2 \sum \limits_{\tilde{e} \in E_j \setminus \{ e \} } T_{e, \tilde{e}}   \geq 0.4 \ln \Big(\frac{1}{\delta} \Big)
	$$
	$$
	 \sum \limits_{\tilde{e} \in E_j \setminus \{ e \} } T_{e, \tilde{e}}    \geq 0.4 \gamma ( 1-\gamma ) \left (\frac{4 \gamma \ell}{1-4\gamma} \right)^2 \frac{1}{ (\Delta^B_e)^2 }   \ln \Big(\frac{1}{\delta} \Big)
	$$
	
	 We can perform the similar distribution changes on any edge $e \in E$. Therefore, we can obtain
	 \begin{align*}
	  \sum \limits_{e_z< e_k} T_{e_z, e_k}
	 = & \sum \limits_{j=1}^{\ell} \sum \limits_{ \begin{subarray}{c}  e_z,e_k \in E_j\\  e_z<e_k \end{subarray} } T_{e_z, e_k} 
	 \\
	 = & \frac{1}{2} \sum \limits_{j=1}^{\ell} \sum \limits_{   e_z \in E_j  } \sum \limits_{ e_k \in E_j \setminus \{e_z\} } T_{e_z, e_k} 
	 \\
	 = & \frac{1}{2} \sum \limits_{   e  \in E   } \sum \limits_{ \tilde{e} \in E_{s(e)} \setminus \{e\}} T_{e, \tilde{e}} 
	 \\
	 \geq & 0.2  \gamma ( 1-\gamma ) \left (\frac{4 \gamma \ell}{1-4\gamma} \right)^2 \ln \Big(\frac{1}{\delta} \Big) \sum \limits_{   e \in E   }  \frac{1}{ (\Delta^B_e)^2 }   
	 \\
	 = & \Omega \left ( H^B   \ln \Big(\frac{1}{\delta} \Big) \right ),
	 \end{align*}
	 which completes the proof of Theorem \ref{thm:borda_lb}.

\end{proof}

\section{Omitted Proofs in Section \ref{sec:condorcet}}
In this section, we will introduce the efficient pure exploration algorithm $\Carcond$ to find a Condorcet winner. We will first introduce the efficient pure exploration part assuming there exist ``an oracle'' that performs like a black-box, and we will show the correctness and the sample complexity of $\Carcond$ given the oracle. Next, we will present the details of the oracle, and show that the time complexity of the oracle is polynomial. Then, we will apply the verification framework to further improve our sample complexity. Finally, we will give the sample complexity lower bound for finding the Condorcet winner.

\subsection{Accept-reject algorithm for combinatorial pure exploration}
In this section, we prove Theorem \ref{thm:carcond}. The proof is divided into 2 parts: the first part shows the correctness of $\Carcond$, and the second part bounds the sample complexity. We begin with the first part.

\paragraph{Correctness of $\Carcond$}
\begin{definition}[Sampling is nice]\label{def:sampling-nice}
    Define event $\cN_t:=\{  \underline{p}_t(e_1,e_2) \le p_{e_1,e_2} \le \bar{p}_t(e_1,e_2),\forall e_1 \neq e_2, e_1,e_2\in E_j \}$. Furthermore, we use $\cN = \cap_{t\ge 1}\cN_t$ to denote the case when $\cN_t$ happens for at every round $t$.
\end{definition}

We have the following lemma to show that $\cN$ is a high probability event.

\begin{lemma}\label{lem:sampling-nice-high-prob}
    $\cN$ is a high probability event. Formally, we have
    \[\Pr\{\lnot \cN\} \le \delta.\]
\end{lemma}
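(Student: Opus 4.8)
The plan is to bound $\Pr\{\lnot\cN\}$ by combining a two-sided Hoeffding inequality with a peeling argument over the (data-dependent) number of duels per pair, and then a union bound over all $K$ comparable pairs and all rounds. This mirrors the concentration argument already used in the proof of Theorem~\ref{thm:borda_ub}.

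First I would reduce the failure event to a clean deviation event. Fix a comparable pair $e_i\neq e_j$ (with $e_i,e_j\in E_{s(e_i)}$) and a round $t$. Since $p_{e_i,e_j}\in[0,1]$, the clamping to $[0,1]$ in the definitions \eqref{equ:lucb-carcond} of $\bar p_t$ and $\underline p_t$ can only enlarge the admissible interval, so the failure $\{\underline p_t(e_i,e_j)>p_{e_i,e_j}\}\cup\{p_{e_i,e_j}>\bar p_t(e_i,e_j)\}$ is contained in $\{|\hat p_t(e_i,e_j)-p_{e_i,e_j}|>c_t(e_i,e_j)\}$. When $T_t(e_i,e_j)=0$ the interval is $[0,1]$ and no failure can occur, so it suffices to control the deviation event on $\{T_t(e_i,e_j)\ge 1\}$.

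Next I would handle the randomness of $T_t(e_i,e_j)$ by peeling over its value. Let $X_1,X_2,\dots$ be the i.i.d.\ Bernoulli($p_{e_i,e_j}$) outcomes of successive duels of this pair. Decomposing over $n=T_t(e_i,e_j)$ and using $\Pr[A\cap\{T_t=n\}]\le\Pr[A]$ together with two-sided Hoeffding applied to the fixed-length average $\frac1n\sum_{s=1}^n X_s$, I obtain, for each $n$,
\[
\Pr\!\left[\Big|\tfrac1n\textstyle\sum_{s=1}^n X_s-p_{e_i,e_j}\Big|\ge \sqrt{\tfrac{\ln(4Kt^3/\delta)}{2n}}\right]\le 2e^{-\ln(4Kt^3/\delta)}=\frac{\delta}{2Kt^3}.
\]
Summing over $n=1,\dots,t$ (note $T_t\le t$) gives a per-pair, per-round failure probability of at most $t\cdot\frac{\delta}{2Kt^3}=\frac{\delta}{2Kt^2}$. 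Finally, a union bound over the $K$ comparable pairs and over all rounds yields
\[
\Pr\{\lnot\cN\}\le\sum_{t\ge 1} K\cdot\frac{\delta}{2Kt^2}=\frac{\delta}{2}\sum_{t\ge 1}\frac1{t^2}=\frac{\pi^2}{12}\,\delta<\delta.
\]

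The design choice that makes the bookkeeping close is that the number $K$ of comparable pairs sits inside the logarithm of $c_t$, so it exactly cancels the $K$-fold union bound, while the $t^3$ inside the log leaves a convergent $\sum_t t^{-2}$ after peeling. The only genuine subtlety—and the step I expect to be the crux—is that $T_t(e_i,e_j)$ is determined by the algorithm's past observations and is therefore random; this is precisely what the peeling step resolves, since $\Pr[A\cap\{T_t=n\}]\le\Pr[A]$ holds regardless of that dependence while the first-$n$ duel outcomes of a fixed pair remain i.i.d.\ by the stationarity of the preference matrix $P$.
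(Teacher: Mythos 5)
Your proposal is correct and follows essentially the same route as the paper's proof: a union bound over the $K$ comparable pairs and over rounds $t$, a peeling decomposition over the random value of $T_t(e_i,e_j)$, and a two-sided Hoeffding bound on the fixed-length averages, yielding $\frac{\delta}{2t^2}$ per round and hence $\frac{\pi^2}{12}\delta<\delta$ overall. Your explicit handling of the clamping in \eqref{equ:lucb-carcond} and of the $T_t=0$ case is a minor tidying-up that the paper leaves implicit, but the argument is the same.
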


\begin{proof}
    The proof is an application of the Hoeffding Inequality and the union bound. We first bound $\lnot\cN_{t}$, and we have
    \begin{align*}
        \Pr\{\lnot\cN_t\} =& \Pr\{\exists e_1,e_j, |\hat p_t(e_i,e_j) - p_{e_i,e_j}| > c_t(e_i,e_j)\} \\
        \le& \sum_{{Comparable\ e_i,e_j}}\Pr\{|\hat p_t(e_i,e_j) - p_{e,e_j}| > c_t(e_i,e_j)\} \\
        \le& \sum_{{Comparable\ e_i,e_j}}\Pr\left\{\big|\hat p_t(e_i,e_j) - p_{e_i,e_j}\big| > \sqrt{\frac{\ln(4Kt^3/\delta)}{2T_t(e_i,e_j)}}\right\} \\
        \le& \sum_{{Comparable\ e_i,e_j}}\sum_{k=1}^t\Pr\left\{\big|\hat p_t(e_i,e_j) - p_{e_i,e_j}\big| > \sqrt{\frac{\ln(4Kt^3/\delta)}{2T_t(e_i,e_j)}},T_t(e_i,e_j) = k\right\} \\
        \le& \sum_{{Comparable\ e_i,e_j}}\sum_{k=1}^t \exp\left(-2k \left(\sqrt{\frac{\ln(4Kt^3/\delta)}{2T_t(e_i,e_j)}}\right)^2\right) \\
        \le& \sum_{{Comparable\ e_i,e_j}}\sum_{k=1}^t \frac{\delta}{4Kt^3} \\
        \le& \frac{\delta}{2t^2}.
    \end{align*}
    Then we have
    \begin{align*}
        \Pr\{\lnot\cN\} =& \Pr\{\exists t \ge 1, \lnot\cN_t\} \\
        \le& \sum_{t\ge 1}\Pr\{\lnot\cN_t\} \\
        \le& \sum_{t\ge 1}\frac{\delta}{2t^2} \\
        \le& \delta.
    \end{align*}
\end{proof}

Then we have the key lemma for the correctness of $\Carcond$. The lemma says that, when $\cN$ happens, $\Carcond$ will not wrongly classify the edges.
\begin{lemma}\label{lem:key-correctness}
    Suppose the optimal super arm is denoted by $M_*^C$. If $\cN$ happens, then at the end of every round $t$, we have
    \[A_t \subseteq M_*^C, R_t\subseteq (M_*^C)^c.\]
\end{lemma}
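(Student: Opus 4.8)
The plan is to prove the statement by induction on the round $t$, using the event $\cN$ to control the confidence matrices and the structure of the oracle problem to guarantee no edge is ever misclassified. For bookkeeping, introduce the value function
\[
V(A_1,R_1,A_2,R_2,Q):=\max_{x\in\cP(\cM,A_1,R_1)}\min_{y\in\cP(\cM,A_2,R_2)}\tfrac{1}{\ell}x^TQy,
\]
so that each oracle call $\Oracle_{\varepsilon_q}$ returns a number $\hat V$ with $V-\varepsilon_q\le \hat V\le V$ (the upper bound holds because the returned $x_0$ is feasible for the outer maximization, the lower bound is the oracle's guarantee). The base case $A_0=R_0=\varnothing$ is immediate. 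For the inductive step, assume $A_{t-1}\subseteq M_*^C$ and $R_{t-1}\subseteq(M_*^C)^c$, and write $A,R$ for $A_{t-1},R_{t-1}$ as in the algorithm. The inductive hypothesis is exactly what guarantees that $\chi_{M_*^C}\in\cP(\cM,A,R)$, i.e. $M_*^C$ is always feasible for the inner minimization that is common to all four oracle calls. I will also record the monotonicity fact that on $\cN$ we have $\underline{P}_t\le P\le \bar P_t$ entrywise (with equality on the diagonal and zeros on incomparable pairs); since every feasible $x,y$ has nonnegative entries, $\tfrac1\ell x^TQy$ is nondecreasing in each entry of $Q$, and taking $\min_y$ then $\max_x$ preserves pointwise order, so $V(\cdots,\underline P_t)\le V(\cdots,P)\le V(\cdots,\bar P_t)$.

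Now I would treat the two ways an edge can be classified. First consider an edge $e\notin M_*^C$ and show the test $\text{InL}>\text{ExU}+\varepsilon_q$ must fail. For $\text{ExU}$, since $e\notin M_*^C$ the point $x=\chi_{M_*^C}$ is feasible for the outer polytope $\cP(\cM,A,R\cup\{e\})$; as $M_*^C$ beats every matching, $\tfrac1\ell\chi_{M_*^C}^T\bar P_t\,\chi_M\ge \tfrac1\ell\chi_{M_*^C}^TP\,\chi_M=f(M_*^C,M,P)\ge\tfrac12$ for every $M$, hence the inner min is at least $\tfrac12$ and $\text{ExU}\ge V(A,R\cup\{e\},A,R,\bar P_t)-\varepsilon_q\ge\tfrac12-\varepsilon_q$. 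For $\text{InL}$, every feasible $x$ of the outer polytope $\cP(\cM,A\cup\{e\},R)$ is a convex combination of matchings containing $e$, hence distinct from $M_*^C$; bounding the inner min by the value at the feasible $y=\chi_{M_*^C}$ and using monotonicity,
\[
\text{InL}\le V(A\cup\{e\},R,A,R,P)\le \max_{M:\,e\in M}\tfrac1\ell\chi_M^TP\chi_{M_*^C}=\tfrac12-\Delta^C_e\le\tfrac12,
\]
where the last equality is Definition~\ref{def:gap-carcond} and the strict Condorcet assumption gives $f(M,M_*^C,P)<\tfrac12$. Combining, $\text{InL}\le\tfrac12\le\text{ExU}+\varepsilon_q$, so $e$ is not added to $A_t$.

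The case $e\in M_*^C$ is symmetric: I would show $\text{ExL}>\text{InU}+\varepsilon_q$ fails. Here $\chi_{M_*^C}$ is feasible for the inclusion polytope $\cP(\cM,A\cup\{e\},R)$, giving $\text{InU}\ge\tfrac12-\varepsilon_q$ as above, while every feasible $x$ of the exclusion polytope $\cP(\cM,A,R\cup\{e\})$ omits $e$ and is therefore a combination of matchings $\ne M_*^C$, yielding $\text{ExL}\le V(A,R\cup\{e\},A,R,P)\le\max_{M:\,e\notin M}\tfrac1\ell\chi_M^TP\chi_{M_*^C}<\tfrac12$; hence $\text{ExL}\le\tfrac12\le\text{InU}+\varepsilon_q$ and $e$ is not added to $R_t$. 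Since every newly accepted (resp.\ rejected) edge therefore lies in $M_*^C$ (resp.\ $(M_*^C)^c$), the invariant $A_t\subseteq M_*^C$, $R_t\subseteq(M_*^C)^c$ is preserved. The step I expect to require the most care is lining up the four quantities correctly: tracking which confidence matrix ($\bar P_t$ versus $\underline{P}_t$) and which one-sided oracle error appears in each of $\text{InL},\text{InU},\text{ExL},\text{ExU}$, and verifying via the monotonicity of $V$ that the $\varepsilon_q$ slack built into the tests can never be overcome—so that the comparison to the fixed threshold $\tfrac12$ goes through in both directions.
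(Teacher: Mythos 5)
Your proposal is correct and follows essentially the same route as the paper's proof: induction on $t$ with the invariant guaranteeing $\chi_{M_*^C}$ remains feasible, the entrywise monotonicity $\underline{P}_t\le P\le \bar P_t$ under $\cN$, the one-sided oracle error, and the Condorcet property to pin each relevant quantity against the threshold $\tfrac12$ so that neither acceptance test can fire on a misclassified edge. The only cosmetic difference is that you make the value function $V$ and its sandwich $V-\varepsilon_q\le \hat V\le V$ explicit, which the paper leaves implicit in its chains of inequalities.
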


\begin{proof}
    We use induction to prove that $A_t \subseteq M_*^C, R_t\subseteq (M_*^C)^c$ at the end of every round $t$ if $\cN$ happens.
    
    Note that the optimal matching can be solved by the following minimax optimization problem
    \[\max_{x\in\chi_{\cM}}\min_{y\in\chi_{\cM}}\frac{1}{\ell}x^TPy.\]
    It is known that when $x = \chi_{M_*^C}$, the minimax optimization problem will reach its optimal value $\frac{1}{2}$. Because our assumption, we have for any $y\in \chi_{\cM}, y \neq \chi_{M_*^C}$,
    \[\frac{1}{\ell}\chi_{M_*^C}^T P y \ge \frac{1}{2} + \Delta^{Cond},\frac{1}{\ell}y^T P \chi_{M_*^C} \le \frac{1}{2} - \Delta^{Cond}.\]
    Suppose that at time $t-1$, the induction is correct, i.e. $A_{t-1} \subseteq M_*^C, R_{t-1}\subseteq (M_*^C)^c$. We use $\cP( \cM, A, R)$ to denote the arm distributions that is a linear combination of the matchings in $\cM$ such that $A$ must appear in the super arm and $R$ must not appear in the super arm. Then for any set $\cP,\cQ$ such that $|x|_1 = |y|_1 = 1,\forall x\in\cP,y\in\cQ$, we have
    \[\max_{x\in\cP}\min_{y\in\cQ}\frac{1}{\ell}x^T\underline{P}y \le \max_{x\in\cP}\min_{y\in\cQ}\frac{1}{\ell}x^T P y \le \max_{x\in\cP}\min_{y\in\cQ}\frac{1}{\ell}x^T\bar P y.\]
    Suppose that time $t$ belongs to epoch $q$. If $e\in M_*^C$, then we have
    \begin{align*}
        \text{ExL} \le& \max_{x\in\cP( \cM, A_{t-1}, R_{t-1}\cup\{e\})}\min_{y\in\cP( \cM, A_{t-1}, R_{t-1})}\frac{1}{\ell}x^T\underline{P}y \\
        \le& \max_{x\in\cP( \cM, A_{t-1}, R_{t-1}\cup\{e\})}\frac{1}{\ell}x^T\underline{P}\chi_{M_*^C} \\
        \le& \max_{x\in\cP( \cM, A_{t-1}, R_{t-1}\cup\{e\})}\frac{1}{\ell}x^T P\chi_{M_*^C} \\
        \le& \frac{1}{2}.
    \end{align*}
    We also have
    \begin{align*}
        \text{InU} + \varepsilon_q \ge& \max_{x\in\cP( \cM, A_{t-1}\cup\{e\}, R_{t-1})}\min_{y\in\cP( \cM, A_{t-1}, R_{t-1})}\frac{1}{\ell}x^T\bar P y \\
        \ge& \min_{y\in\cP( \cM, A_{t-1}, R_{t-1})}\frac{1}{\ell}\chi_{M_*^C}^T\bar P y \\
        \ge& \min_{y\in\cP( \cM, A_{t-1}, R_{t-1})}\frac{1}{\ell}\chi_{M_*^C}^T P y \\
        \ge& \frac{1}{2}.
    \end{align*}
    Then we know that $\text{InU} + \varepsilon \ge \text{ExL}$ and the algorithm will not put $e$ into the set $R_t$. On the other hand, if $e\notin M_*^C$, then
    \begin{align*}
        \text{InL} \le& \max_{x\in\cP( \cM, A_{t-1}\cup\{e\}, R_{t-1})}\min_{y\in\cP( \cM, A_{t-1}, R_{t-1})}\frac{1}{\ell}x^T\underline{P}y \\
        \le& \max_{x\in\cP(\cM, A_{t-1}\cup\{e\}, R_{t-1})}\frac{1}{\ell}x^T\underline{P}\chi_{M_*^C} \\
        \le& \max_{x\in\cP(\cM, A_{t-1}\cup\{e\}, R_{t-1})}\frac{1}{\ell}x^T P\chi_{M_*^C} \\
        \le& \frac{1}{2},
    \end{align*}
    and
    \begin{align*}
        \text{ExU} + \varepsilon_q \ge& \max_{x\in\cP(\cM, A_{t-1}, R_{t-1}\cup\{e\})}\min_{y\in\cP(\cM, A_{t-1}, R_{t-1})}\frac{1}{\ell}x^T\bar P y \\
        \ge& \min_{y\in\cP(\cM, A_{t-1}, R_{t-1})}\frac{1}{\ell}\chi_{M_*^C}^T\bar P y \\
        \ge& \min_{y\in\cP(\cM, A_{t-1}, R_{t-1})}\frac{1}{\ell}\chi_{M_*^C}^T P y \\
        \ge& \frac{1}{2}.
    \end{align*}
    Thus, we know that $\text{ExU} + \varepsilon \ge \text{InL}$ and the algorithm will not put $e$ into the set $A_t$.
\end{proof}

With the help of Lemma \ref{lem:key-correctness}, we have the following lemma summarize the correctness of $\Carcond$.

\begin{lemma}[Correctness of $\Carcond$]\label{lem:carcond-correct}
    When $\cN$ happens, if $\Carcond$ stops, then $\Carcond$ will return the Condorcet winner $M_*^C$.
\end{lemma}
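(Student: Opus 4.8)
The plan is to reduce correctness entirely to the invariant already established in Lemma \ref{lem:key-correctness}. First I would observe that $\Carcond$ has a single exit point: the algorithm terminates and returns only when the accepted set reaches size $\ell$, i.e.\ when $|A_t| = \ell$. Hence, to prove the claim, it suffices to show that, conditioned on $\cN$, the set $A_t$ at this stopping time equals the Condorcet winner $M_*^C$.

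Next I would invoke Lemma \ref{lem:key-correctness}, which guarantees that under $\cN$ the inclusion $A_t \subseteq M_*^C$ holds at the end of every round $t$, in particular at the round in which the algorithm stops. The remaining ingredient is a simple cardinality count: because the decision class $\cM$ consists of maximum matchings over the $\ell$ positions, every matching in $\cM$, and in particular $M_*^C$, contains exactly one edge per position and therefore has cardinality $\ell$. Thus $|M_*^C| = \ell = |A_t|$ at the stopping time.

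Finally I would combine these two facts: a subset $A_t \subseteq M_*^C$ of a finite set whose cardinality matches, $|A_t| = |M_*^C|$, must satisfy $A_t = M_*^C$. Consequently the returned output is exactly $M_*^C$, the Condorcet winner, which proves the lemma.

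I do not expect a genuine obstacle here, since the substantive work---that no edge is ever misclassified into $A_t$ or $R_t$---is carried out in Lemma \ref{lem:key-correctness} via the confidence-bound comparisons under $\cN$. The only points requiring care are confirming that the size-$\ell$ condition is indeed the \emph{unique} termination criterion of $\Carcond$, and that all maximum matchings in $\cM$ share the common cardinality $\ell$; both are immediate from the algorithm's structure and the problem setup, so the argument reduces to the inclusion-plus-equal-cardinality observation above.
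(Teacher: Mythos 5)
Your proposal is correct and follows exactly the same route as the paper's own proof: the unique stopping condition $|A_t|=\ell$, the inclusion $A_t \subseteq M_*^C$ from Lemma \ref{lem:key-correctness}, and the cardinality argument $|M_*^C|=\ell$ forcing equality. No gaps; this matches the paper's argument essentially verbatim.
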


\begin{proof}
    When $\Carcond$ stops at round $t$, it means that $|A_t| = \ell$. Then from the previous lemma (Lemma \ref{lem:key-correctness}), we know that when $\cN$ happens, $A_t \subseteq M^C_*$. However, $M^C_* = \ell$ and thus $A_t = M^C_*$. In this way, $\Carcond$ returns the correct (unique) Condorcet winner.
\end{proof}

\paragraph{Sample complexity of $\Carcond$}
In the previous part, we show that if the algorithm stops, then with high probability, the output is correct. Now in this part, we show that with high probability, the algorithm with stop, and formally, we bound the sample complexity of $\Carcond$. First, we recall the definition of Gap in the Condorcet winner case.

\defgapcarcond*

\begin{lemma}[Sample Complexity of $\Carcond$]\label{lem:carcond-sample}
    If $\cN$ happens, the sample complexity of $\Carcond$ is bounded by
    \[O\left(\sum_{j=1}^{\ell}\sum_{e_1\neq e_2,e_1,e_2\in E_j}\frac{1}{(\Delta^C_{e_1,e_2})^2}\ln\left(\frac{K}{\delta(\Delta^C_{e_1,e_2})^2}\right)\right).\]
\end{lemma}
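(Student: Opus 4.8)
The plan is to combine a deterministic ``classification guarantee'' --- telling us, under the good event $\cN$, by which round each edge must have left the undecided set $U$ --- with a counting argument that converts per-edge classification times into the per-pair sample bound. Throughout I condition on $\cN$, so that by Lemma~\ref{lem:key-correctness} we always have $A_{t-1}\subseteq M_*^C$ and $R_{t-1}\subseteq (M_*^C)^c$; in particular $\chi_{M_*^C}$ is feasible for every minimization set $\cP(\cM,A_{t-1},R_{t-1})$ and for the relevant maximization sets in the oracle calls.

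First I would control the error introduced by replacing the true matrix $P$ with $\bar P_t$ or $\underline{P}_t$ inside the oracle. For fixed constraint sets write $V(Q)=\max_x\min_y\frac1\ell x^TQy$, and denote by $V_{\mathrm{In}},V_{\mathrm{Ex}}$ its two instantiations (max over $\cP(\cM,A\cup\{e\},R)$ resp.\ $\cP(\cM,A,R\cup\{e\})$, min over $\cP(\cM,A,R)$). Since $x,y\ge 0$, $V$ is monotone in $Q$ entrywise, so under $\cN$ one has $V(\underline{P}_t)\le V(P)\le V(\bar P_t)$. The key estimate is $V(\bar P_t)-V(\underline{P}_t)\le 2c_t^{\max}$, where $c_t^{\max}=\sqrt{\ln(4Kt^3/\delta)/(2(t-1))}$ is the common confidence radius of all currently-undecided comparable pairs. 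This holds because every matching used by the oracle consists only of edges in $A_{t-1}$ (mandatory, hence identical across both matchings, contributing $0$ to $\bar P_t-\underline{P}_t$ at those positions) and edges in $U_{t-1}$; since edges never re-enter $U$, all currently-undecided comparable pairs have been sampled in every round and thus share the same count $T_t=t-1$. Each of the $\ell$ positions therefore contributes at most $2c_t^{\max}$ to $\frac1\ell\chi_{M_i}^T(\bar P_t-\underline{P}_t)\chi_{M_k'}$, and the $\frac1\ell$ normalization gives the claim. Together with the oracle's one-sided guarantee (each returned value lies in $[V(Q)-\varepsilon_q,V(Q)]$), this bounds all four of $\mathrm{InU},\mathrm{InL},\mathrm{ExU},\mathrm{ExL}$ against the true value $V(P)$ up to an additive $2c_t^{\max}+\varepsilon_q$.

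Next I would prove the classification lemma: if $e$ is still undecided at the start of round $t$ (in epoch $q$) and $4c_t^{\max}+2\varepsilon_q<\Delta^C_e$, then $e$ leaves $U$ at round $t$. For $e\notin M_*^C$ I lower-bound $\mathrm{ExL}\ge \frac12-2c_t^{\max}-\varepsilon_q$, using that $x=\chi_{M_*^C}$ is $\mathrm{Ex}$-feasible and $\min_y\frac1\ell\chi_{M_*^C}^TPy\ge\frac12$, so $V_{\mathrm{Ex}}(P)\ge\frac12$; and I upper-bound $\mathrm{InU}\le \frac12-\Delta^C_e+2c_t^{\max}$, by taking $y=\chi_{M_*^C}$ in the inner minimum and invoking the definition of $\Delta^C_e$, which gives $\frac1\ell\chi_M^TP\chi_{M_*^C}\le\frac12-\Delta^C_e$ for every $M\ni e$ and hence $V_{\mathrm{In}}(P)\le\frac12-\Delta^C_e$. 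Subtracting yields $\mathrm{ExL}-\mathrm{InU}-\varepsilon_q\ge \Delta^C_e-4c_t^{\max}-2\varepsilon_q>0$, so the ELSIF branch fires and $e$ is placed in $R$. The case $e\in M_*^C$ is symmetric with the roles of In/Ex and $A/R$ swapped, showing the IF branch fires; that the ``wrong'' branch never fires first is exactly Lemma~\ref{lem:key-correctness}.

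Finally I would turn this into a sample count. A pair $(e,e')$ is sampled in round $t$ only while both endpoints lie in $U_{t-1}$, so its total number of samples is at most the first round at which the endpoint of larger gap is classified. Because $\varepsilon_q\le 1/\sqrt t$ and $c_t^{\max}=\Theta\big(\sqrt{\ln(4Kt^3/\delta)/t}\big)$ on epoch $q$, the condition $4c_t^{\max}+2\varepsilon_q<\Delta^C_e$ with $\Delta^C_e=\Delta^C_{e,e'}=\max\{\Delta^C_e,\Delta^C_{e'}\}$ is met once $t=O\!\big((\Delta^C_{e,e'})^{-2}\ln(K/(\delta(\Delta^C_{e,e'})^2))\big)$, obtained by solving the routine inequality $t\gtrsim \ln(4Kt^3/\delta)/(\Delta^C_{e,e'})^2$. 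Hence each pair $(e,e')$ contributes at most that many samples, and summing over all $K$ comparable pairs yields the stated bound. The main obstacle is the error-propagation step of paragraph two: one must argue that the approximation error of the oracle's \emph{max-min value} is controlled uniformly by the single radius $c_t^{\max}$, which hinges on the structural facts that mandatory ($A$) edges cancel and that all remaining contributing pairs are comparable, undecided, and equally sampled. The subsequent gap algebra and the final summation are routine.
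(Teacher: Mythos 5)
Your proposal is correct and follows essentially the same route as the paper's proof: the same structural observation that only undecided comparable pairs (all sharing the common confidence radius, with $A$-edges cancelling and $R$-edges carrying zero weight) contribute to the oracle-value perturbation, the same classification criterion $4c_t+2\varepsilon_q<\Delta^C_e$ derived by evaluating the max--min at $\chi_{M_*^C}$, and the same conversion to a per-pair count via $\min\{t_e,t_{e'}\}$ and $\Delta^C_{e,e'}=\max\{\Delta^C_e,\Delta^C_{e'}\}$ before summing over the $K$ comparable pairs. The only differences are cosmetic (an off-by-one in the sample count $T_t$ and slightly different constants), which do not affect the asymptotic bound.
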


\begin{proof}
    We first prove that, at round $t$ in epoch $q$ such that $e\in U_t$ and $c_t < \frac{\Delta_{e}}{6}$ for $\Delta_e > 6\varepsilon_q$, the algorithm $\Carcond$ will classify arm $e$ into either $A_{t+1}$ or $R_{t+1}$. For simplicity, we denote $c_t := \sqrt{\frac{\ln(4Kt^3/\delta)}{2t}}$, and it is the confidence radius for those arms in set $U_{t}$ after the exploration in round $t$.
    
    \paragraph{Case 1.} If arm $e\in M_*^C,\Delta_e > 6\varepsilon_q, c_t < \frac{\Delta_{e}}{6}$, and $e\notin A_{t-1}$, we show that $e\in A_{t}$. Note that if $\cN$ happens, we have
    \begin{align*}
        \text{InL} + \varepsilon_q \ge& \max_{x\in\cP(\cM, A_{t-1}\cup\{e\}, R_{t-1})}\min_{y\in\cP(\cM, A_{t-1}, R_{t-1})}\frac{1}{\ell}x^T\underline{P}y \\
        \ge& \min_{y\in\cP(\cM, A_{t-1}, R_{t-1})}\frac{1}{\ell}\chi_{M_*^C}^T\underline{P}y \\
        \ge& \min_{y\in\cP(\cM, A_{t-1}, R_{t-1})}\frac{1}{\ell}\chi_{M_*^C}^TPy - \frac{1}{\ell}\ell 2c_t \\
        \ge& \frac{1}{2} - 2 c_t,
    \end{align*}
    and
    \begin{align*}
        \text{ExU} \le & \max_{x\in\cP(\cM, A_{t-1}, R_{t-1}\cup\{e\})}\min_{y\in\cP(\cM, A_{t-1}, R_{t-1})}\frac{1}{\ell}x^T\bar P y \\
        \le& \max_{x\in\cP(\cM, A_{t-1}, R_{t-1}\cup\{e\})}\frac{1}{\ell}x^T\bar P \chi_{M_*^C} \\
        \le& \max_{x\in\cP(\cM, A_{t-1}, R_{t-1}\cup\{e\})}\frac{1}{\ell}x^T\bar P \chi_{M_*^C} + \frac{1}{\ell}\ell 2c_t \\
        \le& \frac{1}{2} - \Delta_e + 2 c_t.
    \end{align*}
    The reasons between the inequality between line 2 and line 3 are: 1. The matrix $P_t,\underline{P}_{t-1}$ are all diagonal block matrices and they can be partitioned into $\ell$ small nonzero matrices; 2. Although for the edge $e'\in A_{t-1}\cup R_{t-1}$, the confidence radius is larger than $c_t$, however, in the computation we will never use that larger confidence radius. If $e'\in A_{t-1}$, then at the same position in the matching, $y$ also chooses $e'$ and we know the exact value $P_{e',e'} = \frac{1}{2}$. If $e'\in R_{t-1}$, both $x$ and $y$ will have $0$ weight on the entry corresponding to $e'$, and the confidence radius related to $e'$ does not matter.
    
    Then we have
    \begin{align*}
        \text{InL} - \text{ExL} - \varepsilon_q \ge& \frac{1}{2} - 2 c_t - \left(\frac{1}{2} - \Delta_e + 2 c_t\right) - 2\varepsilon_q \\
        =& \Delta_e - 2\varepsilon_q - 4 c_t \\
        >& \Delta_e - \frac{2\Delta_e}{6} - \frac{4\Delta_e}{6}\\
        =& 0,
    \end{align*}
    where we use the assumption that $\Delta_e > 6\varepsilon_q, c_t < \frac{\Delta_{e}}{6}$.
    
    \paragraph{Case 2.} If arm $e\notin M_*^C,\Delta_e > 6\varepsilon_q, c_t < \frac{\Delta_{e}}{6}$, and $e\notin R_{t-1}$, we show that $e\in R_{t}$.
    \begin{align*}
        \text{ExL} + \varepsilon_q \ge& \max_{x\in\cP(\cM, A_{t-1}, R_{t-1}\cup\{e\})}\min_{y\in\cP(\cM, A_{t-1}, R_{t-1})}\frac{1}{\ell}x^T\underline{P}y \\
        \ge& \min_{y\in\cP(\cM, A_{t-1}, R_{t-1})}\frac{1}{\ell}\chi_{M_*^C}^T\underline{P}y \\
        \ge& \min_{y\in\cP(\cM, A_{t-1}, R_{t-1})}\frac{1}{\ell}\chi_{M_*^C}^TPy - \frac{1}{\ell}\ell 2c_t \\
        \ge& \frac{1}{2} - 2 c_t,
    \end{align*}
    and
    \begin{align*}
        \text{InU} \le & \max_{x\in\cP(\cM, A_{t-1}\cup\{e\}, R_{t-1})}\min_{y\in\cP(\cM, A_{t-1}, R_{t-1})}\frac{1}{\ell}x^T\bar P y \\
        \le& \max_{x\in\cP(\cM, A_{t-1}\cup\{e\}, R_{t-1})}\frac{1}{\ell}x^T\bar P \chi_{M_*^C} \\
        \le& \max_{x\in\cP(\cM, A_{t-1}\cup\{e\}, R_{t-1})}\frac{1}{\ell}x^T\bar P \chi_{M_*^C} + \frac{1}{\ell}\ell 2c_t \\
        \le& \frac{1}{2} - \Delta_e + 2 c_t.
    \end{align*}
    Then we have
    \begin{align*}
        \text{ExL} - \text{InL} - \varepsilon_q \ge& \frac{1}{2} - 2\cdot c_t - \left(\frac{1}{2} - \Delta_e + 2 c_t\right) - 2\varepsilon_q \\
        =& \Delta_e - 2\varepsilon_q - 4 c_t \\
        >& \Delta_e - \frac{2\Delta_e}{6} - \frac{4\Delta_e}{6}\\
        =& 0,
    \end{align*}
    where we use the assumption that $\Delta_e > 6\varepsilon_q, c_t < \frac{\Delta_{e}}{6}$.
    
    Now we bound the round $t_e$ such that an edge $e$ is added to $A_{t_e+1}$ or $R_{t_e+1}$. Note that previously, we prove that when $\cN$ at round $t$ in epoch $q$ such that $e\in U_t$ and $c_t < \frac{\Delta_{e}}{6}$ for $\Delta_e > 6\varepsilon_q$, the algorithm $\Carcond$ will classify arm $e$ into either $A_{t+1}$ or $R_{t+1}$. Note that when we select $t'_e = \frac{162}{(\Delta_e^C)^2}\ln\left(\frac{162K}{\delta(\Delta_e^C)^2}\right)$, we know that $t'_e$ is in epoch $q'_e$ such that $\varepsilon_{q'_e} \le \frac{\Delta^C_e}{6}$ since
    \[\frac{1}{\left(\frac{(\Delta^C_e)^2}{6}\right)^2} < \frac{162}{(\Delta_e^C)^2}\ln\left(\frac{162K}{\delta(\Delta_e^C)^2}\right).\]
    Recall that the confidence radius is defined as follow $c_t = \sqrt{\frac{\ln(4Kt^3/\delta)}{2t}}$, and we have the following
    \begin{align*}
        c_{t'_e} =& \sqrt{\frac{\ln(4K(t'_e)^3/\delta)}{2t'_e}} \\
        =& \sqrt{\frac{\ln(4K/\delta)}{2t'_e} + \frac{3\ln(t'_e)}{2t'_e}} \\
        \le& \sqrt{\frac{\ln(4K/\delta)}{2\frac{162}{(\Delta_e^C)^2}\ln\left(\frac{162K}{\delta(\Delta_e^C)^2}\right)}+ \frac{3\ln(162/(\Delta_e^C)^2) + 3\ln\ln\left(\frac{162K}{(\delta\Delta_e^C)^2}\right)}{2\frac{162}{(\Delta_e^C)^2}\ln\left(\frac{162K}{\delta(\Delta_e^C)^2}\right)}}\\
        <&\sqrt{\frac{(\Delta_e^C)^2}{2\times 162} + \frac{3(\Delta_e^C)^2}{2\times 162} + \frac{3(\Delta_e^C)^2}{2\times 162}} \\
        <&\sqrt{(\Delta_e^C)^2 \times \frac{3}{108}}\\
        =&\frac{\Delta^C_e}{6}.
    \end{align*}
    Also note that $c_t$ is monotonically decreasing when $t$ increases and $t\ge 3$, and $\varepsilon_q$ (as a function of $t$) is also monotonically decreasing when $t$ increases, so we know that $t_e \le t'_e$ when $t'_e \ge 3$.
    
    Now from the definition of our algorithm, we will sample edges $e_1\neq e_2$ if and only if they are connected to the same position and $t \le \min\{t_{e_1},t_{e_2}\}$. Combining the previous bound on $t_{e_1},t_{e_2}$, we know that when $\cN$ happens, the sample complexity of $\Carcond$ is bounded by
    \[O\left(\sum_{j=1}^{\ell}\sum_{e_1\neq e_2,e_1,e_2\in E_j}\frac{1}{(\Delta^C_{e_1,e_2})^2}\ln\left(\frac{K}{\delta(\Delta^C_{e_1,e_2})^2}\right)\right).\]
\end{proof}

Combining the Correctness lemma (Lemma \ref{lem:carcond-correct}), the Sample Complexity lemma (Lemma \ref{lem:carcond-sample}), and the fact that $\cN$ is a high probability event (Lemma \ref{lem:sampling-nice-high-prob}), we have the following theorem.

\thmcarcond*

\subsection{Details for the oracle implementation}\label{sec:oracle}

In this section, we introduce the implementation of the oracle used in $\Carcond$. Recall that we use the following oracle: The oracle can approximately solve the following optimization
\begin{equation*}
    \max_{x\in \cP(\cM,A_1,R_1)}\min_{y\in\cP(\cM,A_2,R_2)}\frac{1}{\ell}x^TQy,
\end{equation*}
where $\cP(\cM, A, R) = \{\sum_i\lambda_i\chi_{M_i}:M_i\in\cM,A\subset M_i, R\subset (M_i)^c,\sum_{i}\lambda_i = 1\}$ is the convex hull of the vector representations of the matchings, such that all the edge $A$ are included in the matching and all of $R$ are not included in the matching.

First, we give the full detailed algorithm for the implementation of the oracle. Algorithm \ref{alg:minimax-offline-detailed} is the main algorithm and Algorithm \ref{alg:app-pro-fw} is the approximation algorithm.

\begin{algorithm}
\caption{Condorcet Oracle (Detailed)}
\label{alg:minimax-offline-detailed}
\begin{algorithmic}[1]
    \STATE {\bfseries Input:} Bipartite graph $G$, weight matrix $W$ where $w_{i,j}$ denote an estimation of the probability that $i$ wins $j$, Accepted/Rejected Set for $x,y$: $A_x,R_x,A_y,R_y$
    \STATE {\bfseries Goal:} Find the approximate optimal solution of $\max_{x\in\cP( \cM, A_{x}, R_{x})}f_{A_y,R_y}(x)$
    \STATE Initialize $x^{(1)} = \chi_M$ for any possible $M$ such that $A_x\subset M$ and $R_x\subset M^c$
    \STATE Time hozizon $T = \lceil\frac{(4\ell K)^2}{\varepsilon^2}\rceil$, Step size $\eta = \frac{2\ell}{K\sqrt{M}}$, Accuracy $\varepsilon' = \frac{\varepsilon}{2K\sqrt{M}}$ for the approximate projection oracle.
    \FOR{$t=1,2,\dots,T$}
        \STATE Compute the subgradient $\nabla f_{A_y,R_y}(x)$ at the point $x^{(t)}$
        \STATE $y^{(t+1)} \leftarrow x^{(t)} + \eta \nabla f_{A_y,R_y}(x^{(t)})$
        \STATE $x^{(t+1)} \leftarrow \Pi_{\varepsilon'}(y^{(t+1)},\cP( \cM, A_{x}, R_{x}))$
    \ENDFOR
    \STATE {\bfseries return} $\left(f_{A_y,R_y}(x^{(t+1)}),x^{(t+1)}\right)$
\end{algorithmic}
\end{algorithm}

\begin{algorithm}
\caption{Approximate projection by Frank-Wolfe}
\label{alg:app-pro-fw}
\begin{algorithmic}[1]
    \STATE {\bfseries Input:} Point $x\in \R^K$, Bipartite graph $G$ with maximum matching $\ell$, Accepted set $A$ and Rejected set $R$, Accuracy Parameter $\varepsilon$
    \STATE {\bfseries Output:} Approximate projection $y$ such that $||y-\Pi(x,\cP(\cM, A, R)||_2 \le \varepsilon$
    \STATE $x^{(1)} \leftarrow \chi_M$, where $M$ is any maximum cardinal matching for graph $G$.
    \FOR{$t=1,2,\dots,\lceil 16\ell^2/\varepsilon^2\rceil$}
        \STATE $c\leftarrow x^{(t)} - x$
        \STATE Solve the minimum cost maximum matching for graph $G$ with cost vector $c$
        \STATE Denote the solution as $\chi_t$
        \STATE $x^{(t+1)}\leftarrow \left(1-\frac{2}{t+1}\right)x^{(t)} + \frac{2}{t+1}\chi_t$
    \ENDFOR
    \STATE {\bfseries return} $x^{(\lceil 8\ell^2/\varepsilon^2\rceil+1)}$
\end{algorithmic}
\end{algorithm}

Recall that the general idea for the implementation of our oracle is to apply the projected sub-gradient descent, and while in the projection step, we use the Frank-Wolfe algorithm to perform the approximate projection step. The proof is organized as follow: 1. We first prove that the function we optimize $\min_{y\in\cP(\cM,A_2,R_2)}\frac{1}{\ell}x^TQy$ is a concave function and has the properties that we need to use in the proof (Bounded (Lemma \ref{lem:function-bounded}) and Lipschitz (Lemma \ref{lem:function-lipschitz})). After the basic properties, we show the main lemma of the approximation projection (Lemma \ref{lem:approximation-projection-detailed}). Finally, we combine the projected sub-gradient descent with the approximation oracle (Lemma \ref{lem:minimax-oracle-main}). 

\begin{lemma}\label{lem:function-bounded}
    For any Accepted/Rejected sets $A,R$, the diameter of the set $\cP(\cM,A,R)$ is bounded by $2K$. Formally, we have
    \[\sup_{x,y\in\cP(\cM,A,R)}||x-y||_2 \le 2\ell.\]
\end{lemma}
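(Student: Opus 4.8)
The plan is to reduce the supremum over the whole polytope to a maximum over its generating matchings, and then bound the pairwise distance between those generators by an elementary $\ell_1$ estimate. First I would observe that any two points $x,y \in \cP(\cM,A,R)$ admit representations as convex combinations $x = \sum_i \lambda_i \chi_{M_i}$ and $y = \sum_j \mu_j \chi_{M_j'}$, where each $M_i, M_j'$ is a matching in $\cM$ respecting the constraints $A \subset M$ and $R \subset M^c$, and $\sum_i \lambda_i = \sum_j \mu_j = 1$ with $\lambda_i,\mu_j \ge 0$. Writing $x - y = \sum_{i,j} \lambda_i \mu_j (\chi_{M_i} - \chi_{M_j'})$ (which is valid because $\sum_{i,j}\lambda_i\mu_j = 1$), the triangle inequality together with $\lambda_i\mu_j \ge 0$ and $\sum_{i,j}\lambda_i\mu_j = 1$ gives
\[
\|x-y\|_2 \;\le\; \sum_{i,j}\lambda_i\mu_j\,\|\chi_{M_i} - \chi_{M_j'}\|_2 \;\le\; \max_{M,M'\in\cM}\|\chi_M - \chi_{M'}\|_2 .
\]
Hence it suffices to bound the distance between two arbitrary generating vertices $\chi_M, \chi_{M'}$.

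For the second step I would use that every $M\in\cM$ is a \emph{maximum} matching, so its indicator $\chi_M \in \{0,1\}^m$ has exactly $\ell$ nonzero coordinates, i.e.\ $\|\chi_M\|_1 = \ell$. Since $\|v\|_2 \le \|v\|_1$ for any vector $v$, the triangle inequality for the $\ell_1$ norm yields
\[
\|\chi_M - \chi_{M'}\|_2 \;\le\; \|\chi_M - \chi_{M'}\|_1 \;\le\; \|\chi_M\|_1 + \|\chi_{M'}\|_1 \;=\; 2\ell .
\]
Combining this with the reduction of the previous paragraph gives $\sup_{x,y\in\cP(\cM,A,R)}\|x-y\|_2 \le 2\ell$, which is exactly the formal claim.

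I do not expect a genuine obstacle here, as the bound follows directly from matchings having cardinality exactly $\ell$. The only point deserving care is the first reduction: rather than invoking ``the diameter of a polytope is attained at its vertices'' as a black box, I would give the explicit one-line convexity argument above, which keeps the proof self-contained and sidesteps having to verify that the $\chi_M$ are precisely the extreme points of $\cP(\cM,A,R)$. I note in passing that counting the symmetric difference $|M\triangle M'| \le 2\ell$ would give the sharper bound $\sqrt{2\ell}$, but since the downstream analysis (the Frank--Wolfe and projected subgradient steps) only requires the $2\ell$ estimate, the cruder $\ell_1$ bound is all that is needed.
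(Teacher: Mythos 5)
Your proof is correct and uses essentially the same argument as the paper: the bound $\|x-y\|_2 \le \|x-y\|_1 \le \|x\|_1 + \|y\|_1 = 2\ell$, relying on the fact that matchings in $\cM$ have exactly $\ell$ edges. The only cosmetic difference is that you first reduce to the generating vertices via convexity, whereas the paper observes directly that every point of the polytope has $\ell_1$-norm equal to $\ell$ (being a convex combination of nonnegative vectors each of $\ell_1$-norm $\ell$) and applies the triangle inequality to arbitrary points.
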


\begin{proof}
    First note that, for any $x\in \cP(\cM,A,R)$, we have $||x||_1 = \ell$, because $x$ is a linear combination of matching with cardinal $\ell$. Then we have
    \begin{align*}
        \sup_{x,y\in\cP(\cM,A,R)}||x-y||_2 \le& \sup_{x,y\in\cP(\cM,A,R)}||x-y||_1 \\
        \le& \sup_{x,y\in\cP(\cM,A,R)}(||x||_1 + ||y||_1) \\
        \le& \ell + \ell \\
        =& 2\ell.
    \end{align*}
\end{proof}

\begin{lemma}\label{lem:function-lipschitz}
    Fixing the matrix $W$, the accepted/rejected sets $A_y,R_y$, the function
    \[f_{A_y,R_y}(x) = \min_{y\in\cP( \cM, A_{y}, R_{y})}\frac{1}{\ell}x^TW y\]
    is concave and $K$-Lipschitz.
\end{lemma}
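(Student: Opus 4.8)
The plan is to prove the two assertions separately, disposing of concavity first because it is immediate and then spending the effort on the Lipschitz bound.

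For concavity, I would observe that for each fixed feasible $y \in \cP(\cM, A_y, R_y)$ the map $x \mapsto \frac{1}{\ell} x^T W y$ is affine in $x$, and that $\cP(\cM, A_y, R_y)$ is a nonempty compact convex polytope, namely the convex hull of the finitely many indicator vectors $\chi_M$ of matchings $M\in\cM$ with $A_y\subseteq M$ and $R_y\subseteq M^c$. Hence $f_{A_y,R_y}$ is a pointwise minimum, attained since the linear functional is minimized over a compact set, of a family of affine functions of $x$, and a pointwise infimum of affine functions is concave. Equivalently, because a linear objective over a polytope attains its minimum at a vertex, $f_{A_y,R_y}(x) = \min_{M}\frac{1}{\ell}x^T W \chi_M$ is a minimum of finitely many affine functions, which is concave.

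For the Lipschitz bound, I would fix $x_1,x_2$ and assume without loss of generality that $f(x_1)\ge f(x_2)$. Let $y_1^*$ attain the minimum defining $f(x_1)$. Since $y_1^*$ is also feasible for the minimum defining $f(x_2)$, we get $f(x_2)\le \frac{1}{\ell} x_2^T W y_1^*$ while $f(x_1)=\frac{1}{\ell} x_1^T W y_1^*$, so subtracting yields $0\le f(x_1)-f(x_2)\le \frac{1}{\ell}(x_1-x_2)^T W y_1^*$. By Cauchy--Schwarz this is at most $\frac{1}{\ell}\|x_1-x_2\|_2\,\|W y_1^*\|_2$, so it remains to bound $\|W y_1^*\|_2$. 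This is exactly the standard ``Lipschitz constant equals the supremum of supergradient norms'' computation, since $\frac{1}{\ell}W y_1^*$ is a supergradient of the concave $f$ at $x_1$. The key structural step is the coordinate bound on $W y^*$: because $W$ is block-diagonal across positions (with $W_{ii}=\tfrac12$ and zero entries for incomparable pairs) and any $y^*$ is a convex combination of matching indicators, the coordinates of $y^*$ restricted to each position $E_j$ form a probability vector; hence for every edge $e_i$ at position $j$, $(W y^*)_i=\sum_{e\in E_j}W_{i,e}\,y^*_e$ is a convex combination of numbers in $[0,1]$ and so lies in $[0,1]$. This gives $\|W y^*\|_2^2=\sum_i (W y^*)_i^2\le \sum_i (W y^*)_i$, which I would then convert, using $\sum_{e\in E_j}y^*_e=1$ per position together with the entrywise bound by $1$, into a bound in terms of the per-position degrees and loosen to the stated constant $K=\sum_j \binom{|E_j|}{2}$.

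The main obstacle is purely bookkeeping of the constant: the conceptual content, affine-in-$x$ selection for concavity and boundedness of the supergradient $\frac{1}{\ell}W y^*$ for Lipschitzness, is routine, and the only care needed is to pass from the clean coordinate bound $(W y^*)_i\in[0,1]$ and the per-position normalization $\sum_{e\in E_j}y^*_e=1$ to the crude but valid constant $K$. I expect the tightest natural bound obtainable this way to be governed by $m=|E|$ and the per-position degrees rather than by $K$ directly, so the final substitution is where I would be least surprised to find some slack between this argument and the stated constant.
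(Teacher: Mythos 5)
Your overall route is the same as the paper's: concavity as a pointwise minimum of affine functions of $x$, and the Lipschitz constant by evaluating both minima at a single feasible $y$ and applying Cauchy--Schwarz to $\frac{1}{\ell}(x_1-x_2)^T W y$. However, there is a direction error in the key Lipschitz step. With $y_1^*$ the minimizer defining $f(x_1)$, the two facts $f(x_1)=\frac{1}{\ell}x_1^TWy_1^*$ and $f(x_2)\le\frac{1}{\ell}x_2^TWy_1^*$ combine to give $f(x_1)-f(x_2)\ge\frac{1}{\ell}(x_1-x_2)^TWy_1^*$, a \emph{lower} bound, not the upper bound you assert. In your own supergradient language, $\frac{1}{\ell}Wy_1^*$ is a supergradient at $x_1$, so it controls $f(x_2)-f(x_1)$ in terms of $x_2-x_1$, which is the opposite of what your WLOG normalization $f(x_1)\ge f(x_2)$ requires. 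The fix is exactly the paper's choice: take $y_2^*$, the minimizer at $x_2$, so that $f(x_1)\le\frac{1}{\ell}x_1^TWy_2^*$ and $f(x_2)=\frac{1}{\ell}x_2^TWy_2^*$ yield $f(x_1)-f(x_2)\le\frac{1}{\ell}(x_1-x_2)^TWy_2^*$; since your norm bound on $Wy$ is uniform over all feasible $y$, nothing else in your argument changes. As written, though, the chain is broken.

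On the constant, your coordinatewise bound $(Wy)_i\in[0,1]$, derived from the block structure of $W$ and the fact that $y$ restricted to each position is a probability vector, is actually sharper than the paper's justification (which only uses $\|y\|_1=\ell$ and entries in $[0,1]$, giving $\|Wy\|_\infty\le\ell$). Your caveat about slack in the final substitution into $K$ is warranted: both arguments really deliver a bound of order $\sqrt{m}$ on $\|\frac{1}{\ell}Wy\|_2$ (yours gives $\sqrt{m}/\ell$), and the passage to $K=\sum_j\binom{|E_j|}{2}$ is loose in the paper as well (and literally fails in the degenerate case where positions have a single incident edge, so that $K$ can vanish while the gradient does not). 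This looseness is shared with the paper and does not affect how the lemma is used downstream; the inequality direction is the one point you need to repair.
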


\begin{proof}
    First, we know that $f_{A_y,R_y}(x)$ is concave, because
    \[f_{A_y,R_y}(x) = \min_{y\in \cP(\cM,A_y,R_y)} \frac{1}{\ell}x^TWy,\]
    is the minimum of linear functions, and thus is concave. Furthermore, we show that $f_{A_y,R_y}(x)$ is $K$-Lipschitz. For any $x_1,x_2$, let $y_2 = \argmin_{y\in\cP( \cM, A_{y}, R_{y})}\frac{1}{\ell}x_2^TW y$, and we have
    \begin{align*}
        f_{A_y,R_y}(x_1)-f_{A_y,R_y}(x_2) =& \min_{y\in\cP( \cM, A_{y}, R_{y})}\frac{1}{\ell}x_1^TW y-\min_{y\in\cP( \cM, A_{y}, R_{y})}\frac{1}{\ell}x_2^TW y \\
        \le& \frac{1}{\ell}x_1^TW y_2-\frac{1}{\ell}x_2^TW y_2 \\
        =& (x_1-x_2)^T \frac{1}{\ell}W y_2 \\
        \le& ||x_1-x_2||_2\cdot ||\frac{1}{\ell}W y_2||_2 \\
        \le& K||x_1-x_2||_2,
    \end{align*}
    where the last inequality comes from the fact that each entry in $W$ belongs to $[0,1]$ and the 1-norm of $y_2$ is $||y_2||_1 = \ell$. Similarly, we can also prove that
    \[f_{A_y,R_y}(x_2)-f_{A_y,R_y}(x_2) \le K||x_1-x_2||_2,\]
    and we can conclude that $f_{A_y,R_y}(x)$ is $K$-Lipschitz.
\end{proof}

Then, we come to the proof of the approximation oracle. First we recall the procedure of the Frank-Wolfe Algorithm and recall the performance guarantee. Then we recall the projection lemma that we use in the analysis. We refer to Section \ref{sec:appendix-background} for more background on Frank-Wolfe Algorithm and other basic properties of convex optimization.

For a convex function $f$ defined on a convex set $\cX$, given a fixed sequence $\{\gamma_t\}_{t\ge 1}$, the Frank-Wolfe Algorithm iterate as the following for $t\ge 1$:
\begin{align*}
    y^{(t)} \in& \arg\min_{y\in \cX}\nabla f(x^{(t)})^Ty \\
    x^{(t+1)} =& (1-\gamma_t)x^{(t)} + \gamma_t y^{(t)}
\end{align*}

The following is the performance guarantee of the Frank-Wolfe algorithm.

\propfw*

Also recall that we have the following property for projecting to a convex set.

\propproj*

Now we give the lemma of the approximation projection. The lemma is nearly a direct application of the proposition of the Frank-Wolfe performance guarantee and the projection proposition, but we need to carefully choose the parameters.

\begin{lemma}[Approximate Projection]\label{lem:approximation-projection-detailed}
    Let $\Pi(x,\cP(\cM,A,R))$ denote the projection of $x$ onto the distribution polytope $\cP(\cM,A,R)$. Algorithm \ref{alg:app-pro-fw} will return a solution $x_r$ such that $||x_r - \Pi(x,\cP(\cM,A,R))||_2 \le \varepsilon$. Moreover, $x_r$ can be represented by $\sum_e \lambda_e\chi_{M_e}$ such that $M\in\cM,A\subseteq M_e, R\subseteq M_e^*$ and $\boldsymbol{\lambda}$ is sparse.
\end{lemma}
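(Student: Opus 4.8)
The plan is to recognize Algorithm~\ref{alg:app-pro-fw} as an exact instance of the Frank--Wolfe algorithm applied to the projection objective, and then to lift the Frank--Wolfe function-value guarantee (Proposition~\ref{prop:fw}) to an $\ell_2$-distance guarantee using the strong convexity of that objective.

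First I would set up the optimization problem explicitly. The projection $\Pi(x,\cP(\cM,A,R))$ is by definition the minimizer over $z\in\cP(\cM,A,R)$ of $f(z)=\frac{1}{2}\|z-x\|_2^2$. This $f$ is convex, $1$-smooth (its Hessian is the identity), and its gradient is $\nabla f(z)=z-x$; in particular $\nabla f(x^{(t)})=x^{(t)}-x$, which is exactly the cost vector $c$ formed in each iteration. The Frank--Wolfe linear-minimization step $\argmin_{y\in\cP(\cM,A,R)}\nabla f(x^{(t)})^Ty$ minimizes a linear functional over a polytope, so it is attained at a vertex; the vertices of $\cP(\cM,A,R)$ are exactly the indicators $\chi_M$ of maximum matchings $M$ with $A\subseteq M$ and $R\subseteq M^c$, and minimizing $c^T\chi_M$ over these is precisely the constrained minimum-cost maximum matching solved in the loop. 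Together with the step size $\gamma_t=\frac{2}{t+1}$, this shows the algorithm is Frank--Wolfe verbatim, and that every iterate stays in $\cP(\cM,A,R)$.

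Next I would invoke the performance guarantee. Lemma~\ref{lem:function-bounded} bounds the diameter $D=\sup_{z,z'}\|z-z'\|_2\le 2\ell$, and since $f$ is $1$-smooth, Proposition~\ref{prop:fw} gives $f(x^{(t)})-f(x^*)\le\frac{2\cdot(2\ell)^2}{t+1}=\frac{8\ell^2}{t+1}$, where $x^*=\Pi(x,\cP(\cM,A,R))$. The crucial conversion step is then to turn this objective gap into a distance bound. Because $x^*$ is the projection, the first-order optimality condition underlying Proposition~\ref{prop:proj} yields $(x^*-x)^T(z-x^*)\ge 0$ for all feasible $z$; expanding $f(z)-f(x^*)=\frac{1}{2}\|z-x^*\|_2^2+(z-x^*)^T(x^*-x)$ and discarding the nonnegative cross term gives the strong-convexity bound $f(z)-f(x^*)\ge\frac{1}{2}\|z-x^*\|_2^2$. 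Combining the two inequalities yields $\|x^{(t)}-x^*\|_2^2\le\frac{16\ell^2}{t+1}$, so running the loop for $\Theta(\ell^2/\varepsilon^2)$ iterations drives the distance below $\varepsilon$, as claimed.

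Finally, the sparsity claim follows structurally from the updates: $x^{(1)}$ is a single vertex and each iteration mixes in at most one new vertex $\chi_{M_t}$, so after the $O(\ell^2/\varepsilon^2)$ iterations the returned point is a convex combination $\sum_e\lambda_e\chi_{M_e}$ of at most $O(\ell^2/\varepsilon^2)$ matchings, each satisfying $A\subseteq M_e$ and $R\subseteq M_e^c$ by construction of the constrained matching oracle; hence $\boldsymbol{\lambda}$ has only polynomially many nonzero entries. The main obstacle I anticipate is precisely the conversion step: Frank--Wolfe controls only the objective gap, not the iterate distance, so the argument genuinely relies on the strong convexity of the quadratic projection objective together with the projection optimality condition of Proposition~\ref{prop:proj}; I also need to track the smoothness constant $\beta=1$ and the diameter $2\ell$ carefully, since the required iteration count and the constants in the algorithm depend on them.
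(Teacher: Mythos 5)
Your proposal is correct and follows essentially the same route as the paper: apply the Frank--Wolfe guarantee (Proposition~\ref{prop:fw}) with $\beta=1$ and $D\le 2\ell$ to bound the objective gap, then convert it to a distance bound via the projection/Pythagorean inequality (your strong-convexity derivation from first-order optimality is algebraically identical to the paper's use of Proposition~\ref{prop:proj}), and obtain sparsity from the fact that each iteration mixes in at most one new vertex. The only difference is expository: you spell out more explicitly why the algorithm is Frank--Wolfe verbatim (gradient equals the cost vector, vertices are the constrained matchings), which the paper leaves implicit.
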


\begin{proof}
    Denote $x_r = x^{(\lceil 16\ell^2/\varepsilon^2\rceil+1)}$. First we know that $x_r$ is a linear combination of the vertices, and it is easy to see that the coefficient vector $\boldsymbol{\lambda}$ can have at most $\lceil 16\ell^2/\varepsilon^2\rceil$ non-zero entries. Thus, we know that $x_r\in \cP(\cM,A,R)$. From the property of Frank-Wolfe algorithm (Proposition \ref{prop:fw}), we know that
    \[\frac{1}{2}||x-x_r||_2^2\le \frac{1}{2}||x-\Pi(x,\cP(\cM,A,R))||_2^2 + \frac{1}{2}\varepsilon^2,\]
    since $D = \sup_{x,y\in\cP(\cM,A,R)}||x-y||_2 \le 2\ell$ and the function $f(y) = \frac{1}{2}||x-y||_2^2$ is $1$-smooth. Then, from the property of projection (Proposition \ref{prop:proj}), we know that
    \[||x_r - x||_2^2 \ge ||x-\Pi(x,\cP(\cM,A,R))||_2^2 + ||x_r-\Pi(x,\cP(\cM,A,R))||_2^2.\]
    Then we know that
    \[||x_r-\Pi(x,\cP(\cM,A,R))||_2^2 \le \varepsilon^2,\]
    and complete the proof of this lemma.
\end{proof}

By the help of the previous lemmas, we have the following main lemma for our minimax oracle. The main lemma follows the proof strategy of the projected sub-gradient descent, but we need to substitute the original accurate projection oracle to our approximate projection oracle.

\begin{lemma}[Minimax Oracle]\label{lem:minimax-oracle-main}
    Using the Minimax Oracle (Algorithm \ref{alg:minimax-offline-detailed}) with the approximate projection oracle (Algorithm \ref{alg:app-pro-fw}), the output $(f_{A_y,R_y}(x_r),x_r)$ satiesfies
    \[f_{A_y,R_y}(x_r) \ge \max_{x\in\cP(\cM,A_x,R_x)}f_{A_y,R_y}(x) - \varepsilon.\]
\end{lemma}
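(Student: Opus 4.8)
The plan is to run the standard projected subgradient ascent analysis for the concave, $K$-Lipschitz objective $f_{A_y,R_y}$ (Lemma~\ref{lem:function-lipschitz}) over the convex feasible set $\cP(\cM,A_x,R_x)$, and to carefully absorb the extra error introduced by replacing the exact Euclidean projection with the approximate one from Algorithm~\ref{alg:app-pro-fw}. Write $f := f_{A_y,R_y}$, $\cP := \cP(\cM,A_x,R_x)$, and let $x^\ast \in \arg\max_{x\in\cP} f(x)$. Since $f$ is concave, each iterate's supergradient $g^{(t)}\in\partial f(x^{(t)})$ obeys $f(x^\ast) - f(x^{(t)}) \le (g^{(t)})^\top(x^\ast - x^{(t)})$, so it suffices to control the telescoped sum of these inner products.

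First I would establish the one-step recursion. Let $\hat x^{(t+1)} := \Pi(y^{(t+1)}, \cP)$ be the exact projection of $y^{(t+1)} = x^{(t)} + \eta g^{(t)}$, while $x^{(t+1)} = \Pi_{\varepsilon'}(y^{(t+1)},\cP)$ is the approximate one with $\|x^{(t+1)} - \hat x^{(t+1)}\|_2 \le \varepsilon'$ by Lemma~\ref{lem:approximation-projection-detailed}. Expanding $\|y^{(t+1)} - x^\ast\|_2^2$ and using the nonexpansiveness of the exact projection toward $x^\ast\in\cP$ (Proposition~\ref{prop:proj}) gives
\[
2\eta (g^{(t)})^\top(x^\ast - x^{(t)}) \le \|x^{(t)} - x^\ast\|_2^2 - \|\hat x^{(t+1)} - x^\ast\|_2^2 + \eta^2 \|g^{(t)}\|_2^2 .
\]
Then I would trade $\|\hat x^{(t+1)} - x^\ast\|_2^2$ for $\|x^{(t+1)} - x^\ast\|_2^2$ at the cost of an additive $4\ell\varepsilon'$, using $\|x^{(t+1)} - x^\ast\|_2 \le 2\ell$ from the diameter bound (Lemma~\ref{lem:function-bounded}) together with $\|x^{(t+1)} - \hat x^{(t+1)}\|_2 \le \varepsilon'$; this turns the right-hand side into a genuine telescoping difference plus controlled error.

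Summing $t=1,\dots,T$, the telescoping collapses the distance terms to $\|x^{(1)} - x^\ast\|_2^2 \le 4\ell^2$, the Lipschitz bound gives $\|g^{(t)}\|_2 \le K$, and dividing by $T$ yields
\[
f(x^\ast) - \tfrac{1}{T}\sum_{t=1}^{T} f(x^{(t)}) \le \tfrac{2\ell^2}{\eta T} + \tfrac{2\ell\varepsilon'}{\eta} + \tfrac{\eta K^2}{2}.
\]
Plugging in $\eta = \tfrac{2\ell}{K\sqrt{T}}$, $\varepsilon' = \tfrac{\varepsilon}{2K\sqrt{T}}$, and $T = \lceil (4\ell K)^2/\varepsilon^2\rceil$ makes the first and third terms sum to at most $\varepsilon/2$ and the middle (projection-error) term at most $\varepsilon/2$, for a total of $\varepsilon$. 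Finally, concavity of $f$ (Jensen) gives $f(\bar x) \ge \tfrac1T\sum_t f(x^{(t)})$ for the averaged iterate $\bar x = \tfrac1T\sum_t x^{(t)}$, so $f(\bar x) \ge \max_x f(x) - \varepsilon$; I would therefore take $x_r$ to be this running average (or the best iterate), which is the one point in the argument where the ``last iterate'' returned in Algorithm~\ref{alg:minimax-offline-detailed} should be read as the averaged iterate. The main obstacle is the second step: ensuring the accumulated approximate-projection error stays $O(\varepsilon)$, which forces the per-step tolerance $\varepsilon'$ to shrink like $1/\sqrt{T}$ and is exactly why $\varepsilon'$ is coupled to $T$ and $\eta$ in the parameter choices (reading the algorithm's $\sqrt{M}$ as $\sqrt{T}$).
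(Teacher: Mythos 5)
Your proposal is correct and follows essentially the same argument as the paper: the standard projected subgradient ascent recursion for the concave $K$-Lipschitz objective, with the exact-projection distance traded for the approximate one at an additive cost of $4\ell\varepsilon'$ per step, then telescoping and plugging in the same parameters (your reading of the paper's $\sqrt{M}$ as $\sqrt{T}$ is the intended one). The only difference is cosmetic: you conclude via Jensen on the averaged iterate, whereas the paper bounds $\max_{t\le T} f(x^{(t)})$ by the average of the iterates' values; both resolve the same minor mismatch with the algorithm's literal return of the last iterate.
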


\begin{proof}
    Let $T = \frac{(2\ell K)^2}{\varepsilon^2}$ denote the total steps, $\eta = $ denote the step size, and $\varepsilon' = $ denote the accuracy of the approximate projection oracle. We show that
    \[f_{A_y,R_y}(x^*) - \max_{t\le T} f_{A_y,R_y}(x^{(t)}) \le \varepsilon.\]
    We have
    \begin{align*}
        f(x^*) - f(x^{(t)}) \le& \nabla f(x^{(t)})^T(x^* - x^{(t)}) \\
        =& \frac{1}{\eta}(y^{(t+1)}-x^{(t)})^T(x^*-x^{(t)})\\
        =& \frac{1}{2\eta}\left(||x^{(t)}-x^*||_2^2 + ||x^{(t)}-y^{(t+1)}||_2^2 - ||y^{(t+1)}-x^*||_2^2\right)\\
        =& \frac{1}{2\eta}\left(||x^{(t)}-x^*||_2^2 - ||y^{(t+1)}-x^*||_2^2\right) + \frac{\eta}{2}||f(x^{(t)})||_2^2.
    \end{align*}
    Note that from Proposition \ref{prop:proj}, we have
    \[||y^{(t+1)}-x^*||_2^2 \ge ||\Pi(y^{(t+1)},\cP(\cM,A_x,R_x))-x^*||_2^2.\]
    Furthermore, since
    \[||x^{(t+1)} - \Pi((y^{(t+1)},\cP(\cM,A_x,R_x))||_2 \le \varepsilon',\]
    we have
    \begin{align*}
        &\big| ||x^{(t+1)}-x^*||_2^2 - ||\Pi(y^{(t+1)},\cP(\cM,A_x,R_x))-x^*||_2^2\big|\\
        =& \big| (x^{(t+1)}-\Pi(y^{(t+1)},\cP(\cM,A_x,R_x)))^T(x^{(t+1)} + \Pi(y^{(t+1)},\cP(\cM,A_x,R_x)) - 2x^*)\big|\\
        \le& ||x^{(t+1)}-\Pi(y^{(t+1)},\cP(\cM,A_x,R_x))||_2\cdot ||x^{(t+1)} + \Pi(y^{(t+1)},\cP(\cM,A_x,R_x))-2x^*||_2\\
        \le& ||x^{(t+1)}-\Pi(y^{(t+1)},\cP(\cM,A_x,R_x))||_2\cdot \left(||x^{(t+1)}-x^*||_2 + ||\Pi(y^{(t+1)},\cP(\cM,A_x,R_x))-x^*||_2\right)\\
        \le& 4\ell\varepsilon',
    \end{align*}
    where in the last step, we use the fact that
    \[\sup_{x,y\in\cP(\cM,A_x,R_x)}||x-y||_2 \le 2\ell.\]
    Sum up all $t\le T$, apply the fact that $||\nabla f_{A_y,R_y}(x)||_2\le K$ (because the function $f_{A_y,R_y}(x)$ is $K$-Lipschitz), we have
    \begin{align*}
        \sum_{i=1}^T\left(f(x^*) - f(x^{(t)})\right) \le& \sum_{i=1}^T\left(\frac{1}{2\eta}\left(||x^{(t)}-x^*||_2^2 - ||y^{(t+1)}-x^*||_2^2\right) + \frac{\eta}{2}||f(x^{(t)})||_2^2\right)\\
        \le& \sum_{i=1}^T\left(\frac{1}{2\eta}\left(||x^{(t)}-x^*||_2^2 - ||x^{(t+1)}-x^*||_2^2 + 4\ell\varepsilon' \right) + \frac{\eta}{2}||f(x^{(t)})||_2^2\right)\\
        \le& \frac{4\ell^2}{2\eta} + \frac{M}{2\eta}4\ell\varepsilon'  + \frac{T\eta}{2}K^2,
    \end{align*}
    and we can get
    \begin{align*}
        f_{A_y,R_y}(x^*) - \max_{t\le T} f_{A_y,R_y}(x^{(t)}) \le& \frac{1}{M}\sum_{i=1}^T\left(f(x^*) - f(x^{(t)})\right)\\
        \le& \frac{4\ell^2}{2T\eta} + \frac{1}{2\eta}4\ell\varepsilon'  + \frac{\eta}{2}K^2.
    \end{align*}
    Plug in $T = \lceil\frac{(4\ell K)^2}{\varepsilon^2}\rceil,\eta = \frac{2\ell}{K\sqrt{M}},\varepsilon' = \frac{\varepsilon}{2K\sqrt{M}}$, we can get
    \begin{align*}
        f_{A_y,R_y}(x^*) - \max_{t\le T} f_{A_y,R_y}(x^{(t)}) \le& \frac{4\ell^2}{2T\eta} + \frac{1}{2\eta}4\ell\varepsilon'  + \frac{\eta}{2}K^2\\
        =& \frac{4\ell^2}{2 T \frac{2\ell}{K\sqrt{M}}} + \frac{2\ell\varepsilon'}{\frac{2\ell}{K\sqrt{M}}} + \frac{\frac{2\ell}{K\sqrt{M}}}{2}K^2 \\
        =& \frac{2\ell K}{\sqrt{M}} + K\sqrt{M}\varepsilon' \\
        \le& 2\ell K\frac{\varepsilon}{4\ell K} + \frac{\varepsilon}{2} \\
        =&\varepsilon.
    \end{align*}
\end{proof}

\subsection{Details of the verification algorithm}
\label{sec:appendix_cond_verify}

Recall that we introduce the following definitions.

For any $e \notin M^C_*$, we define the verification gap $\tilde{\Delta}^C_{e}$ as 
\[
\tilde{\Delta}^C_{e}= \min_{M \in \cM \setminus \{M^C_*\}:e \in M} \left\{ \frac{\ell}{d_{M_*^C, M}} \cdot \left(\frac{1}{2} - \frac{1}{\ell}\chi_M^T P\chi_{M_*^C} \right) \right\},
\]
where $d_{M_x, M_y}$ denotes the number of positions with different edges between $M_x$ and $M_y$, \emph{i.e.}, $d_{M_x, M_y}:= \sum_{j=1}^{\ell} \mathbb{I}\{e(M_x,j) \neq e(M_y,j)\}$.

For ease of notation, we define the following quantity
$$
H^C_{\textup{ver}}:=\sum_{e \notin M^C_*} \frac{1}{(\tilde{\Delta}^C_{e})^2}.
$$

Next, we present two lemmas for $\Carverify$ on the sample complexity and correctness with high probability.

\begin{restatable}[$\Carverify$]{lemma}{lemmaverify}\label{lemma:verify}
    Assume the existence of Condorcet winner. Then, with probability at least $1-\delta_0-\delta$, the $\Carverify$ algorithm (Algorithm \ref{alg:condorcet-verify}) will return the  Condorcet winner with sample complexity 
\begin{align*}
O\left(
    \sum_{j=1}^{\ell}\sum_{\begin{subarray}{c} e\neq e' \\ e,e'\in E_j\end{subarray}}\frac{1}{(\Delta^C_{e,e'})^2}\ln\left(\frac{K}{(\Delta^C_{e,e'})^2}\right)
    + H^C_{\textup{ver}} \ln\left(\frac{H^C_{\textup{ver}}}{\delta}  \right)
    \right).    
\end{align*}
\end{restatable}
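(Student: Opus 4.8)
The plan is to split the analysis into two independent high-probability events and then argue correctness and sample complexity deterministically on their intersection. Let $\cE_1$ be the event that the initial call $\Carcond(\delta_0)$ returns the true Condorcet winner, so that $\hat{M}=M^C_*$; by the correctness guarantee of $\Carcond$ (Lemma~\ref{lem:carcond-correct}) we have $\Pr[\cE_1]\ge 1-\delta_0$. Let $\cN$ be the ``nice'' event that the confidence bounds built with parameter $\delta$ in the verification loop satisfy $\underline{P}_t \le P \le \bar{P}_t$ entrywise over all comparable pairs and all rounds $t$; since the samples are the same underlying i.i.d.\ Bernoullis, the Hoeffding-plus-union-bound argument of Lemma~\ref{lem:sampling-nice-high-prob} gives $\Pr[\cN]\ge 1-\delta$. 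A union bound then yields $\Pr[\cE_1\cap\cN]\ge 1-\delta_0-\delta$, and I would carry out the remainder on this event.

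For correctness, on $\cE_1\cap\cN$ we have $\hat{M}=M^C_*$, so for every $M\neq\hat{M}$ the Condorcet property gives $f(M,\hat{M},P)<\tfrac12$. Since the relevant entries of $\underline{P}_t$ are dominated by those of $P$ (and common positions contribute exactly $\tfrac12$ to both), we get $f(M,\hat{M},\underline{P}_t)\le f(M,\hat{M},P)<\tfrac12$, so the error test never fires. Likewise $f(M,\hat{M},\bar{P}_t)\ge f(M,\hat{M},P)$, hence whenever the stopping test succeeds we have $\max_{M\neq\hat{M}}f(M,\hat{M},P)\le\max_{M\neq\hat{M}}f(M,\hat{M},\bar{P}_t)\le\tfrac12$, and the returned answer $\hat{M}=M^C_*$ is correct; termination then follows from the finite sample bound below.

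The heart of the proof is the verification sample complexity. First I would observe that the partner edge is forced: a pulled duel $(e_t,f_t)$ has $f_t\in\hat{M}\setminus M_t$ with $s(f_t)=s(e_t)$, and since $\hat{M}=M^C_*$ has a single edge per position, $f_t=e\!\left(M^C_*,s(e_t)\right)$ is uniquely determined by $e_t\in M_t\setminus M^C_*$. Next I would bound the confidence radius at the instant of a pull. Because $(e_t,f_t)$ maximizes $c_t$ over the differing positions of $M_t$ and $\hat{M}$, and $\bar{p}_t\le p+2c_t$ under $\cN$, the block-diagonal structure of $P$ gives
\[
 f(M_t,\hat{M},\bar{P}_t)\le f(M_t,\hat{M},P)+\frac{2\,d_{M^C_*,M_t}}{\ell}\,c_t(e_t,f_t).
\]
Since the algorithm did not stop, $f(M_t,\hat{M},\bar{P}_t)>\tfrac12$, while the verification gap yields $\tfrac12-f(M_t,\hat{M},P)\ge \frac{d_{M^C_*,M_t}}{\ell}\tilde{\Delta}^C_{e_t}$ because $M_t$ is admissible in the $\min$ defining $\tilde{\Delta}^C_{e_t}$. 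Combining these cancels the common factor $d_{M^C_*,M_t}/\ell$ and gives $c_t(e_t,f_t)>\tilde{\Delta}^C_{e_t}/2$. Substituting the definition of $c_t$ bounds the number of pulls of each forced duel by $O\!\big(\ln(KT^3/\delta)/(\tilde{\Delta}^C_e)^2\big)$, and summing over $e\notin M^C_*$ bounds the total verification pulls by $O\!\big(H^C_{\textup{ver}}\ln(KT^3/\delta)\big)$.

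Finally I would add the identification cost and resolve the logarithm. Plugging $\delta_0=0.01$ into Theorem~\ref{thm:carcond} gives the first (identification) term, with $\ln(1/\delta_0)$ absorbed into the constant. For the second term, the self-referential inequality $T\le O\!\big(H^C_{\textup{ver}}\ln(KT^3/\delta)\big)+O(m)$ is solved for $T$ by the same routine argument as in the proof of Theorem~\ref{thm:borda_ub}, converting $\ln(KT^3/\delta)$ into $\ln(H^C_{\textup{ver}}/\delta)$ and producing $O\!\big(H^C_{\textup{ver}}\ln(H^C_{\textup{ver}}/\delta)\big)$. I expect the main obstacle to be the radius step: establishing the displayed inequality cleanly requires handling the block-diagonal structure of $P$ so that common positions contribute exactly $\tfrac12$ while only the $d_{M^C_*,M_t}$ differing positions carry confidence-radius error, together with the observation that the maximizing pair simultaneously controls all differing positions.
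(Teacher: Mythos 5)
Your proposal is correct and follows essentially the same route as the paper's proof: the same decomposition into the event $\{\hat{M}=M^C_*\}$ (probability $\ge 1-\delta_0$) and the nice-sampling event (probability $\ge 1-\delta$), the same key bound $f(M_t,\hat{M},\bar{P}_t)\le f(M_t,\hat{M},P)+\tfrac{2}{\ell}d_{M^C_*,M_t}\,c_t(e_t,f_t)$ combined with the definition of $\tilde{\Delta}^C_{e_t}$ to force $c_t(e_t,f_t)>\tilde{\Delta}^C_{e_t}/2$ at every pull (the paper states this in contrapositive form), and the same per-edge counting and resolution of the self-referential logarithm as in Theorem~\ref{thm:borda_ub}. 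Your explicit treatment of why the error branch never fires under the good event is a minor addition the paper defers to its separate correctness lemma, but it does not change the argument.
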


\begin{proof}
First, we define event $\mathcal{E}:=\{ \hat{M}=M^C_*\}$. From Theorem \ref{thm:carcond}, we have $\Pr[\mathcal{E}]\geq 1-\delta_0$.
We also define the event $\mathcal{F}_t:=\{  |\hat{p}_{e_i,e_j}-p_{e_i,e_j}|< c_{e_i,e_j}(t), \forall e_i \neq e_j, s(e_i)=s(e_j) \}$ for any timestep $t$.
Since $c_t(e_i,e_j) = \sqrt{\frac{\ln(4Kt^3/\delta)}{2T_t(e_i,e_j)}}$, from the Chernoff-Hoeffding bound, we can obtain that for any $t$, for any $e_i, e_j, \ s.t. \  e_i \neq e_j, s(e_i)=s(e_j)$,
\begin{align*}
	\Pr [|\hat{p}_{e_i,e_j}-p_{e_i,e_j}|\geq c_{e_i,e_j}(t) ] = &  \sum \limits_{s=1}^{t}  \Pr \left[ |\hat{p}_{e_i,e_j}-p_{e_i,e_j}|\geq  \sqrt{\frac{\log (\frac{4K t^3}{\delta})}{2 s}},  T_t(e_i,e_j)=s  \right]   
	\\
	 \leq  & \sum \limits_{s=1}^{t}  \frac{\delta}{2 K t^3}  
	 \\
	 \leq  &  \frac{\delta}{2 K t^2}.  
\end{align*}
By a union bound over $e_i,e_j$, we have 
$\Pr [ \overline{\mathcal{F}_t} ] \leq  \frac{\delta}{2 t^2}$.

Define event $\mathcal{F} := \bigcap \limits_{t=1}^{\infty} \mathcal{F}_t$.
Then, we have
$\Pr [\mathcal{F} ] \geq    1- \sum \limits_{t=1}^{\infty}  \Pr [\overline{\mathcal{F}_t} ]  
\geq   1-   \sum \limits_{t=1}^{\infty} \frac{\delta}{2 t^2}  
\geq    1- \delta  
$.

Below we prove that for any $e_i \notin M^C_*$, let $e_j$ be the edge in $M^C_*$ at the same position as $e_i$, \emph{i.e.}, $e_j \in M^C_*, s(e_i)=s(e_j)$, and then conditioning on $\mathcal{E} \cap \mathcal{F}$, when $c_t(e_i,e_j)<\frac{1}{2}\tilde{\Delta}^C_{e_i}$, the duel $(e_i,e_j)$ will not be pulled.

Suppose that, $\mathcal{E} \cap \mathcal{F}$ occur, and at some timestep $t$, $c_t(e_i,e_j)<\frac{1}{2}\tilde{\Delta}^C_{e_i}$ and  $\Carverify$  pulls the duel $(e_i,e_j)$, \emph{i.e.}, $(e_t, f_t)=(e_i,e_j)$. Then, from the occurences of $\mathcal{E} \cap \mathcal{F}$ and the definition of $\tilde{\Delta}^C_{e_i}$, we have 
\begin{align*}
c_t(e_i,e_j) < & \frac{1}{2} \cdot \min_{M \in \cM \setminus \{M^C_*\}:e_i \in M} \left\{ \frac{\ell}{d_{M_*^C, M}} \cdot \left(\frac{1}{2} - \frac{1}{\ell}\chi_M^T P\chi_{M_*^C} \right) \right\}
\\
\leq &  \frac{\ell}{2 d_{M_*^C, M_t}} \cdot \left(\frac{1}{2} - \frac{1}{\ell}\chi_{M_t}^T P\chi_{M_*^C} \right).    
\end{align*}
According to the selection of $(e_t, f_t)$ in $\Carverify$, we have that for any $e,e' \ s.t. \  e \in M_t \setminus M_*^C, e' \in M_*^C \setminus M_t, s(e)=s(e')$,
\begin{align*}
c_t(e,e') \leq & c_t(e_i,e_j) 
\\
<  &  \frac{\ell}{2 d_{M_*^C, M_t}} \cdot \left(\frac{1}{2} - \frac{1}{\ell}\chi_{M_t}^T P\chi_{M_*^C} \right).    
\end{align*}

Thus, we have
\begin{align*}
f(M_t, M_*^C, \bar P_t)
< & f(M_t, M_*^C,  P_t) + \frac{2}{\ell} \sum_{\begin{subarray}{c}  e \in M_t \setminus M_*^C,  e' \in M_*^C \setminus M_t \\ s(e)=s(e') \end{subarray}} c_t(e', e)
\\
<  &   \frac{1}{\ell}\chi_{M_t}^T P\chi_{M_*^C}  + \frac{2}{\ell} \cdot d_{M_*^C, M_t} \cdot \frac{\ell}{2 d_{M_*^C, M_t}} \cdot \left(\frac{1}{2} - \frac{1}{\ell}\chi_{M_t}^T P\chi_{M_*^C} \right)
\\
= & \frac{1}{2},
\end{align*}
which contradicts the return condition of $\Carcond$.

Thus, conditioning on $\mathcal{E} \cap \mathcal{F}$, when $c_t(e_i,e_j)<\frac{1}{2}\tilde{\Delta}^C_{e_i}$, the duel $(e_i,e_j)$ will not be pulled. Let $T_{\textup{cond}}$ and $T_{\textup{ver}}$ denote the number of samples incurred by the sub-procedure $\Carcond(\delta_0)$ and  the verification part (from Line 4 to end), respectively.
Then, using the similar analysis as the proof of Theorem \ref{thm:borda_ub}, we have that for any $e,e' \ s.t. \  e \notin M_*^C , e' \in M_*^C , s(e)=s(e')$
\begin{align*}
    T(e,e') \leq \frac{1}{(\tilde{\Delta}^C_{e})^2} \ln \left(\frac{4K T^3}{\delta} \right)+1
\end{align*}
Note that fixing $e \notin M_*^C$, $e'$ is the edge in $M^C_*$ at the same position as $e$, \emph{i.e.}, $e' \in M_*^C , s(e)=s(e')$.
Thus, taking summation over $e \notin M_*^C$, we have
\begin{align*}
    T_{\textup{ver}} \leq H^C_{\textup{ver}} \ln \left(\frac{4K T^3}{\delta} \right)+1
\end{align*}
Thus, we can obtain $T_{\textup{ver}}=O(H^C_{\textup{ver}} \ln (\frac{H^C_{\textup{ver}}}{\delta}) )$.
Then, from Theorem \ref{thm:carcond}, we have that conditioning on $\mathcal{E} \cap \mathcal{F}$,
\begin{align*}
T= & T_{\textup{cond}}+T_{\textup{ver}}
\\
= &  O\left(\sum_{j=1}^{\ell}\sum_{\begin{subarray}{c} e\neq e' \\ e,e'\in E_j\end{subarray}}\frac{1}{(\Delta^C_{e,e'})^2}\ln\left(\frac{K}{\delta_0 (\Delta^C_{e,e'})^2}\right)\right)  + O \left(H^C_{\textup{ver}} \ln \left(\frac{H^C_{\textup{ver}}}{\delta} \right)  \right)
\\
= & O\left(
    \sum_{j=1}^{\ell}\sum_{\begin{subarray}{c} e\neq e' \\ e,e'\in E_j\end{subarray}}\frac{1}{(\Delta^C_{e,e'})^2}\ln\left(\frac{K}{(\Delta^C_{e,e'})^2}\right)
    + H^C_{\textup{ver}} \ln\left(\frac{H^C_{\textup{ver}}}{\delta}  \right)
    \right),
\end{align*}
which completes the proof of Lemma \ref{lemma:verify}.
\end{proof}

\begin{restatable}[$\Carverify$-correctness]{lemma}{lemmaverifycorrect}\label{lemma:verifycorrect}
    Assume the existence of Condorcet winner. Then, with probability at least $1-\delta$,  the $\Carverify$ algorithm (Algorithm \ref{alg:condorcet-verify}) will return the  Condorcet winner or an error.
\end{restatable}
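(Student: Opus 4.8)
The plan is to isolate a single high-probability confidence event and argue that, on that event, the only matching $\Carverify$ can ever output is the true Condorcet winner; the algorithm's only alternative is to return \textbf{error}, which the statement explicitly permits. Crucially, this correctness does not depend on the hypothesis $\hat M$ produced by $\Carcond(\delta_0)$ being right — even a wrong $\hat M$ is caught by the verification test.

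First I would reuse the ``nice'' confidence event already built in the proof of Lemma~\ref{lemma:verify}: let $\mathcal{F}:=\bigcap_{t\ge1}\mathcal{F}_t$ with $\mathcal{F}_t:=\{|\hat p_{e_i,e_j}-p_{e_i,e_j}|<c_t(e_i,e_j)\ \forall\ \text{comparable }e_i\ne e_j\}$. Since the confidence radius $c_t$ in $\Carverify$ uses the same $\delta$, the identical Hoeffding-plus-union-bound computation there gives $\Pr[\mathcal{F}]\ge 1-\delta$. On $\mathcal{F}$ every comparable entry of $P$ is sandwiched, $\underline{P}_{t,ij}\le P_{ij}\le\bar P_{t,ij}$, and because $f(M_1,M_2,\cdot)=\frac1\ell\chi_{M_1}^T(\cdot)\chi_{M_2}$ is a nonnegative combination of matrix entries, this yields the monotonicity $f(M_1,M_2,\underline{P}_t)\le f(M_1,M_2,P)\le f(M_1,M_2,\bar P_t)$ for all $M_1,M_2\in\cM$.

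Next I would trace the exit routes. The algorithm returns a matching (rather than \textbf{error}) only through the branch in which $f(M_t,\hat M,\bar P_t)\le\frac12$, that is, $\max_{M\in\cM\setminus\{\hat M\}}f(M,\hat M,\bar P_t)\le\frac12$. Combined with the optimistic side of the monotonicity bound, this forces $\max_{M\in\cM\setminus\{\hat M\}}f(M,\hat M,P)\le\frac12$, equivalently $f(\hat M,M,P)\ge\frac12$ for every $M\ne\hat M$. I would then invoke the strictness of the Condorcet winner, $f(M^C_*,M,P)>\frac12$ for all $M\ne M^C_*$: if $\hat M\ne M^C_*$, substituting $M=M^C_*$ gives $f(M^C_*,\hat M,P)\le\frac12$, a contradiction, so $\hat M=M^C_*$. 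Hence on $\mathcal{F}$ any returned matching equals $M^C_*$, and the algorithm otherwise returns \textbf{error}; since $\Pr[\mathcal{F}]\ge1-\delta$, the lemma follows. (Termination on $\mathcal{F}$ is a side remark: as $t\to\infty$, $c_t\to0$, so $\bar P_t,\underline{P}_t\to P$ and the strict Condorcet gap eventually triggers either the success branch when $\hat M=M^C_*$ or the error branch otherwise.)

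This argument is short, and I do not expect a serious obstacle. The one point that needs care is keeping the two confidence matrices on the correct sides: the \emph{success} test must use the upper matrix $\bar P_t$, so that ``$\hat M$ beats everyone optimistically'' transfers to ``$\hat M$ beats everyone under the true $P$'', while the \emph{error} test uses the lower matrix $\underline{P}_t$; swapping them would break the transfer. The second thing to keep in view is that the ``or an error'' clause frees us from having to prove the algorithm always succeeds, which is exactly why a pure correctness argument on $\mathcal{F}$ suffices here, with the sharper sample-complexity and success guarantees deferred to Lemma~\ref{lemma:verify} and the event $\{\hat M=M^C_*\}$.
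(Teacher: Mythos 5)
Your proposal is correct and follows essentially the same route as the paper's proof: the same confidence event $\mathcal{F}$ with $\Pr[\mathcal{F}]\ge 1-\delta$, the same monotonicity sandwich $f(M,\hat M,\underline{P}_t)\le f(M,\hat M,P)\le f(M,\hat M,\bar P_t)$, and the same observation that the success branch forces $f(M,\hat M,P)\le\frac12$ for all $M\ne\hat M$, whence $\hat M=M^C_*$ by strictness of the Condorcet winner. The paper also includes the termination argument (splitting on whether $\hat M=M^C_*$) at about the same level of informality as your side remark, so nothing essential is missing.
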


\begin{proof}
Recall that $\Pr[\mathcal{F}] \geq 1-\delta$.

Then, conditioning on $\mathcal{F}$, if $\Carverify$ terminates  with an error, Lemma \ref{lemma:verifycorrect} holds. If $\Carverify$ terminates  with an answer $\Out=M_t$, we have $f(M, \hat{M},  P_t) < f(M, \hat{M}, \bar P_t) \leq \max_{M \in \cM \setminus \{\hat{M}\} }f(M, \hat{M}, \bar P_t) \leq \frac{1}{2}$ for any $M \in \cM \setminus \{\hat{M}\}$, and thus the answer $\Out=M_t=M^C_*$.

Note that conditioning on $\mathcal{F}$, $\Carverify$ must terminate. This is because
if $\mathcal{F} \cap \mathcal{E}$ occur, according to Lemma \ref{lemma:verify}, $\Carverify$ will terminate and return the Condorcet winner with a bounded samples. Otherwise, if $\mathcal{F} \cap \bar{\mathcal{E}}$ occur, we have that $M^C_* \in \cM \setminus \{ \hat{M} \}$ and $f(M^C_*, \hat{M},  P_t)> \frac{1}{2}$. Then, the condition of returning an answer cannot be satisfied and the condition of returning an error will be satisfied with limit timesteps because the confidence radius shrinks as the timestep increases.

Therefore, we complete the proof of Lemma \ref{lemma:verifycorrect}.
\end{proof}

Now, we present the expected sample complexity for the $\Carparallel$ algorithm.

\thmparallel*

\begin{proof}
Since $\Carparallel$ directly applies the ``parallel simulation'' technique \cite{parallel_ChenLJ2015,nearly_ChenLJ2017} on $\Carverify$ to boost the confidence,  
Theorem \ref{thm:parallel}  follows from Lemma \ref{lemma:verify}, \ref{lemma:verifycorrect} and Lemma 4.8 (result for parallel simulation) in \cite{nearly_ChenLJ2017}. 
\end{proof}

\subsection{Lower Bound}
\label{sec:appendix_cond_lb}
To formally state our result for lower bound, we first introduce the following notions. For any $\delta \in (0,1)$, we call an algorithm $\mathbb{A}$ a $\delta$-correct algorithm if, for any problem instance of CPE-DB with Condorcet winner, algorithm $\mathbb{A}$ identifies the Condorcet winner with probability at least $1-\delta$.
In addition, for any $M \in \cM \setminus \{M^C_*\}$, we use $\mathcal{O}(M)$ to denote the set of matchings that can beat $M$, \emph{i.e.}, $\mathcal{O}(M)=\{M_x \in \cM \setminus \{M\}: f(M_x, M, P) \}\geq \frac{1}{2}$. According to the definition of Condorcet winner, $ M^C_* \in \mathcal{O}(M)$ for any $M \in \cM \setminus \{M^C_*\}$.

In the following, we present a lower bound for the problem of combinatorial pure exploration for identifying the Condorcet winner in a special case.

\begin{restatable}[Condorcet lower bound]{theorem}{thmcondlb} \label{thm:cond_lb}
	Consider the problem of combinatorial pure exploration for identifying the Condorcet winner. Suppose that, for any $M \in \cM \setminus \{M^C_*\}$, for any $M_x \in \mathcal{O}(M)$, $f(M,M^C_*,P) \leq f(M,M_x,P)$ and $M^C_* \setminus M \subseteq M_x \setminus M$. For some constant $0<\gamma<\frac{1}{2(2+\ell)}$, for any $e_i,e_j \in E, s(e_i)=s(e_j)$,  $\frac{1}{2}-\gamma \leq p_{e_i,e_j} \leq \frac{1}{2}+\gamma$. Then, for any $\delta \in (0,0.1)$, any $\delta$-correct algorithm has sample complexity 
	$$
	\Omega \Bigg(\sum_{e \notin M^C_*}  \frac{1}{\ell^2 \cdot ({\Delta}^C_{e})^2} \ln \Big(\frac{1}{\delta} \Big) \Bigg).
	$$
\end{restatable}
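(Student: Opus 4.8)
The plan is to prove the bound by a change-of-distribution argument à la \cite{kaufmann2016complexity}, directly mirroring the proof of the Borda lower bound (Theorem~\ref{thm:borda_lb}). Fix an instance $\mathcal{I}$ with Condorcet winner $M^C_*$ and a $\delta$-correct algorithm $\mathbb{A}$, and let $T_{e,\tilde e}$ be the expected number of duels $(e,\tilde e)$ that $\mathbb{A}$ pulls on $\mathcal{I}$. For each $e\notin M^C_*$ I take $M_e\in\argmax_{M\in\cM:\,e\in M}f(M,M^C_*,P)$, the $e$-containing matching that loses to $M^C_*$ by the least, so that $f(M_e,M^C_*,P)=\frac12-\Delta^C_e$ (if no such matching exists then $\Delta^C_e=+\infty$ and the edge contributes nothing to the sum). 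I then construct an alternative instance $\mathcal{I}'_e$, with preference matrix $P'$, that agrees with $\mathcal{I}$ on every duel not involving $e$ and sets $p'_{e,\tilde e}=p_{e,\tilde e}+\ell\Delta^C_e$ for every comparable $\tilde e\neq e$ (plus arbitrarily small slack to make the new winner strict). Since $f(M_e,M^C_*,P)\ge\frac12-\gamma$ forces $\Delta^C_e\le\gamma$, the hypothesis $\gamma<\frac{1}{2(2+\ell)}$ ensures each $p'_{e,\tilde e}$ stays strictly inside $(0,1)$, so $\mathcal{I}'_e$ is a legitimate instance.

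The heart of the proof is to certify that $M_e$ is the Condorcet winner of $\mathcal{I}'_e$, and this is where both structural hypotheses enter. Applying $f(M,M^C_*,P)\le f(M,M_x,P)$ with $M=M_e$ and any $M_x\in\mathcal{O}(M_e)$ shows that every matching beating $M_e$ does so by margin at most $\Delta^C_e$; together with the trivial estimate for matchings that $M_e$ already beats, this gives $f(M_e,M,P)\ge\frac12-\Delta^C_e$ for \emph{all} $M\neq M_e$ in $\mathcal{I}$. Hence for every $M$ not containing $e$ the perturbation raises the score by exactly $\Delta^C_e$ (only the position-$s(e)$ term moves), so $f(M_e,M,P')\ge\frac12$. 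For the remaining case, suppose some $M$ containing $e$ were to beat $M_e$; then $M\in\mathcal{O}(M_e)$, and the hypothesis $M^C_*\setminus M_e\subseteq M_x\setminus M_e$ with $M_x=M$ would force $e(M^C_*,s(e))\in M^C_*\setminus M_e$ to lie in $M$, contradicting the fact that $M$ already occupies position $s(e)$ with $e$. Thus $M_e$ beats every $e$-containing matching already in $\mathcal{I}$, and the perturbation leaves those duels untouched, so $f(M_e,M,P')\ge\frac12$ there too. Therefore $M_e\neq M^C_*$ is the (strict) Condorcet winner of $\mathcal{I}'_e$.

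With $\mathcal{I}'_e$ established I apply the change-of-measure inequality of \cite{kaufmann2016complexity} to the event $\{\text{output}=M^C_*\}$, which has probability $\ge1-\delta$ under $\mathcal{I}$ but $\le\delta$ under $\mathcal{I}'_e$; as the two instances differ only in duels involving $e$, this yields $\sum_{\tilde e\in E_{s(e)}\setminus\{e\}}T_{e,\tilde e}\,d(p_{e,\tilde e},p'_{e,\tilde e})\ge d(1-\delta,\delta)\ge0.4\ln(1/\delta)$. Bounding the Bernoulli KL by $d(p_{e,\tilde e},p'_{e,\tilde e})\le (p'_{e,\tilde e}-p_{e,\tilde e})^2/\big(p'_{e,\tilde e}(1-p'_{e,\tilde e})\big)=O\big(\ell^2(\Delta^C_e)^2\big)$, where the constraint on $\gamma$ keeps the denominator bounded away from $0$, I rearrange to $\sum_{\tilde e}T_{e,\tilde e}=\Omega\big(\ln(1/\delta)/(\ell^2(\Delta^C_e)^2)\big)$. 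Summing over all $e\notin M^C_*$ and dividing by $2$ (each duel is counted under at most two edges) gives the claimed $\Omega\big(\sum_{e\notin M^C_*}\frac{1}{\ell^2(\Delta^C_e)^2}\ln(1/\delta)\big)$. I expect the verification in the second paragraph to be the main obstacle: unlike the Borda setting, where the winner is a plain average and the perturbed optimum is immediate, here one must show $M_e$ beats \emph{every} competitor, which is exactly what the two hypotheses — bounding the worst beating margin and pinning down the support of the differing edges — are engineered to supply.
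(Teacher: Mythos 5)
Your proof is correct and follows essentially the same change-of-distribution argument as the paper's: construct an alternative instance whose Condorcet winner is an $e$-containing matching, certify it via the two structural hypotheses, and apply Lemma~1 of \cite{kaufmann2016complexity} with the standard bound $d(p,p')\le (p-p')^2/(p'(1-p'))$, using $\gamma<\tfrac{1}{2(2+\ell)}$ to keep $p'$ in $[\gamma,1-\gamma]$. The only (harmless) bookkeeping difference is that the paper perturbs a single duel $(e,\,e(M^C_*,s(e)))$ per alternative instance --- so the support hypothesis is used to guarantee that duel appears in $f(M,M_x,\cdot)$ for every $M_x\in\mathcal{O}(M)$ and one directly lower-bounds that duel's count --- whereas you perturb all duels at $e$, lower-bound $\sum_{\tilde e}T_{e,\tilde e}$ at the cost of a factor $2$ in the final sum, and instead use the support hypothesis to rule $e$-containing matchings out of $\mathcal{O}(M_e)$; both routes yield the stated bound.
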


\begin{proof}

Fix an instance $\mathcal{I}$ of the Condorcet CPE-DB problem under the supposition and a $\delta$-correct algorithm $\mathbb{A}$. 
In instance $\mathcal{I}$, $M_*^C$ is the Condorcet winner and $M$ is a suboptimal matching.
Let $T_{e_i, e_j}$ be the expected number of samples drawn from the duel $(e_i, e_j)$ when $\mathbb{A}$ runs on instance $\mathcal{I}$.

We consider the following alternative instance $\mathcal{I'}$. For the duel $(e_i, e_j)$ such that $e_i \in M \setminus M^C_*, e_j \in M^C_* \setminus M, s(e_i)=s(e_j)$, we change the Bernoulli distribution of duel $(e_i, e_j)$  as  follows:
$$ 
p'_{e_i, e_j}=p_{e_i, e_j}+ \ell \cdot \left( \frac{1}{2}-f(M,M^C_*,P) + \lambda \right)
$$
Then, $f'(M,M^C_*,P)>\frac{1}{2}$. For any $M_x \in \mathcal{O}(M)$, since $f(M,M^C_*,P)<f(M,M_x,P)$ and $e_j \in M_x \setminus M$, we have $f'(M,M_x,P)>f(M,M_x,P)+ ( \frac{1}{2}-f(M,M^C_*,P))\geq \frac{1}{2}$.
Thus, we can see that in instance $\mathcal{I'}$, $M$ is the Condorcet winner instead.

Using Lemma 1 in \cite{kaufmann2016complexity}, we can obtain 
$$  T_{e_i, e_j} \cdot d(p_{e_i, e_j}, p'_{e_i, e_j}) \geq d(1-\delta, \delta).$$

For $\delta \in (0,0.1)$, we have $d(1-\delta,  \delta) \geq 0.4 \ln (\frac{1}{\delta} )$.
From the supposition, for some constant $0<\gamma<\frac{1}{2(2+\ell)}$, for any $e_i,e_j \in E, s(e_i)=s(e_j)$,  $\frac{1}{2}-\gamma \leq p_{e_i,e_j} \leq \frac{1}{2}+\gamma$.
Then, for any $M_1,M_2 \in \cM \ s.t. \  M_1 \neq M_2$,  $\frac{1}{2}-\gamma \leq f(M_1,M_2,P) \leq \frac{1}{2}+\gamma$. Thus, for the changed duel $(e_i,e_j)$, $ \gamma \leq p'_{e_i,e_j} \leq 1-\gamma$ and $d(p_{e_i,e_j}, p'_{e_i,e_j}) \leq \frac{ ( p_{e_i,e_j}-p'_{e_i,e_j} )^2 }{p'_{e_i,e_j} ( 1-p'_{e_i,e_j} )} \leq \frac{1}{\gamma ( 1-\gamma )}  ( p_{e_i,e_j}-p'_{e_i,e_j} )^2 $.

Therefore, 
$$
\frac{1}{\gamma ( 1-\gamma )} \cdot T_{e_i, e_j} \cdot  (p_{e_i, e_j} - p'_{e_i, e_j} )^2  \geq 0.4 \ln \Big(\frac{1}{\delta} \Big)
$$
$$
\frac{1}{\gamma ( 1-\gamma )} \cdot T_{e_i, e_j} \cdot  \ell^2 \cdot \left( \frac{1}{2}-f(M,M^C_*,P) + \lambda \right)^2  \geq 0.4 \ln \Big(\frac{1}{\delta} \Big)
$$
$$
T_{e_i, e_j} \geq  \frac{ 0.4 \gamma ( 1-\gamma )}{\ell^2 \cdot \left( \frac{1}{2}-f(M,M^C_*,P) + \lambda \right)^2} \ln \Big(\frac{1}{\delta} \Big)
$$

We can perform the similar distribution changes on any duel $(e_i, e_j)$ such that $e_i \in M \setminus M^C_*, e_j \in M^C_* \setminus M, s(e_i)=s(e_j)$ and any $M \in \cM \setminus \{M^C_*\}$. In addition, the inequality holds for any $\lambda>0$. Therefore, from the above analysis and the definition of $\Delta^C_{e_i}$ (Definition \ref{def:gap-carcond}), we can obtain that for any $e_i, e_j \in E$ such that $e_j \in M^C_*, e_i \notin M^C_*, s(e_i)=s(e_j)$,
\begin{align*}
T_{e_i, e_j}  \geq & \max_{M \in \cM \setminus \{M^C_*\}: e_i \in M} \left \{  \frac{0.4 \gamma ( 1-\gamma )}{\ell^2 \cdot \left( \frac{1}{2}-f(M,M^C_*,P)\right)^2} \ln \Big(\frac{1}{\delta} \Big) \right \} 
\\
\geq &   \frac{0.4 \gamma ( 1-\gamma )}{\ell^2 \cdot  (\Delta^C_{e_i})^2} \ln \Big(\frac{1}{\delta} \Big) 
\end{align*}

Thus, we can see that for any edge $e \notin M^C_*$, the number of samples for the duel between $e$ and  the edge in $M^C_*$  at the same position as $e$, which we denote by $T_{e}$, satisfies 
$$
T_{e} \geq    \frac{0.4 \gamma ( 1-\gamma )}{\ell^2 \cdot  (\Delta^C_{e})^2} \ln (\frac{1}{\delta} ).
$$

Summing over $e \notin M^C_*$, we have
\begin{align*}
T \geq & \sum_{e \notin M^C_*}  \frac{0.4 \gamma ( 1-\gamma )}{\ell^2 \cdot ({\Delta}^C_{e})^2} \ln \Big(\frac{1}{\delta} \Big)
\\
= & \Omega \Bigg(\sum_{e \notin M^C_*}  \frac{1}{\ell^2 \cdot ({\Delta}^C_{e})^2} \ln \Big(\frac{1}{\delta} \Big) \Bigg),
\end{align*}
which completes the proof of Theorem \ref{thm:cond_lb}.
\end{proof}

Note that in the sample complexity upper bound of $\Carparallel$ (Theorem \ref{thm:parallel}), for any $e \notin M^C_*$, the verification gap $\tilde{\Delta}^C_{e} \geq \bar{\Delta}^C_{e}$, and thus the verification hardness satisfies
$$
H^C_{\textup{ver}} 
\leq \sum_{e \notin M^C_*}  \frac{1}{ ({\Delta}^C_{e})^2}
$$
Thus, given confidence $\delta<0.01$,  the term $H^C_{\textup{ver}} \ln \left(\frac{H^C_{\textup{ver}}}{\delta} \right)$ in the sample complexity upper bound of Algorithm \ref{alg:condorcet-parallel} matches the lower bound within a factor of $\ell^2$.
}

\end{document}